\newcommand{\oomit}[1]{}
\DeclareMathOperator*{\rank}{rank}
  \providecommand\BibTeX{{%
    \normalfont B\kern-0.5em{\scshape i\kern-0.25em b}\kern-0.8em\TeX}}}
\begin{document}

\title{Verifying Safety of Neural Networks from Topological Perspectives}

\author{ZHEN LIANG}
\authornote{Both authors contributed equally to this research.}
\email{liangzhen@nudt.edu.cn}
\orcid{0000-0002-1171-7061}
\affiliation{%
  \institution{National University of Defense Technology}
  \streetaddress{Institute for Quantum Information \& State Key Laboratory of High Performance Computing}
  \city{Changsha}
  \state{Hunan}
  \country{China}
  \postcode{410073}
}

\author{DEJIN REN}
\authornotemark[1]
\email{rendj@ios.ac.cn}
\orcid{0000-0001-7779-0096}
\affiliation{%
 \institution{Chinese Academy of Sciences}
 \streetaddress{State Key Lab. of Computer Science}
 \city{Beijing}
 \country{China}}

\author{BAI XUE}
\email{xuebai@ios.ac.cn}
\orcid{0000-0001-9717-846X}
\affiliation{%
 \institution{Chinese Academy of Sciences}
 \streetaddress{State Key Lab. of Computer Science}
 \city{Beijing}
 \country{China}}

\author{JI WANG}
\email{wj@nudt.edu.cn}
\orcid{0000-0003-0637-8744}
\affiliation{%
  \institution{National University of Defense Technology}
  \streetaddress{Institute for Quantum Information \& State Key Laboratory of High Performance Computing}
  \city{Changsha}
  \state{Hunan}
  \country{China}
  \postcode{410073}
}

\author{WENJING YANG}
\email{wenjing.yang@nudt.edu.cn}
\orcid{0000-0002-6997-0406}
\affiliation{%
  \institution{National University of Defense Technology}
  \streetaddress{Institute for Quantum Information \& State Key Laboratory of High Performance Computing}
  \city{Changsha}
  \state{Hunan}
  \country{China}
  \postcode{410073}
}

 \author{WANWEI LIU}
\authornote{Corresponding author.}
\email{wwliu@nudt.edu.cn}
\orcid{0000-0002-2315-1704}
\affiliation{%
  \institution{National University of Defense Technology}
  \streetaddress{College of Computer Science and Technology}
  \city{Changsha}
  \state{Hunan}
  \country{China}
  \postcode{410073}}

\renewcommand{\shortauthors}{Z. Liang and D. Ren, et al.}

\begin{abstract}
Neural networks (NNs) are increasingly applied in safety-critical systems such as autonomous vehicles. However, they are fragile and are often ill-behaved. Consequently, their behaviors should undergo rigorous guarantees before deployment in practice. In this paper, we propose a set-boundary reachability method to investigate the safety verification problem of NNs from a topological perspective. Given an NN with an input set and a safe set, the safety verification problem is to determine whether all outputs of the NN resulting from the input set fall within the safe set. In our method, the homeomorphism property and the open map property of NNs are mainly exploited, which establish rigorous guarantees between the boundaries of the input set and the boundaries of the output set. The exploitation of these two properties facilitates reachability computations via extracting subsets of the input set rather than the entire input set, thus controlling the wrapping effect in reachability analysis and facilitating the reduction of computation burdens for safety verification. The  homeomorphism property exists in some widely used NNs such as invertible residual networks (i-ResNets) and Neural ordinary differential equations (Neural ODEs), and the open map is a less strict property and easier to satisfy compared with the homeomorphism property. For NNs establishing either of these properties, our set-boundary reachability method only needs to perform reachability analysis on the boundary of the input set. Moreover, for NNs that do not feature these properties with respect to the input set, we explore subsets of the input set for establishing the local homeomorphism property and then abandon these subsets for reachability computations. Finally, some examples demonstrate the performance of the proposed method.
\end{abstract}

\begin{CCSXML}
<ccs2012>
   <concept>
       <concept_id>10010147.10010178</concept_id>
       <concept_desc>Computing methodologies~Artificial intelligence</concept_desc>
       <concept_significance>500</concept_significance>
       </concept>
   <concept>
       <concept_id>10002950.10003741.10003742.10003743</concept_id>
       <concept_desc>Mathematics of computing~Point-set topology</concept_desc>
       <concept_significance>100</concept_significance>
       </concept>
    <concept>
<concept_id>10011007.10011074.10011099.10011692</concept_id>
<concept_desc>Software and its engineering~Formal software verification</concept_desc>
<concept_significance>500</concept_significance>
</concept>
 </ccs2012>
\end{CCSXML}

\ccsdesc[500]{Mathematics of computing~Continuous mathematics}
\ccsdesc[300]{Mathematics of computing~Topology}
\ccsdesc{Mathematics of computing~Point-set topology}

\ccsdesc[500]{Software and its engineering~Software creation and management}
\ccsdesc[300]{Software and its engineering~Software verification and validation
}
\ccsdesc{Software and its engineering~Formal software verification}
\ccsdesc[500]{Computing methodologies~Artificial intelligence}
\keywords{safety verification, neural networks, boundary analysis, homeomorphism, open map}

\received{XX XX 2023}
\received[revised]{XX XX 2023}
\received[accepted]{XX XX 2023}

\maketitle

\section{Introduction}
Machine learning has witnessed rapid growth due to the high amount of data produced in many industries and the increase in computation power. NNs have emerged as a leading candidate computation model for machine learning, which promotes the prosperity of artificial intelligence in various fields, such as computer vision \cite{tian2021image,dahnert2021panoptic}, natural language processing \cite{yuan2021bartscore,karch2021grounding} and so on. Recently, NNs are increasingly applied in safety-critical systems. 
Consequently, to gain users’ trust and ease their concerns, it is of vital importance to ensure that NNs are able to produce safe outputs and satisfy the essential safety requirements before the deployment. 

Safety verification of NNs, which determines whether all outputs of an NN satisfy specified safety requirements via computing output reachable sets, has attracted a huge attention from different communities such as machine learning \cite{lomuscio2017approach,akintunde2019verification}, formal methods \cite{huang2017safety,tran2019star,liuww2020article}, and security \cite{wang2018efficient,gehr2018ai2}.  Because NNs are generally large, nonlinear, and non-convex, exact computation of output reachable sets is challenging. Although there are some methods on exact reachability analysis such as  SMT-based \cite{katz2017reluplex} and polyhedron-based approaches \cite{xiang2017reachable,tran2019parallelizable}, they are usually time-consuming and do not scale well. Moreover, these methods are limited to NNs with ReLU activation functions. Consequently, over-approximate reachability analysis, which mainly involves the computation of super sets of output reachable sets, is often resorted to in practice. The over-approximate analysis is usually more efficient and can be applied to more general NNs beyond ReLU ones. Due to these advantages, an increasing attention has been attracted and thus a large amount of computational techniques have been developed for over-approximate reachability analysis \cite{liu2021algorithms}.

Overly conservative over-approximations, however, often render many
safety properties unverifiable in practice. This conservatism
mainly results from the wrapping effect, which is the accumulation of over-approximation errors through layer-by-layer propagation. As the extent of the wrapping effect correlates strongly with the size of the input set \cite{xiang2018output}, techniques that partition the input set and independently compute output reachable sets of the resulting subsets are often adopted to reduce the wrapping effect, especially for the cases of large input sets. Such partitioning may, however, produce a 
 great number of subsets, which is generally exponential in the dimensionality. This will induce extensive demand on computation time and memory, often rendering existing reachability analysis techniques not suitable for safety verification of complex NNs in real applications. Therefore, exploring subsets of the input set rather than the entire input set could help reduce computation burdens and thus accelerate the safety verification tremendously. 


In this work, we investigate the safety verification problem of NNs from the topological perspective, mainly focusing on the homeomorphism and open map properties. For one thing, we extend the set-boundary reachability method, which is originally proposed for verifying safety properties of systems modeled by ODEs in \cite{xue2016reach}, to safety verification of NNs. In \cite{xue2016reach}, the set-boundary reachability method only performs over-approximate reachability analysis on the initial set's boundary rather than the entire initial set to address safety verification problems. It was built upon the homeomorphism property of ODEs. This nice property also widely exists in NNs, and representative NNs are invertible NNs such as neural ODEs \cite{chen2018neural} and invertible residual networks \cite{behrmann2019invertible}. Consequently, it is straightforward to extend the set-boundary reachability method to safety verification of these NNs, just using the boundary of the input set for reachability analysis which does not involve reachability computations of interior points and thus reducing computation burdens in safety verification. Furthermore, we extend the set-boundary reachability method to general NNs (feedforward NNs) via exploiting the local homeomorphism property with respect to  the input set. This exploitation is instrumental for constructing a subset of the input set for reachability computations, which is gained via removing a set of points in the input set such that the NN is a homeomorphism with respect to them. The above methods of extracting subsets for performing reachability computations can also be applied to intermediate layers of NNs rather than just between the input and output layers. For another thing, since the homeomorphism property has strong constraints on the NN structures, that is, the dimensions of the input and output layers must be the same, which greatly limits the types and scale of applied NNs. Therefore, we consider the more general trapezoidal NN structures, which are characterized by non-strictly monotonically decreasing layer dimensions, and the open mapping property widely exists in these NNs. Relaxing the homeomorphism property to an open mapping does not guarantee that the input set's boundary is mapped to the output set's boundary, however, it ensures that the boundary of the output set must come from the boundary of the input set (there is a redundancy situation where the input set's boundary is mapped to the interior points of the output set). Subsequently, similar to the homeomorphism property, the safety verification can be carried out with the set boundaries.  Finally, we demonstrate the performance of the proposed method on several examples.

The main contributions of this paper are listed as follows.

\begin{itemize}
    \item We investigate the safety verification problem of NNs from the topological perspective. More concretely, we exploit the homeomorphism and the open map properties, and aim at extracting a subset of the input set rather than the entire input set for reachability computations. To the best of our knowledge, this is the first work on the utilization of the topological property to address the safety verification problems of NNs. This might on its own open research directions on digging into insightful topological properties of facilitating reachability computations for NNs. 
    
    \item  The proposed method is able to enhance the capabilities and performances of existing reachability computation methods for the safety verification of NNs via reducing computation burdens. Based on the homeomorphism and the open map properties, the computation burdens of solving the safety verification problems can be reduced for the NNs featuring these topological properties. We further show that the computation burdens can also be reduced for more general NNs by exploiting the local homeomorphism property  established on the subsets of the input set. 
\end{itemize}

The remainder of this paper is structured as follows. First, an overview of the closely relevant research is introduced in Section \ref{sec:rel}. Afterward, we formulate the safety verification problem of interest on NNs in Section \ref{Sec:pre} and then elucidate our set-boundary reachability method for addressing the safety verification problem with either the homeomorphism property or the open map property in Section \ref{Sec:method}. Following this, we demonstrate the performance of our set-boundary reachability method and compare it with existing methods on several examples in Section \ref{Sec:exp}. Finally, we  summarize the paper and discuss potential future work in Section \ref{Sec:conl}.

\section{Related Work}
\label{sec:rel}
There has been a dozen of works on safety verification of NNs. The first work on DNN verification was published in \cite{pulina2010abstraction}, which focuses on DNNs with Sigmoid activation functions via a partition-refinement approach. Later, Katz
et al. \cite{katz2017reluplex} and Ehlers \cite{ehlers2017formal} independently implemented Reluplex and Planet, two SMT solvers to verify DNNs with ReLU activation function on properties expressible with SMT constraints. 


In recent years, methods based on abstract interpretation attracts much more attention, which is to propagate sets layer by layer in a sound (i.e., over-approximate) way \cite{cousot1977abstract} and is more efficient. There are many widely used abstract domains, such as intervals \cite{wang2018efficient}, and star-sets \cite{tran2019star}. A method based on zonotope abstract domains is proposed in \cite{gehr2018ai2}, which works for any piece linear activation function with great scalability. Then, it is further improved \cite{singh2018fast} for obtaining tighter results via imposing abstract transformation on  ReLU, Tanh and Sigmoid activation functions. \cite{singh2018fast} proposed specialized abstract zonotope transformers for handling NNs with ReLU, Sigmoid and Tanh functions. \cite{singh2019abstract} proposes an abstract domain that combines floating point polyhedra with intervals to over-approximate output reachable sets. Subsequently, a spurious region guided approach is proposed to infer tighter output reachable sets \cite{yang2021improving} based on the method in \cite{singh2019abstract}. \cite{dutta2017output} abstracts an NN by a polynomial, which has the advantage that dependencies can in principle be preserved. This approach can be precise in practice for small input sets.  \cite{kochdumper2022open} completes NN verification with non-convex polynomial zonotope domains, obtaining tighter reachable sets.  Afterwards, \cite{huang2019reachnn} approximates Lipschitz-continuous neural networks with Bernstein polynomials.  \cite{ivanov2020verifying} transforms a neural network with Sigmoid activation functions into a hybrid automaton and then uses existing reachability analysis methods for the hybrid automaton to perform reachability computations. \cite{xiang2018output} proposed a maximum sensitivity based approach for  solving safety verification problems for multi-layer perceptrons with monotonic activation functions. In this approach, an exhaustive search of the input set is enabled by  discretizing input space to compute the output reachable set which consists of a union of reachtubes.




 
Neural ODEs were first introduced in 2018, which exhibit considerable computational efficiency on time-series modeling tasks \cite{chen2018neural}. Recent years have witnessed an increased use of them on real-world applications \cite{lechner2020neural,hasani2020natural}. However, the verification techniques for Neural ODEs are rare and still in infancy. The first reachability technique for Neural ODEs 
 appeared in \cite{grunbacher2021verification}, which proposed Stochastic Lagrangian reachability, an abstraction-based technique for constructing an over-approximation of the output reachable set with probabilistic guarantees. Later, this method was improved and implemented in a tool GoTube \cite{gruenbacher2021gotube}, which is able to perform reachability analysis for long time horizons. Since these methods only provide stochastic bounds on the computed over-approximation and thus cannot provide formal guarantees on the satisfaction of safety properties, \cite{lopez2022reachability} presented a deterministic verification framework for a general class of Neural ODEs with multiple continuous- and discrete-time layers. 

 Based on entire input sets, all the aforementioned works focus on developing computational techniques for reachability analysis and safety verification of appropriate NNs. In contrast, the present work shifts this focus to topological analysis of NNs and guides reachability computations on subsets of the input set rather than the entire input set, reducing computation burdens and thus increasing the power of existing safety verification methods for NNs. Although there are studies on topological properties of NNs \cite{behrmann2019invertible,dupont2019augmented,naitzat2020topology}, there is no work on the utilization of topological properties to analyze their reachability and safety verification problems, to the best of our knowledge.


\section{Preliminaries}
\label{Sec:pre}
In this section, we give an introduction on the safety verification problem of interest for NNs, homeomorphisms and open maps. Throughout this paper, given a set $\Delta$, $\Delta^{\circ}$, $\partial \Delta$ and $\overline{\Delta}$ respectively denotes its interior, boundary and the closure. 

\subsection{Neural Networks}
NNs, also known as artificial NNs, are a subset of machine learning and are at the heart of deep learning algorithms. They work by using interconnected nodes or neurons in a layered structure that resembles a human brain, and are generally composed of three layers: an input layer, hidden layers and an output layer. Mathematically, an NN is a mathematical function $\bm{N}(\cdot): \mathbb{R}^n\rightarrow \mathbb{R}^m$, where $n$ and $m$ respectively denote the dimension of the input and output of the NN. Taking a general neural network $\bm{N}$ with $l+1$ layers as an example, the  corresponding function $\bm{N}(\cdot): \mathbb{R}^n\rightarrow \mathbb{R}^m$  can be represented by the composition of the transformation of each layer, i.e.,
\begin{equation}
    \bm{N}(\bm{x}) = \bm{N}_l\circ \bm{N}_{l-1} \circ \cdots \circ \bm{N}_{2} \circ \bm{N}_{1}(\bm{x})=\sigma_{l}(\bm{W}_{l}\cdots \sigma_{2}(\bm{W}_2\sigma_1(\bm{W}_1 \bm{x} + \bm{b}_1)+\bm{b}_2)+\cdots +\bm{b}_{l}),
     \label{nn}
\end{equation}
where $\bm{W}_i$ and $\bm{b}_i$, $1 \leq i \leq l$, stand for the weight matrix and bias vector between the adjacent $i$-th and $i+1$-th layers. $\sigma_i$ represents the corresponding activation function, such as $\mathtt{Sigmoid}$, $\mathtt{Tanh}$, $\mathtt{ReLU}$ and so on. Moreover, $\sigma_i$ is an element-wise operator, and can be treated as a lifted operator
\begin{equation}
\sigma_i = \sigma_{i}^{d_i} \circ \sigma_{i}^{d_i-1} \circ \cdots \circ \sigma_{i}^{2} \circ \sigma_{i}^{1},
     \label{actfunc}
\end{equation}
 where $d_i$ is the dimension of the $i$-th layer and $\sigma_{i}^{j}$ imposes the activation function only on the $j$-th dimension of the $i$-th layer.

\subsection{Problem Statement}
Given an input set $\mathcal{X}_{in}$, the output reachable set of an NN $\bm{N}(\cdot): \mathbb{R}^n \rightarrow \mathbb{R}^m$ is stated by the following definition.

\begin{definition}
\label{safety}
For a given neural network $\bm{N}(\cdot): \mathbb{R}^n \rightarrow \mathbb{R}^m$, with an input set $\mathcal{X}_{in} \subseteq \mathbb{R}^n$, the output reachable  set $\mathcal{R}(\mathcal{X}_{in})$ is defined as
\[\mathcal{R}(\mathcal{X}_{in})=\{\bm{y}\in \mathbb{R}^m \mid \bm{y}=\bm{N}(\bm{x}), \ \bm{x}\in \mathcal{X}_{in}\}.\]
\end{definition}

The safety verification problem is formulated in Definition \ref{safety1}.
\begin{definition}[Safety Verification Problem]
\label{safety1}
Given a neural network $\bm{N}(\cdot): \mathbb{R}^n \rightarrow \mathbb{R}^m$, an input set $\mathcal{X}_{in}\subseteq \mathbb{R}^n$ which is compact, and a safe set $\mathcal{X}_s\subseteq \mathbb{R}^m$ which is simply connected, the safety verification problem is to verify that 
 \[\forall \bm{x}_0\in \mathcal{X}_{in}. \ \bm{N}(\bm{x}_0) \in \mathcal{X}_s.\]
\end{definition}

In topology, a simply connected set is a path-connected set where one can continuously shrink any simple closed curve into a point while remaining in it. The requirement that the safe set $\mathcal{X}_{s}$ is a simply  connected set is not strict, since many widely used sets such as intervals, ellipsoids, convex polyhedra and zonotopes are simply connected. 

Obviously, the safety property that $\forall \bm{x}_0\in \mathcal{X}_{in}. \  \bm{N}(\bm{x}_0) \in \mathcal{X}_s$ holds if and only if $\mathcal{R}(\mathcal{X}_{in})\subseteq \mathcal{X}_s$. However, it is challenging to compute the exact output reachable set $\mathcal{R}(\mathcal{X}_{in})$ and thus  an over-approximation $\Omega(\mathcal{X}_{in})$, which is a super set of the set $\mathcal{R}(\mathcal{X}_{in})$ (i.e., $\mathcal{R}(\mathcal{X}_{in})\subseteq \Omega(\mathcal{X}_{in})$), is commonly resorted to in existing literature for formally reasoning about the safety property. If $\Omega(\mathcal{X}_{in}) \subseteq \mathcal{X}_s$, the safety property that $\forall \bm{x}_0\in \mathcal{X}_{in}.\  \bm{N}(\bm{x}_0) \in \mathcal{X}_s$ holds.


\subsection{Homeomorphisms}
In this subsection, we will recall the definition of a homeomorphism, which is a map between spaces that preserves all topological properties.
\begin{definition}
A map $h: \mathcal{X}\rightarrow \mathcal{Y}$ with $\mathcal{X},\mathcal{Y}\subseteq \mathbb{R}^n$ is a homeomorphism with respect to $\mathcal{X}$ if it is a continuous bijection and its inverse $h^{-1}(\cdot): \mathcal{Y}\rightarrow \mathcal{X}$ is also continuous. 
\end{definition}

Homeomorphisms are continuous functions that preserve topological properties, which map boundaries to boundaries and interiors to interiors \cite{massey2019basic}, as illustrated in Fig. \ref{home}.
\begin{proposition}
 Suppose sets $\mathcal{X},\mathcal{Y}\subseteq \mathbb{R}^n$ are compact. If a map $h(\cdot): \mathcal{X}\rightarrow \mathcal{Y}$ is a homeomorphism, then  $h$ maps the boundary of the set $\mathcal{X}$ onto the boundary of the set $\mathcal{Y}$, and the interior of  the set $\mathcal{X}$ onto the interior of the set $\mathcal{Y}$.
\end{proposition}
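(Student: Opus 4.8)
The plan is to reduce everything to Brouwer's invariance of domain theorem, which states that if $U\subseteq\mathbb{R}^n$ is open and $f:U\rightarrow\mathbb{R}^n$ is continuous and injective, then $f(U)$ is open in $\mathbb{R}^n$ (see e.g. \cite{massey2019basic}). This is the essential ingredient: continuity together with bijectivity alone does \emph{not} preserve the interior/boundary distinction, but since $\mathcal{X}$ and $\mathcal{Y}$ both sit inside the \emph{same} $\mathbb{R}^n$---so that the ambient dimension is equal on the two sides---invariance of domain becomes available. Note also that the interior $\mathcal{X}^{\circ}$ and boundary $\partial\mathcal{X}$ are understood relative to $\mathbb{R}^n$, in line with the notational convention fixed at the start of this section.

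First I would prove the inclusion $h(\mathcal{X}^{\circ})\subseteq\mathcal{Y}^{\circ}$. Pick $\bm{x}\in\mathcal{X}^{\circ}$; by definition of the interior there is an open ball $U\subseteq\mathcal{X}$ containing $\bm{x}$. Since $h$ is injective on $\mathcal{X}$, its restriction $h|_{U}:U\rightarrow\mathbb{R}^n$ is a continuous injection defined on an open set, so invariance of domain gives that $h(U)$ is open in $\mathbb{R}^n$. Because $h(U)\subseteq\mathcal{Y}$, the open set $h(U)$ is contained in $\mathcal{Y}^{\circ}$, and in particular $h(\bm{x})\in\mathcal{Y}^{\circ}$. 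Next I would observe that the inverse $h^{-1}:\mathcal{Y}\rightarrow\mathcal{X}$ is itself a homeomorphism, so applying the preceding argument to $h^{-1}$ yields $h^{-1}(\mathcal{Y}^{\circ})\subseteq\mathcal{X}^{\circ}$, equivalently $\mathcal{Y}^{\circ}\subseteq h(\mathcal{X}^{\circ})$. Combining the two inclusions gives $h(\mathcal{X}^{\circ})=\mathcal{Y}^{\circ}$, that is, $h$ maps the interior of $\mathcal{X}$ onto the interior of $\mathcal{Y}$.

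Finally I would handle the boundaries by purely set-theoretic bookkeeping. Since $\mathcal{X}$ and $\mathcal{Y}$ are compact, they are closed, so $\overline{\mathcal{X}}=\mathcal{X}$ and $\overline{\mathcal{Y}}=\mathcal{Y}$, whence $\partial\mathcal{X}=\mathcal{X}\setminus\mathcal{X}^{\circ}$ and $\partial\mathcal{Y}=\mathcal{Y}\setminus\mathcal{Y}^{\circ}$. As $h$ is a bijection from $\mathcal{X}$ onto $\mathcal{Y}$ that carries $\mathcal{X}^{\circ}$ bijectively onto $\mathcal{Y}^{\circ}$ by the previous step, it must carry the complement $\mathcal{X}\setminus\mathcal{X}^{\circ}$ bijectively onto $\mathcal{Y}\setminus\mathcal{Y}^{\circ}$, i.e. $h(\partial\mathcal{X})=\partial\mathcal{Y}$.

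The main obstacle is really just invoking invariance of domain at the right moment and recognizing that it is indispensable: the claim is false for subsets of Euclidean spaces of differing dimension---for instance a homeomorphism from a closed interval in $\mathbb{R}$ onto a line segment embedded in $\mathbb{R}^2$ sends interior points to boundary points---so the proof genuinely relies on the hypothesis that $\mathcal{X}$ and $\mathcal{Y}$ live in a common $\mathbb{R}^n$. Once invariance of domain is in hand, the interior statement follows by a symmetric two-sided inclusion and the boundary statement is immediate from compactness and bijectivity.
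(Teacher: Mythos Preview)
Your proof is correct. The paper does not actually supply a proof of this proposition; it simply states the result and attributes it to \cite{massey2019basic}. So there is nothing to compare against beyond noting that your argument via Brouwer's invariance of domain is exactly the standard route to this fact, and your remark that the equal ambient dimension is essential (with the $\mathbb{R}^1$-into-$\mathbb{R}^2$ counterexample) is a nice addition that the paper omits.
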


\begin{figure}[htbp]
\centering
\subfigure[A non-homeomorphic map]{
\begin{minipage}[t]{0.32\linewidth}
\centering
\includegraphics[width=1.6in]{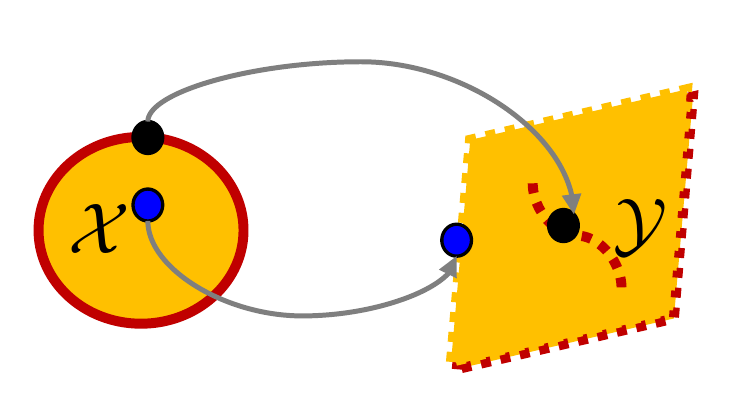}
\label{non-homeo}
\end{minipage}%
}%
\subfigure[A homeomorphic map]{
\begin{minipage}[t]{0.32\linewidth}
\centering
\includegraphics[width=1.6in]{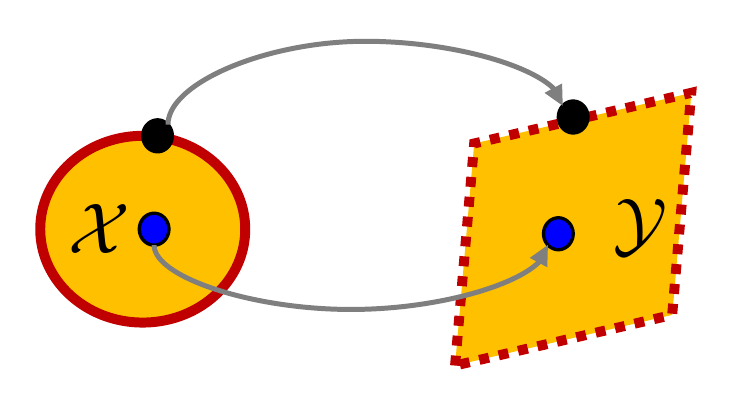}
\label{home}
\end{minipage}%
}%
\subfigure[An open map]{
\begin{minipage}[t]{0.32\linewidth}
\centering
\includegraphics[width=1.6in]{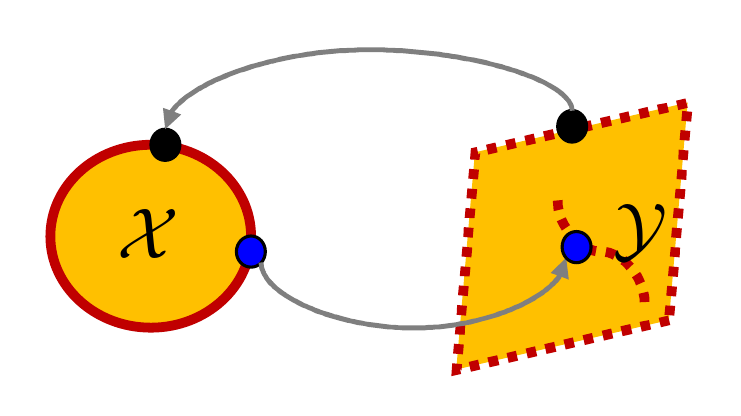}
\label{openmap}
\end{minipage}%
}%
\centering
\caption{Non-homeomorphic, homeomorphic and open maps}
\label{ill-home}

\end{figure}

Based on this property, \cite{xue2016reach} proposed a set-boundary reachability method for safety verification of ODEs, via only propagating the initial set's boundary. Later, this method was extended to a class of delay differential equations \cite{xue2020over}.

\subsection{Open maps}
Besides homeomorphisms,  open maps also provide insight into the set-boundary analysis, whose definitions are given in the following.

\begin{definition}
A map $f$ that maps open sets to open sets is an  open map \cite{stein2011princeton}, and a subset $\mathcal{O}$ of a metric space is said to be open if  $\mathcal{O}$ is a neighborhood of each of its points \cite{mendelson1990introduction}.
\end{definition}

Different from preserving all
topological properties with homeomorphisms, open maps guarantee that the preimage of the boundary of the output set lies in the boundary of the input set, as shown in Fig. \ref{openmap} and clarified in Proposition \ref{boundary analysis}. 
\begin{proposition}
 \label{boundary analysis}
Let $\mathcal{A}$ and $\mathcal{B}$ be topological spaces, $f:\mathcal{A} \rightarrow \mathcal{B}$ is an open map, $\mathcal{X} \subset \mathcal{A}$ and $\mathcal{Y} \subset \mathcal{B}$, $\mathcal{Y}=f(\mathcal{X})$, then an inverse
image $\bm{x} \in \{\bm{x}|f(\bm{x})=\bm{y}, \bm{x} \in \mathcal{X}\}$ of a boundary point $\bm{y} \in \mathcal{Y}$ is a boundary point in $\mathcal{X}$.   
\end{proposition}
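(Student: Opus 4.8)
The plan is to argue by contraposition, which turns this into an almost immediate consequence of the definition of an open map. Fix any preimage $\bm{x}\in\mathcal{X}$ with $f(\bm{x})=\bm{y}$ (one exists since $\bm{y}\in\mathcal{Y}=f(\mathcal{X})$). Rather than showing directly that a boundary point $\bm{y}$ forces $\bm{x}\in\partial\mathcal{X}$, I would establish the logically equivalent statement: if $\bm{x}$ is an interior point of $\mathcal{X}$, then $\bm{y}=f(\bm{x})$ is an interior point of $\mathcal{Y}$. This suffices because $\bm{x}\in\mathcal{X}\subseteq\overline{\mathcal{X}}=\mathcal{X}^{\circ}\cup\partial\mathcal{X}$ with $\mathcal{X}^{\circ}$ and $\partial\mathcal{X}$ disjoint, so a point of $\mathcal{X}$ that is not on $\partial\mathcal{X}$ is genuinely interior; symmetrically, $\bm{y}\in\mathcal{Y}^{\circ}$ precludes $\bm{y}\in\partial\mathcal{Y}$.

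The key step invokes the defining property of an open map exactly once. Assuming $\bm{x}\in\mathcal{X}^{\circ}$, the definition of interior provides an open set $U\subseteq\mathcal{A}$ with $\bm{x}\in U\subseteq\mathcal{X}$. Since $f$ is an open map, $f(U)$ is open in $\mathcal{B}$. I would then record the two inclusions $\bm{y}=f(\bm{x})\in f(U)$ and $f(U)\subseteq f(\mathcal{X})=\mathcal{Y}$, so that $f(U)$ is an open neighborhood of $\bm{y}$ contained entirely in $\mathcal{Y}$. That is precisely the assertion $\bm{y}\in\mathcal{Y}^{\circ}$, which completes the contrapositive and hence the proposition.

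There is no serious obstacle here; the only thing demanding care is bookkeeping about which ambient space each topological operation lives in. The interior and boundary of $\mathcal{X}$ must be taken relative to $\mathcal{A}$ and those of $\mathcal{Y}$ relative to $\mathcal{B}$, and the openness of $f(U)$ is openness in $\mathcal{B}$ — which is exactly why the inclusion $f(U)\subseteq\mathcal{Y}$ certifies $\bm{y}$ as an interior point of $\mathcal{Y}$. The one subtlety worth making explicit is the decomposition $\overline{\mathcal{X}}=\mathcal{X}^{\circ}\cup\partial\mathcal{X}$ combined with $\bm{x}\in\mathcal{X}$: without knowing $\bm{x}$ actually lies in $\mathcal{X}$, a non-boundary point could in principle be exterior rather than interior, and the contrapositive would break. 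One could equivalently phrase the argument by direct contradiction — suppose $\bm{x}$ is interior while $\bm{y}\in\partial\mathcal{Y}$, produce the open neighborhood $f(U)\subseteq\mathcal{Y}$ of $\bm{y}$, and contradict the fact that every neighborhood of a boundary point meets the complement of $\mathcal{Y}$ — but the contrapositive keeps the logic cleanest.
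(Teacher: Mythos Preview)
Your proposal is correct and matches the paper's own argument essentially line for line: the paper also assumes $\bm{x}$ is interior, produces an open neighborhood $\mathcal{O}_{\mathcal{X}}\subset\mathcal{X}$, pushes it forward to the open set $f(\mathcal{O}_{\mathcal{X}})\subset\mathcal{Y}$ containing $\bm{y}$, and derives a contradiction with $\bm{y}\in\partial\mathcal{Y}$. The only cosmetic difference is that the paper frames it as a proof by contradiction (``counterfactual method'') rather than contraposition, which you yourself noted as an equivalent phrasing.
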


In this subsection, we also supplement some necessary preliminaries about open maps for their identification in neural networks in later section.
\begin{lemma}
(Open Map Theorem.) Suppose $\mathcal{X}$ and $\mathcal{Y}$ are Banach spaces, and \ $f: \mathcal{X} \rightarrow \mathcal{Y}$ is a continuous linear transformation. If \   $f$ is surjective, then \  $f$ is an open map.
\label{open map}
\end{lemma}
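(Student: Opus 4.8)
The plan is to establish this classical result via the Baire Category Theorem together with a successive-approximation argument that crucially exploits the completeness of $\mathcal{X}$. By linearity and translation it suffices to prove a single \emph{local} statement at the origin: that the image of the open unit ball $B_{\mathcal{X}}(0,1)$ contains some open ball $B_{\mathcal{Y}}(0,\rho)$ about $0$ in $\mathcal{Y}$. Indeed, once this is known, for any open $U \subseteq \mathcal{X}$ and any $x \in U$ one may pick $\epsilon > 0$ with $x + B_{\mathcal{X}}(0,\epsilon) \subseteq U$; linearity then gives $f(x) + \epsilon\, f(B_{\mathcal{X}}(0,1)) \subseteq f(U)$, whence $B_{\mathcal{Y}}(f(x), \epsilon\rho) \subseteq f(U)$, and so $f(U)$ is open. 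This reduction is the organizing idea, and the remaining work is entirely about the local statement.

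First I would use surjectivity to write $\mathcal{Y} = \bigcup_{n \ge 1} \overline{f(B_{\mathcal{X}}(0,n))}$. Since $\mathcal{Y}$ is a complete metric space, the Baire Category Theorem forbids it from being a countable union of nowhere-dense sets, so at least one $\overline{f(B_{\mathcal{X}}(0,n))}$ has nonempty interior; rescaling by $1/n$ (legitimate by linearity) shows $\overline{f(B_{\mathcal{X}}(0,1))}$ contains an open ball $B_{\mathcal{Y}}(y_0,r)$. The next step is to recenter this ball at the origin: because $f(B_{\mathcal{X}}(0,1))$ is convex (the linear image of a convex set) and symmetric (the unit ball is symmetric), so is its closure, and a short convexity-plus-symmetry argument upgrades $B_{\mathcal{Y}}(y_0,r) \subseteq \overline{f(B_{\mathcal{X}}(0,1))}$ to $B_{\mathcal{Y}}(0,r) \subseteq \overline{f(B_{\mathcal{X}}(0,1))}$.

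The hard part, and the place where completeness of $\mathcal{X}$ is indispensable, is removing the closure: I would show $B_{\mathcal{Y}}(0,r/2) \subseteq f(B_{\mathcal{X}}(0,1))$. Given a target $y$ with $\|y\| < r/2$, I would build a sequence $x_k$ iteratively. The inclusion $B_{\mathcal{Y}}(0,r) \subseteq \overline{f(B_{\mathcal{X}}(0,1))}$ rescales to $B_{\mathcal{Y}}(0, r/2^k) \subseteq \overline{f(B_{\mathcal{X}}(0, 1/2^k))}$, which lets me choose $x_k$ with $\|x_k\| < 1/2^k$ so that the residual $\|y - \sum_{i \le k} f(x_i)\|$ stays below $r/2^{k+1}$ and hence tends to $0$. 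Since $\sum_k \|x_k\| < \sum_k 2^{-k} = 1$, completeness of $\mathcal{X}$ guarantees that $\sum_k x_k$ converges to some $x$ with $\|x\| < 1$, and continuity of $f$ yields $f(x) = y$. This places $y$ in $f(B_{\mathcal{X}}(0,1))$, establishes the local statement with $\rho = r/2$, and with it the theorem. I expect the iterative construction and the control of the residual norms to be the one genuinely delicate point; the Baire step and the recentering are comparatively routine.
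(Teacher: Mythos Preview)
Your proof is the standard and correct argument for the Open Mapping Theorem, via Baire Category followed by the geometric-series successive-approximation construction. There is nothing to compare against, however: the paper does not prove this lemma at all. It is stated as a classical result (with a reference to a functional analysis text) and then invoked as a black box in the proof of Lemma~\ref{af_open}. So you have supplied substantially more than the paper does; your sketch is sound and would complete the omitted argument.
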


\begin{lemma}\label{f_open}
Suppose $f:\mathbb{R}\rightarrow \mathbb{R}$ is a strictly monotonic and continuous map, then $f$ is an open map.
\end{lemma}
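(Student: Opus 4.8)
The plan is to reduce openness on all of $\mathbb{R}$ to openness on a basis of the topology, namely the open intervals, and then to compute the image of each such interval explicitly. Since every open subset $U\subseteq\mathbb{R}$ is a union of open intervals, $U=\bigcup_i (a_i,b_i)$, and taking images commutes with unions, $f(U)=\bigcup_i f\big((a_i,b_i)\big)$. A union of open sets is open, so it suffices to prove that $f$ sends each open interval to an open set; this is exactly the defining condition for $f$ to be an open map recorded in the preceding definition.

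First I would fix an open interval $(a,b)$, allowing $a=-\infty$ or $b=+\infty$, and without loss of generality assume $f$ is strictly increasing (the decreasing case follows by applying the argument to $-f$). Because $f$ is monotone, the one-sided limits $c:=\lim_{x\to a^+}f(x)$ and $d:=\lim_{x\to b^-}f(x)$ exist in the extended reals, and I claim $f\big((a,b)\big)=(c,d)$. The inclusion $f\big((a,b)\big)\subseteq(c,d)$ uses strictness: for any $x\in(a,b)$ one may pick $x'\in(a,x)$ and $x''\in(x,b)$, so that $c\le f(x')<f(x)<f(x'')\le d$, whence $c<f(x)<d$. The reverse inclusion $(c,d)\subseteq f\big((a,b)\big)$ is where the Intermediate Value Theorem enters: given $y\in(c,d)$, the definitions of $c$ and $d$ furnish points $x_1,x_2\in(a,b)$ with $f(x_1)<y<f(x_2)$, and continuity together with the IVT yields some $x\in(x_1,x_2)\subseteq(a,b)$ with $f(x)=y$. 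Hence the image is the open interval $(c,d)$, completing the reduction.

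The step I expect to be the main obstacle is establishing that the endpoints $c$ and $d$ are \emph{not} attained, i.e.\ that the image is genuinely open rather than half-open or closed; this is precisely the place where \emph{strict} monotonicity, as opposed to mere monotonicity, is indispensable, and some care is needed when $a$ or $b$ is infinite so that $c$ or $d$ may equal $\pm\infty$ (in which case $(c,d)$ is an unbounded but still open interval). Once the image of every basic open interval is shown to be open, assembling the general open set via the union argument above is routine.
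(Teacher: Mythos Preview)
Your argument is correct. It differs from the paper's route, however. The paper does not compute images of basic open intervals; instead it observes that strict monotonicity makes $f$ injective, restricts the codomain to $\mathcal{Y}=f(\mathbb{R})$ so that $f:\mathbb{R}\to\mathcal{Y}$ is a bijection, asserts that $f^{-1}$ is continuous (appealing to the $\varepsilon$--$\delta$ definition), and concludes that $f$ is a homeomorphism onto its image and hence open.

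Your approach is more explicit and self-contained: you actually exhibit $f\big((a,b)\big)$ as the open interval $(c,d)$, and the place where continuity enters---the Intermediate Value Theorem---is made visible. The paper's approach is terser but leaves the real work hidden in the claim that $f^{-1}$ is continuous; unpacking that claim would in effect reproduce your IVT argument (to show that values near $f(x_0)$ are attained near $x_0$). Your version also makes transparent exactly where \emph{strict} monotonicity is used (ruling out the endpoints $c,d$), whereas the paper only uses strictness to get injectivity. Either path is fine for this lemma.
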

\begin{proof}
Without loss of generality, suppose $f$ is strictly increasing map, i.e., $\forall x_1> x_2 \in \mathbb{R}, f(x_1) > f(x_2)$, then, obviously,  $f$ is injective. Let $\mathcal{Y}=f(\mathbb{R})$, then $f:\mathbb{R}\rightarrow \mathcal{Y}$ is bijective and the inverse $f^{-1}: \mathcal{Y} \rightarrow \mathbb{R}$ exists. It is easy to prove $f^{-1}$~ is continuous with the `epsilon-delta' definition of continuity \cite{rudin1976principles}, thus $f$ is an open map.
\end{proof}

\begin{lemma}
    Suppose $f_1: \mathcal{X} \rightarrow \mathcal{Y}$ and \ $f_2: \mathcal{Y} \rightarrow \mathcal{Z}$ are both open maps, then $f = f_2 \circ f_1$ is also an open map, where $\circ$ denotes the composition operator.
\label{comp}
\end{lemma}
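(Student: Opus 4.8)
The plan is to argue directly from the definition of an open map, relying on nothing more than the fact that a map is open precisely when it carries open sets to open sets. Since openness is a condition on the images of arbitrary open subsets of the domain, the result will follow by simply \emph{chaining} the two openness hypotheses through the composition.

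Concretely, I would begin by fixing an arbitrary open set $\mathcal{U} \subseteq \mathcal{X}$ and showing that its image under $f = f_2 \circ f_1$ is open in $\mathcal{Z}$. First, because $f_1$ is an open map and $\mathcal{U}$ is open in $\mathcal{X}$, the set $f_1(\mathcal{U})$ is open in $\mathcal{Y}$. Next, since $f_1(\mathcal{U})$ is now an open subset of $\mathcal{Y}$ and $f_2$ is an open map on $\mathcal{Y}$, the set $f_2(f_1(\mathcal{U}))$ is open in $\mathcal{Z}$. Finally, I would observe that $f_2(f_1(\mathcal{U})) = (f_2 \circ f_1)(\mathcal{U}) = f(\mathcal{U})$, so $f(\mathcal{U})$ is open in $\mathcal{Z}$. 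As $\mathcal{U}$ was an arbitrary open set, this establishes that $f$ maps open sets to open sets, which is exactly the definition of an open map.

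I do not anticipate any genuine obstacle, as the statement is an immediate consequence of the definition; the only point requiring care is the bookkeeping of domains and codomains. Specifically, one must confirm that $f_1(\mathcal{U})$ lands in $\mathcal{Y}$, which is precisely the domain on which $f_2$ is assumed to be open, so that the second openness hypothesis applies verbatim to $f_1(\mathcal{U})$. No appeal to the Open Map Theorem (Lemma \ref{open map}) or to continuity is needed here, since openness is invoked purely as a set-mapping property rather than through any analytic characterization.
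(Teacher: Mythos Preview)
Your argument is correct and is the standard direct proof: openness of $f_1$ sends an arbitrary open $\mathcal{U}\subseteq\mathcal{X}$ to an open $f_1(\mathcal{U})\subseteq\mathcal{Y}$, and openness of $f_2$ then sends that to an open $f_2(f_1(\mathcal{U}))=f(\mathcal{U})\subseteq\mathcal{Z}$.

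The paper's proof, however, does not actually address this lemma. What appears under Lemma~\ref{comp} is a contradiction argument showing that under an open map $f:\mathcal{X}\to\mathcal{Y}$, the preimage of a boundary point of $\mathcal{Y}$ cannot be an interior point of $\mathcal{X}$; this is precisely the content of Proposition~\ref{boundary analysis}, not of Lemma~\ref{comp}. The proof never mentions $f_1$, $f_2$, $\mathcal{Z}$, or composition at all. So your proof is not merely a different route---it is the only one of the two that actually establishes the stated lemma. The paper's text here is evidently misplaced, and your direct argument is exactly what is needed.
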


\begin{proof}
Using the counterfactual method, let $\bm{y}$ be a boundary point in $\mathcal{Y}$, $\bm{x}$ is an inverse image of $\bm{y}$. Assuming that $\bm{x}$ is an interior point of $\mathcal{X}$, then exist an open set $\mathcal{O}_{\mathcal{X}} \subset \mathcal{X}$ such that $\bm{x} \in \mathcal{O}_{\mathcal{X}}$, because $f$ is an open map, $\mathcal{O}_{\mathcal{Y}}=f(\mathcal{O}_{\mathcal{X}})$ is also an open set in $\mathcal{Y}$, $\bm{y} \in \mathcal{O}_{\mathcal{Y}} \subset \mathcal{Y}$, it infers that $\bm{y}$ is an interior point in $\mathcal{Y}$, which is contradicted that $\bm{y}$ is a boundary point in $\mathcal{Y}$.
\end{proof}

\section{Safety Verification Based on Boundary Analysis}
\label{Sec:method} In this section, we introduce our set-boundary reachability method for addressing the safety verification problem in the sense of Definition \ref{safety} from a topological perspective. We respectively in Subsection  \ref{Sec-inn} and Subsection \ref{safety-om} introduce our set-boundary reachability method for safety verification from the aspects of  homeomorphisms and open maps established in NNs.

\subsection{Safety Verification with Homeomorphisms}
In this subsection, we introduce our set-boundary reachability method with homeomorphisms. We first consider invertible NNs in Subsection \ref{Sec-inn}, and then extend the method to more general NNs in Subsection \ref{Sec-noninn}.

\subsubsection{Safety verification on invertible NNs}\label{Sec-inn}
Generally, the NNs featuring the homeomorphism property are termed invertible NNs. Invertible NNs, such as i-RevNets \cite{jacobsen2018revnet}, RevNets \cite{gomez2017reversible}, i-ResNets \cite{behrmann2019invertible} and Neural ODEs \cite{chen2018neural}, are NNs with invertibility by designed architectures, which are  extensively used in flow model and can reconstruct inputs from their outputs. These NNs are continuous bijective maps. Based on the facts that $\mathcal{X}_{in}$ is compact, they are homeomorphisms [Corollary 2.4, \cite{joshi1983introduction}]\footnote{A continuous bijection from a compact space onto a Hausdorff space is a homeomorphism. (Euclidean space and any subset of Euclidean space is Hausdorff.)}.

In the existing literature, many invertible NNs are constructed by requiring their Jacobian determinants to be non-zero \cite{ardizzone2018analyzing}. Consequently, based on the inverse function theorem \cite{krantz2002implicit}, these NNs are homeomorphisms. In the present work, we also use Jacobian determinants to justify the invertibility of some NNs. It is noteworthy that Jacobian determinants being non-zero is a sufficient but not necessary condition for homeomorphisms and the reason that we resort to this requirement lies in the simple and efficient computations of Jacobian determinants with interval arithmetic. However, this demands the differentiability of NNs. Thus, this  technique of computing Jacobian determinants to determine homeomorphisms is not applicable to NNs with ReLU activation functions.


Based on the homeomorphism property of mapping the input set's boundary onto the output reachable set's boundary, we propose a set-boundary reachability method for safety verification of invertible NNs, which just performs the over-approximate reachability analysis on the input set's boundary. Its computation procedure is presented in Algorithm \ref{alg: iNNs}. 
\begin{algorithm}[htbp]

\caption{Safety Verification Framework for Invertible NNs Based on Boundary Analysis}
\label{alg: iNNs}
\begin{algorithmic}[1] 
\REQUIRE an invertible NN $\bm{N}(\cdot): \mathbb{R}^n \rightarrow \mathbb{R}^n$, an input set $\mathcal{X}_{in}$ and a safe set $\mathcal{X}_s$.
\ENSURE  \textbf{Safe} or \textbf{Unknown}.
\STATE extract the boundary $\partial \mathcal{X}_{in}$ of the input set $\mathcal{X}_{in}$;
\STATE apply existing methods to compute an over-approximation $\Omega(\partial \mathcal{X}_{in})$;
\IF {$\Omega(\partial \mathcal{X}_{in})\subseteq \mathcal{X}_s$} 
\STATE return \textbf{Safe}
\ELSE
\STATE return \textbf{Unknown}
\ENDIF
\end{algorithmic}
\end{algorithm}

\begin{remark}
   In the second step  of Algorithm \ref{alg: iNNs}, we may take partition operator on the input set's boundary to refine the computed over-approximation for addressing the safety verification problem. The computations can be accelerated via parallel techniques.
\end{remark}

\begin{theorem}[Soundness]
\label{sound}
If Algorithm \ref{alg: iNNs} returns \textbf{\em Safe},  the safety property in the sense of Definition \ref{safety} holds.    
\end{theorem}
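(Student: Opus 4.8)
The plan is to show that Algorithm~\ref{alg: iNNs} returning \textbf{Safe} forces $\mathcal{R}(\mathcal{X}_{in}) \subseteq \mathcal{X}_s$, which is exactly the safety property of Definition~\ref{safety1} by the equivalence recorded just after that definition. Returning \textbf{Safe} means the branch condition $\Omega(\partial \mathcal{X}_{in}) \subseteq \mathcal{X}_s$ holds, so I would organize the argument in two stages: first transfer this inclusion from the computed over-approximation to the boundary $\partial \mathcal{R}(\mathcal{X}_{in})$ of the true reachable set, and then lift an inclusion about the boundary to one about the whole set, which is where the topology of $\mathcal{X}_s$ is used.

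For the first stage, $\Omega(\partial \mathcal{X}_{in})$ is by construction a super-set of $\mathcal{R}(\partial \mathcal{X}_{in}) = \bm{N}(\partial \mathcal{X}_{in})$ (Definition~\ref{safety}), so the branch condition gives $\bm{N}(\partial \mathcal{X}_{in}) \subseteq \mathcal{X}_s$. Since $\bm{N}$ is invertible and $\mathcal{X}_{in}$ is compact, the restriction $\bm{N}: \mathcal{X}_{in} \to \mathcal{R}(\mathcal{X}_{in})$ is a homeomorphism (a continuous bijection from a compact space onto a Hausdorff one), and here $n=m$, so all boundaries are taken in the same $\mathbb{R}^n$. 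The boundary-preservation Proposition, applied with $\mathcal{X}=\mathcal{X}_{in}$ and $\mathcal{Y}=\mathcal{R}(\mathcal{X}_{in})$, then yields $\bm{N}(\partial \mathcal{X}_{in}) = \partial \mathcal{R}(\mathcal{X}_{in})$, whence $\partial \mathcal{R}(\mathcal{X}_{in}) \subseteq \mathcal{X}_s$. Note that $\mathcal{R}(\mathcal{X}_{in})$ is compact, being a continuous image of a compact set.

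The crux is the second stage: from $\partial \mathcal{R}(\mathcal{X}_{in}) \subseteq \mathcal{X}_s$ and the simple connectedness of $\mathcal{X}_s$, deduce $\mathcal{R}(\mathcal{X}_{in}) \subseteq \mathcal{X}_s$. I would argue by contradiction. If some $\bm{y}^\ast \in \mathcal{R}(\mathcal{X}_{in}) \setminus \mathcal{X}_s$ existed, then $\bm{y}^\ast \in \mathcal{R}(\mathcal{X}_{in})^{\circ}$, since the boundary already lies in $\mathcal{X}_s$; thus $\bm{y}^\ast$ is enclosed by $\partial \mathcal{R}(\mathcal{X}_{in})$. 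Choosing a loop $\gamma$ on $\partial \mathcal{R}(\mathcal{X}_{in}) \subseteq \mathcal{X}_s$ that winds around $\bm{y}^\ast$, simple connectedness lets us contract $\gamma$ to a point inside $\mathcal{X}_s$; but the winding number of $\gamma$ about $\bm{y}^\ast$ is invariant under homotopies that avoid $\bm{y}^\ast$, and the contraction stays in $\mathcal{X}_s\not\ni\bm{y}^\ast$, forcing a nonzero invariant to become $0$, a contradiction. Hence $\mathcal{R}(\mathcal{X}_{in}) \subseteq \mathcal{X}_s$.

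I expect this second stage to be the real obstacle, and it is where simple connectedness is indispensable: the conclusion fails for safe sets with holes (if $\mathcal{X}_s$ were an annulus, $\partial \mathcal{R}(\mathcal{X}_{in})$ could lie in the ring while $\mathcal{R}(\mathcal{X}_{in})$ covers the central hole). The winding-number formulation is fully rigorous for a connected planar $\mathcal{X}_{in}$; for a disconnected input set one repeats it on each component, and in higher dimensions the loop must be replaced by the boundary sphere and the winding number by its topological degree, which is the delicate point needing care. For the convex safe sets arising in practice (intervals, ellipsoids, convex polyhedra, zonotopes) one can sidestep this entirely: since the extreme points of $\operatorname{conv}(\mathcal{R}(\mathcal{X}_{in}))$ lie in $\partial \mathcal{R}(\mathcal{X}_{in})$, one obtains $\mathcal{R}(\mathcal{X}_{in}) \subseteq \operatorname{conv}(\partial \mathcal{R}(\mathcal{X}_{in})) \subseteq \mathcal{X}_s$ directly, in any dimension.
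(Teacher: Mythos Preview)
Your proposal is correct and follows the same route as the paper: the paper reduces the claim to the implication $\mathcal{R}(\partial\mathcal{X}_{in})\subseteq\mathcal{X}_s\Rightarrow\mathcal{R}(\mathcal{X}_{in})\subseteq\mathcal{X}_s$ and then simply invokes Lemma~3 of \cite{xue2016reach}, whereas you unpack that topological content explicitly via the homeomorphism plus a winding-number/degree argument. Your honest flagging of the higher-dimensional subtlety (simple connectedness controls $\pi_1$, not $\pi_{n-1}$, so the degree step needs a stronger hypothesis) and your clean convex-hull shortcut for convex $\mathcal{X}_s$ are both genuine additions beyond what the paper's one-line citation provides.
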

\begin{proof}
It is equivalent to show that if $\mathcal{R}(\partial \mathcal{X}_{in})\subseteq \mathcal{X}_s$, 
\[\forall \bm{x}_0\in \mathcal{X}_{in}. \ \bm{N}(\bm{x}_0)\in \mathcal{X}_s.\]
The conclusion holds by Lemma 3 in \cite{xue2016reach}.
\end{proof}


In order to enhance the understanding of Algorithm \ref{alg: iNNs} and its benefits, we use a sample example  to illustrate it. 

\begin{figure}[htbp]
\centering
\subfigure[\textcolor{blue}{$\mathcal{R}(\mathcal{X}_{in})$}; \textcolor{red}{$\mathcal{R}(\partial \mathcal{X}_{in})$}]{
\begin{minipage}[t]{0.32\linewidth}
\centering
\includegraphics[width =1.7in, height=1.1in]{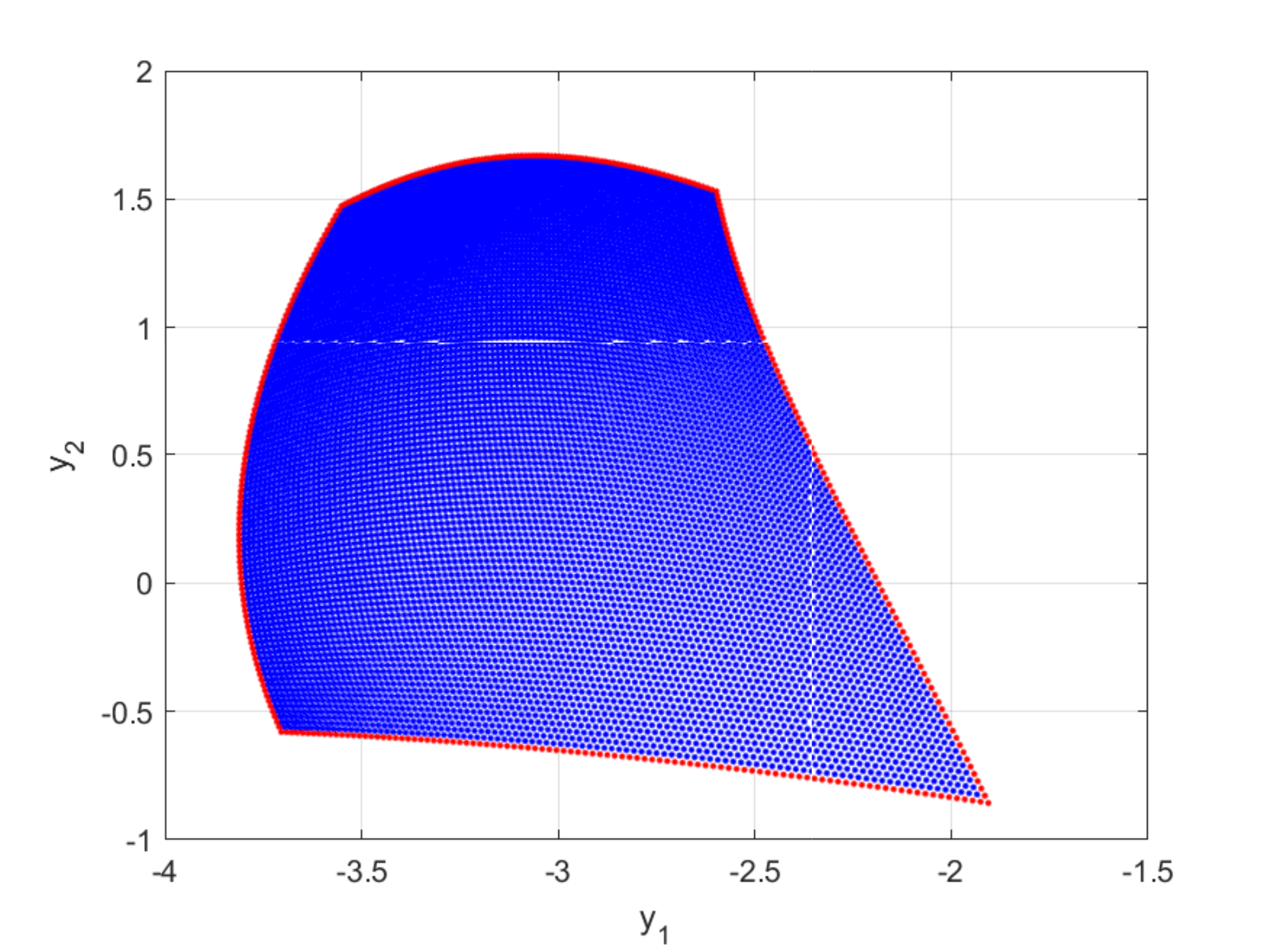}
\label{illu_eps1}
\end{minipage}%
}%
\subfigure[\textcolor{blue}{$\partial \Omega(\mathcal{X}_{in})$}; \textcolor{red}{$\Omega(\partial \mathcal{X}_{in})$}; \textcolor{green}{$\partial \mathcal{X}_s$}]{
\begin{minipage}[t]{0.32\linewidth}
\centering
\includegraphics[width =1.7in, height=1.1in]{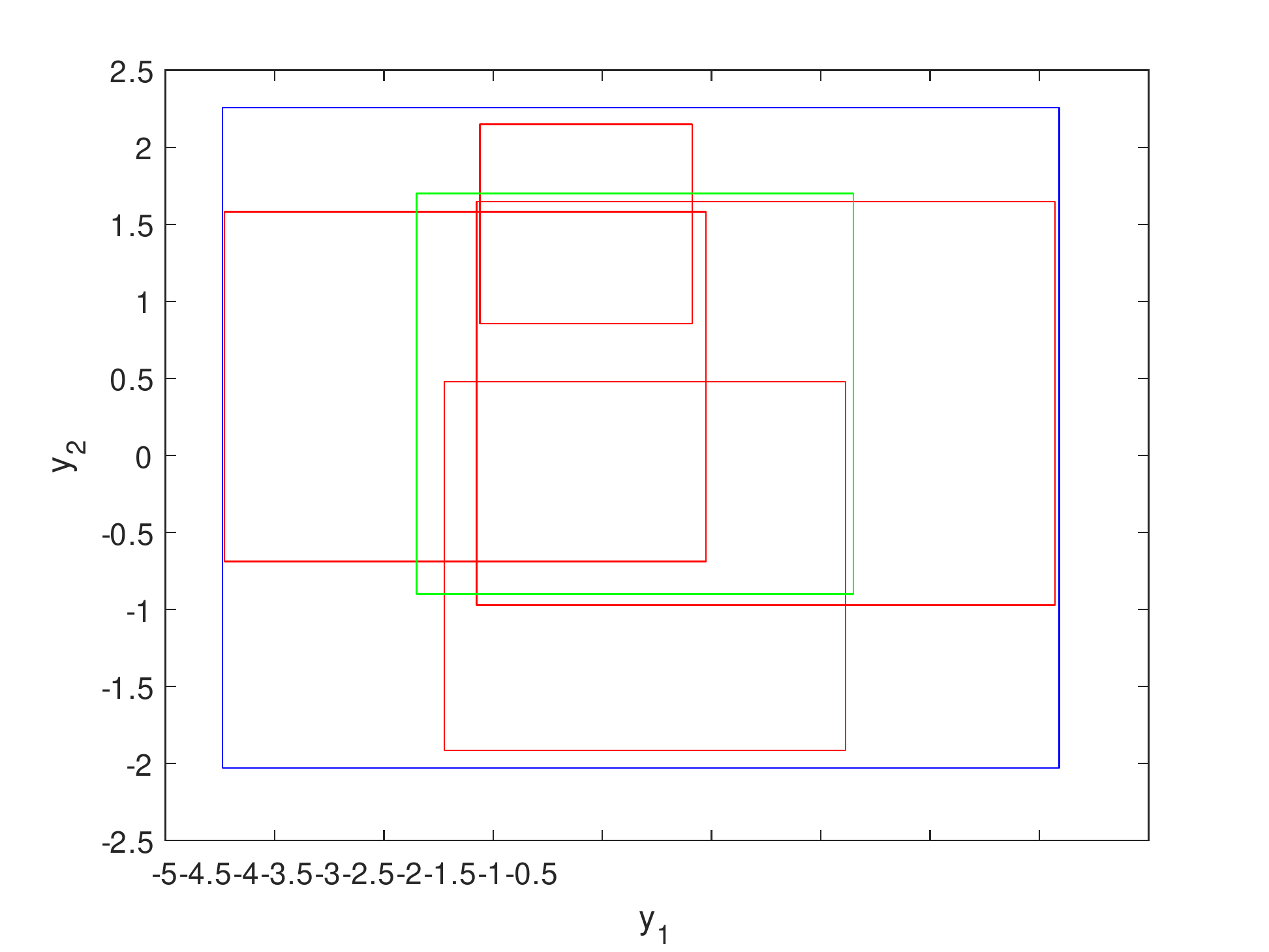}
\label{illu_eps10}
\end{minipage}%
}%
\subfigure[\textcolor{blue}{$\Omega(\mathcal{X}_{in})$}; \textcolor{red}{$\Omega(\partial \mathcal{X}_{in})$}; \textcolor{green}{$\partial \mathcal{X}_s$}]{
\begin{minipage}[t]{0.32\linewidth}
\centering
\includegraphics[width =1.7in, height=1.1in]{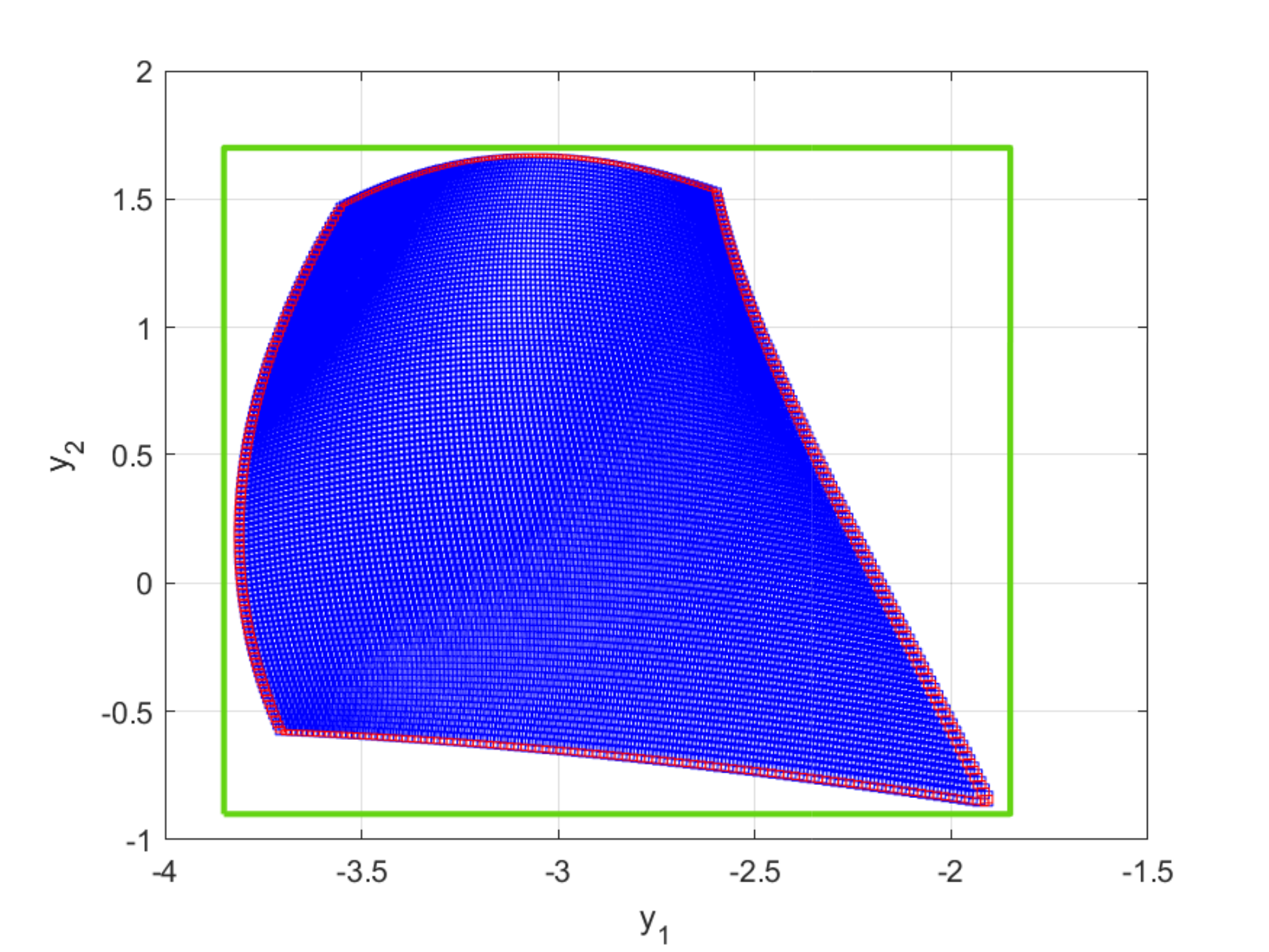}
\label{illu_eps11}
\end{minipage}%
}%
\centering
\caption{Illustrations on Example \ref{ex1}}
\label{ex1_figure}
\end{figure}

\begin{example}
\label{ex1}
Consider an {\em NN} from \cite{xiang2018output}, which has 2 inputs, 2 outputs and 1 hidden layer consisting of 5 neurons. The input set is $\mathcal{X}_{in}=[0,1]^2$. Its boundary is $\partial \mathcal{X}_{in}=\cup_{i=1}^4 \mathcal{B}_i$, where $\mathcal{B}_1=[0,0]\times [0,1]$, $\mathcal{B}_2=[1,1]\times [0,1]$, $\mathcal{B}_3=[0,1]\times [0,0]$ and $\mathcal{B}_4=[0,1]\times [1,1]$. The activation functions for the hidden layer and the output layer are $\mathtt{Tanh}$ and $\mathtt{Purelin}$ functions, respectively, whose weight matrices and bias vectors can be found in Example 1 in \cite{xiang2018output}. For this neural network, based on interval arithmetic, we can show that the determinant of the Jacobian matrix $\frac{\partial \bm{y}}{\partial \bm{x}_0}=\frac{\partial \bm{N}(\bm{x}_0)}{\partial \bm{x}_0}$ is non-zero for any $\bm{x}_0\in \mathcal{X}_{in}$. Therefore, this NN is invertible and the map $\bm{N}(\cdot):\mathcal{X}_{in} \rightarrow \mathcal{R}(\mathcal{X}_{in})$ is a homeomorphism with respect to the input set $\mathcal{X}_{in}$, leading to $\mathcal{R}(\partial \mathcal{X}_{in})=\partial \mathcal{R}(\mathcal{X}_{in})$. This statement is also verified via the visualized results in Fig. \ref{illu_eps1}. 

The homeomorphism property facilitates the reduction of the wrapping effect in over-approximate reachability analysis and thus reduces computation burdens in addressing the safety verification problem in the sense of Definition \ref{safety}. For this example, with the safe set $\mathcal{X}_s=[-3.85, -1.85]\times[-0.9,1.7]$, we first 
take the input set and its boundary for reachability computations. Based on interval arithmetic, we respectively compute over-approximations $\Omega(\mathcal{X}_{in})$ and $\Omega(\partial \mathcal{X}_{in})$, which are illustrated in Fig. \ref{illu_eps10}. Although the approximation $\Omega(\partial \mathcal{X}_{in})$ is indeed smaller than $\Omega(\mathcal{X}_{in})$, it still renders the safety property unverifiable. We next take partition operator for more accurate reachability computations. If the entire input set is used, we can successfully verify the safety property when the entire input set is divided into  $10^4$ small intervals of equal size. In contrast, our set-boundary reachability method just needs $400$ equal partitions on the input set's boundary, significantly reducing the computation burdens. The reachability results, i.e., the computation of $\Omega(\partial \mathcal{X}_{in})$, are illustrated in Fig. \ref{illu_eps11}.

\end{example}

\subsubsection{Safety Verification on non-invertible NNs}\label{Sec-noninn}

When an NN has the homeomorphism property, we can use Algorithm \ref{alg: iNNs} to address the safety verification problem in the sense of Definition \ref{safety}.  However, not all of NNs have such a nice property. In this subsection we extend the set-boundary reachability method to safety verification of non-invertible NNs, via analyzing the homeomorphism property of NNs with respect to subsets of the input set $\mathcal{X}_{in}$, named local homeomorphism property.
\begin{example}
\label{ex2}
Consider an NN from \cite{xiang2018reachability}, which has 2 inputs, 2 outputs and 1 hidden layer consisting of 7 neurons. The input set is $\mathcal{X}_{in}=[-1,1]^2$. The activation functions for the hidden layer and the output layer are $\mathtt{Tanh}$ and $\mathtt{Purelin}$ functions, respectively, whose  weight matrices and bias vectors can be found in Example 4.3 in \cite{xiang2018reachability}. For this neural network, the boundary of the output reachable set, i.e., $\partial \mathcal{R}(\mathcal{X}_{in})$, is not included in the output reachable set of the input set's boundary $\mathcal{R}(\partial \mathcal{X}_{in})$. This statement is visualized in Fig. \ref{illu_eps2}.

\end{example}

Example \ref{ex2} presents us an NN, whose mapping does not admit the homeomorphism property with respect to the input set and the output reachable set. However, the NN may feature the homeomorphism property with respect to a subset of the input set. This is illustrated in Example \ref{ex3}.
\begin{example}
    \label{ex3}
    Consider the NN in Example \ref{ex2} again. We divide the input set $\mathcal{X}_{in}$ into $4\times 10^4$ small intervals of equal size and verify whether the NN is a homeomorphism with respect to each of them based on the use of interval arithmetic to determine the determinant of the  corresponding Jacobian matrix $\frac{\partial \bm{y}}{\partial \bm{x}_0}=\frac{\partial \bm{N}(\bm{x}_0)}{\partial \bm{x}_0}$. The blue region in Fig. \ref{illu_eps3} is the set of intervals, which features the NN with the homeomorphism property. The number of these intervals is 31473. For simplicity, we denote these intervals by set $\mathcal{A}$. 

\end{example}

\begin{figure}[htbp]
\centering
\subfigure[\textcolor{blue}{$\mathcal{R}(\mathcal{X}_{in})$}; \textcolor{red}{$\mathcal{R}(\partial \mathcal{X}_{in})$}]{
\begin{minipage}[t]{0.32\linewidth}
\centering
\includegraphics[width =1.7in, height=1.1in]{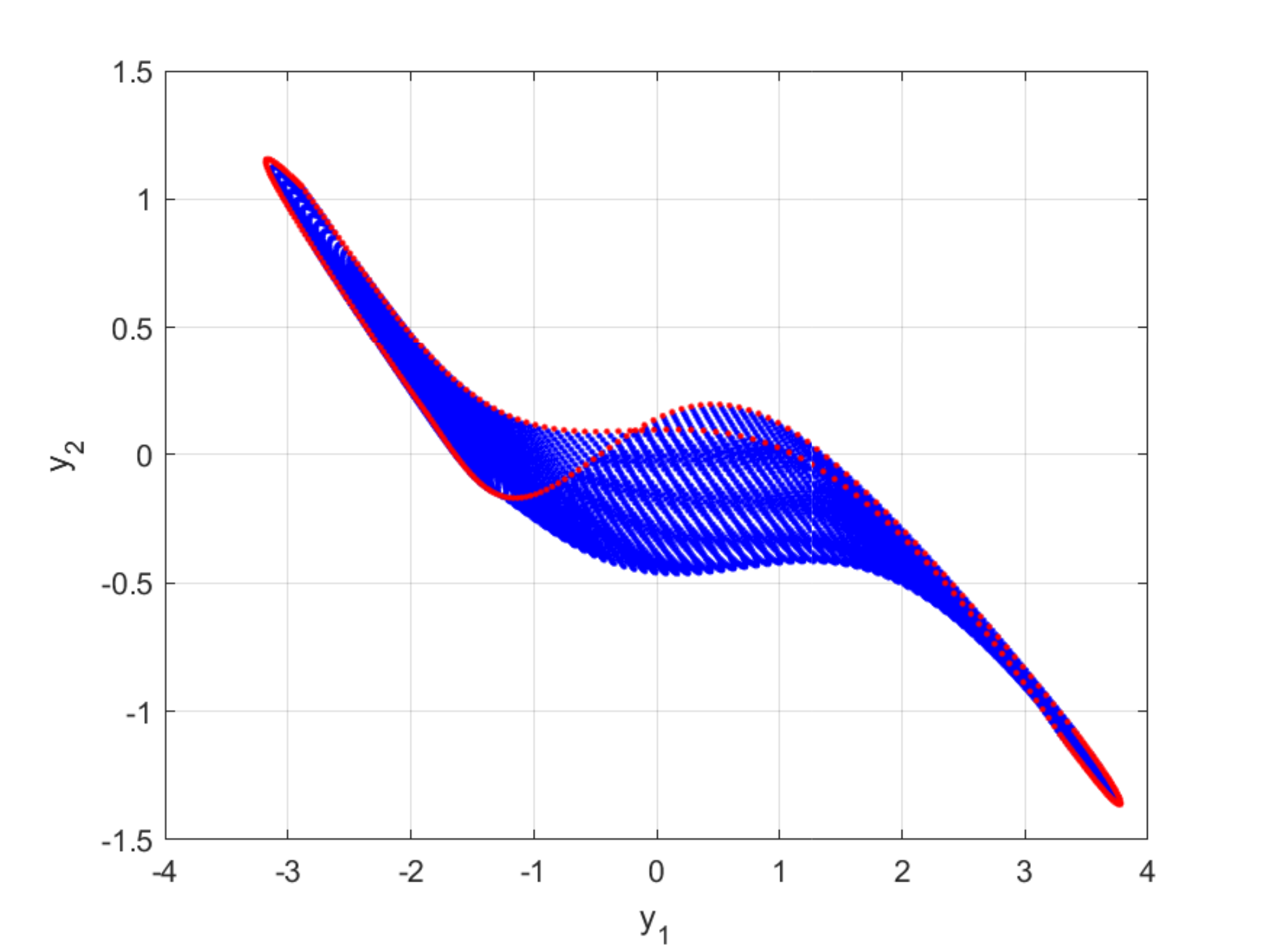}
\label{illu_eps2}
\end{minipage}%
}%
\subfigure[Set \textcolor{blue}{$\mathcal{A}$}]{
\begin{minipage}[t]{0.32\linewidth}
\centering
\includegraphics[width = 1.7in, height=1.1in]{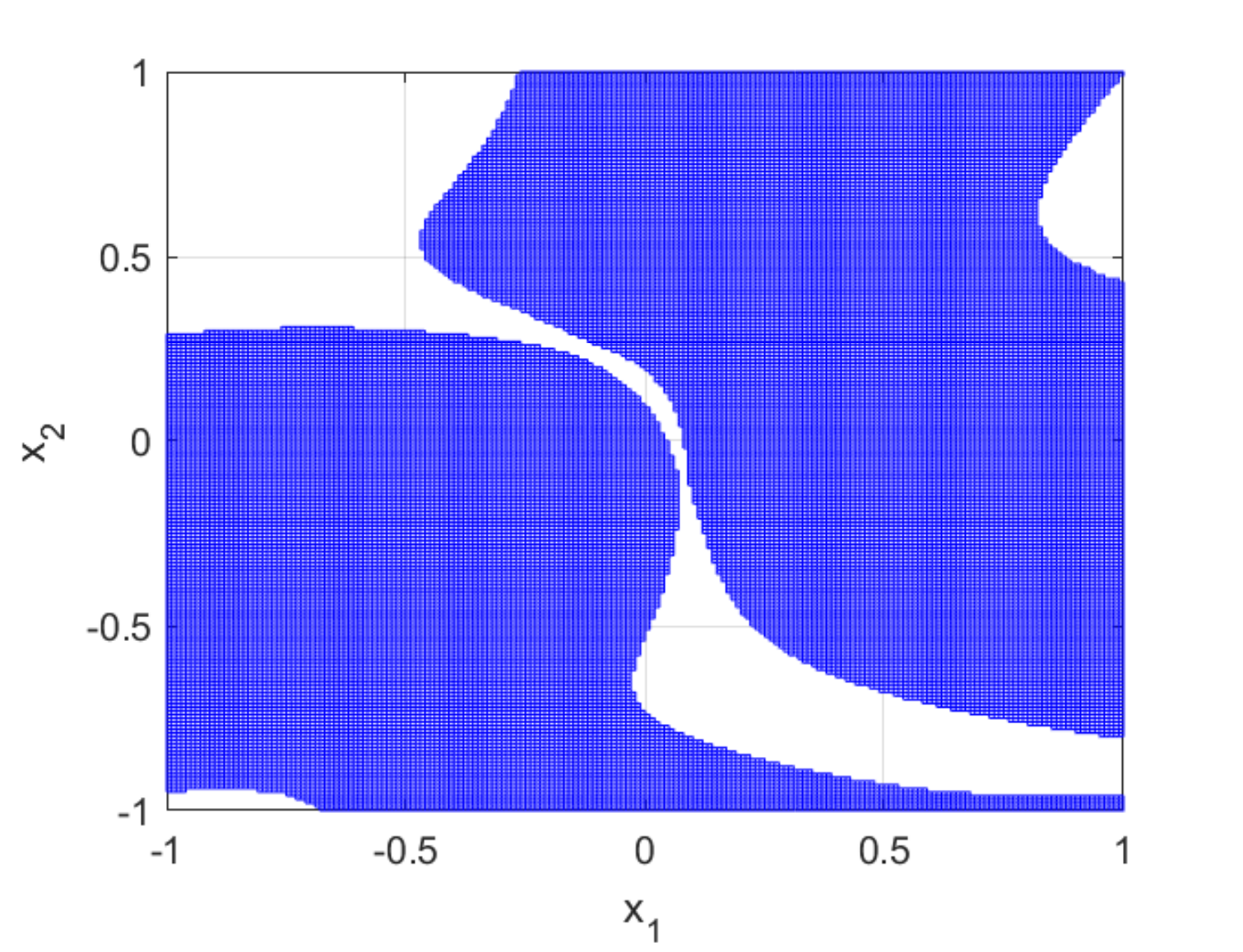}
\label{illu_eps3}
\end{minipage}%
}%
\subfigure[\textcolor{blue}{$\Omega(\mathcal{X}_{in})$}; \textcolor{red}{$\Omega(\overline{\mathcal{X}_{in}\setminus \mathcal{A}})$}]{
\begin{minipage}[t]{0.32\linewidth}
\centering
\includegraphics[width = 1.7in, height=1.1in]{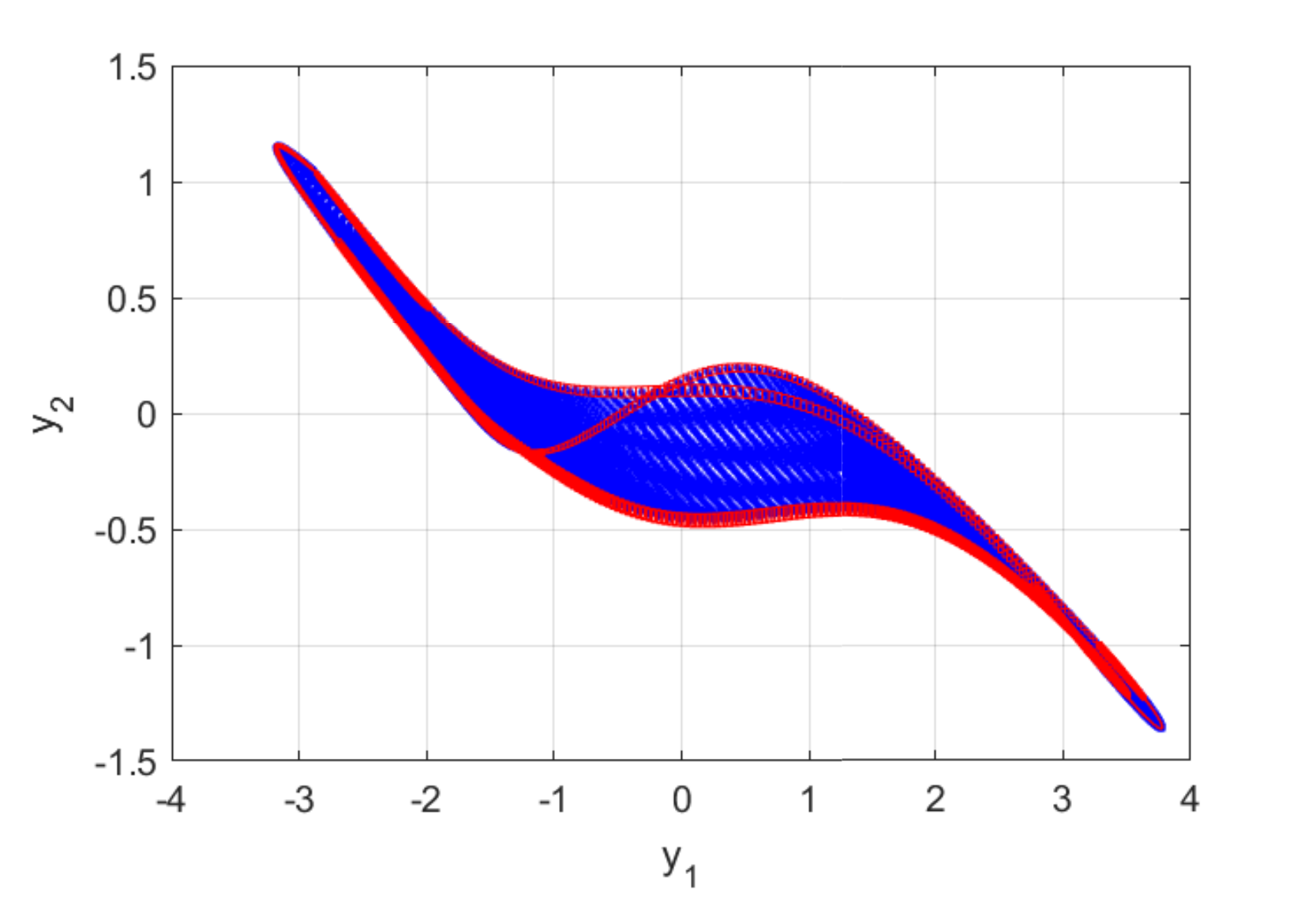}
\label{illu_eps4}
\end{minipage}%
}%
\centering
\caption{Illustrations on Example \ref{ex2}, \ref{ex3} and \ref{ex4}}
\label{ex234_figure}

\end{figure}

It is interesting to find that the safety verification in the sense of Definition \ref{safety} can be addressed by performing reachability analysis on a subset of the input set $\mathcal{X}_{in}$. This subset is obtained via removing subsets in the input set $\mathcal{X}_{in}$, which features the NN with the homeomorphism property.
\begin{theorem}
\label{wadiao}
Let $\mathcal{A}\subseteq \mathcal{X}_{in}$ and $\mathcal{A}\cap \partial \mathcal{X}_{in}=\emptyset$, and $\bm{N}(\cdot): \mathcal{A}\rightarrow \mathcal{R}(\mathcal{A})$ be a homeomorphism with respect to the input set $\mathcal{A}$. Then, if the output reachable set of the closure of the set $\mathcal{X}_{in}\setminus \mathcal{A}$ is a subset of the safe set $\mathcal{X}_s$, i.e., $\mathcal{R}(\overline{\mathcal{X}_{in}\setminus \mathcal{A}})\subseteq \mathcal{X}_{s}$, the safety property that $\forall \bm{x}_0\in \mathcal{X}_{in}. \ \bm{N}(\bm{x}_0)\in \mathcal{X}_s$
holds.
\end{theorem}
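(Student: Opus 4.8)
The plan is to reduce the statement to the soundness result already established for homeomorphisms (Theorem \ref{sound}, i.e. Lemma~3 of \cite{xue2016reach}), but applied to the subset $\mathcal{A}$ rather than to the whole input set. Since $\mathcal{R}(\cdot)$ is just the image under $\bm{N}$, it distributes over unions, so from $\mathcal{X}_{in}=\mathcal{A}\cup(\mathcal{X}_{in}\setminus\mathcal{A})$ I would write $\mathcal{R}(\mathcal{X}_{in})=\mathcal{R}(\mathcal{A})\cup\mathcal{R}(\mathcal{X}_{in}\setminus\mathcal{A})$. The second piece is controlled directly by the hypothesis, because $\mathcal{R}(\cdot)$ is monotone under set inclusion and $\mathcal{X}_{in}\setminus\mathcal{A}\subseteq\overline{\mathcal{X}_{in}\setminus\mathcal{A}}$, giving $\mathcal{R}(\mathcal{X}_{in}\setminus\mathcal{A})\subseteq\mathcal{R}(\overline{\mathcal{X}_{in}\setminus\mathcal{A}})\subseteq\mathcal{X}_s$. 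Hence the whole problem collapses to proving the single inclusion $\mathcal{R}(\mathcal{A})\subseteq\mathcal{X}_s$.

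To treat $\mathcal{R}(\mathcal{A})$, I would exploit that $\bm{N}|_{\mathcal{A}}\colon\mathcal{A}\rightarrow\mathcal{R}(\mathcal{A})$ is a homeomorphism on the (compact) set $\mathcal{A}$, so by the boundary-preservation property of homeomorphisms it carries $\partial\mathcal{A}$ onto $\partial\mathcal{R}(\mathcal{A})$. Thus $\mathcal{A}$ plays exactly the role that $\mathcal{X}_{in}$ plays in Algorithm~\ref{alg: iNNs}: if I can show $\mathcal{R}(\partial\mathcal{A})\subseteq\mathcal{X}_s$, then, since $\mathcal{X}_s$ is simply connected (Definition~\ref{safety1}), Lemma~3 of \cite{xue2016reach} immediately yields $\mathcal{R}(\mathcal{A})\subseteq\mathcal{X}_s$. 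Combining this with the bound on $\mathcal{R}(\mathcal{X}_{in}\setminus\mathcal{A})$ closes the argument via the union decomposition above.

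The crux, and the step I expect to be the main obstacle, is the purely set-topological inclusion $\partial\mathcal{A}\subseteq\overline{\mathcal{X}_{in}\setminus\mathcal{A}}$, which is precisely where the hypothesis $\mathcal{A}\cap\partial\mathcal{X}_{in}=\emptyset$ is used; once it holds, $\mathcal{R}(\partial\mathcal{A})=\bm{N}(\partial\mathcal{A})\subseteq\bm{N}(\overline{\mathcal{X}_{in}\setminus\mathcal{A}})=\mathcal{R}(\overline{\mathcal{X}_{in}\setminus\mathcal{A}})\subseteq\mathcal{X}_s$. I would prove the inclusion by picking $\bm{p}\in\partial\mathcal{A}$; since $\bm{p}\in\overline{\mathcal{A}}\subseteq\mathcal{X}_{in}$ (the latter is closed, being compact), there are two cases. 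If $\bm{p}\in\mathcal{X}_{in}^{\circ}$, then because $\bm{p}\in\overline{\mathbb{R}^n\setminus\mathcal{A}}$ I can choose $\bm{q}_k\to\bm{p}$ with $\bm{q}_k\notin\mathcal{A}$; as $\mathcal{X}_{in}^{\circ}$ is open these $\bm{q}_k$ eventually lie in $\mathcal{X}_{in}^{\circ}$, hence in $\mathcal{X}_{in}\setminus\mathcal{A}$, so $\bm{p}\in\overline{\mathcal{X}_{in}\setminus\mathcal{A}}$. If instead $\bm{p}\in\partial\mathcal{X}_{in}$, then $\bm{p}\notin\mathcal{A}$ by the disjointness hypothesis, whence $\bm{p}\in\mathcal{X}_{in}\setminus\mathcal{A}$ directly. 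The delicate point here is that $\partial\mathcal{A}$ may touch $\partial\mathcal{X}_{in}$ even when $\mathcal{A}$ itself does not (if $\mathcal{A}$ is not closed), which is exactly why the two-case split is needed; everything else is routine bookkeeping, together with the implicit use of compactness of $\mathcal{A}$ to legitimize the boundary-preservation proposition and Lemma~3.
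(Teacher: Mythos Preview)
Your proposal is correct and follows essentially the same route as the paper: decompose $\mathcal{X}_{in}$ into $\mathcal{A}$ and its complement, invoke Theorem~\ref{sound} (equivalently Lemma~3 of \cite{xue2016reach}) on the homeomorphic piece $\mathcal{A}$ to reduce to $\mathcal{R}(\partial\mathcal{A})\subseteq\mathcal{X}_s$, and then establish the set-topological inclusion $\partial\mathcal{A}\subseteq\overline{\mathcal{X}_{in}\setminus\mathcal{A}}$ from the hypothesis $\mathcal{A}\cap\partial\mathcal{X}_{in}=\emptyset$. The paper's proof is terser on this last step (it simply notes $\mathcal{A}\subseteq\mathcal{X}_{in}^{\circ}$ and asserts the inclusion), whereas your two-case split makes the argument explicit; your remark about the implicit compactness assumption on $\mathcal{A}$ is also a fair observation that the paper leaves unstated.
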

\begin{proof}
Obviously, if $\mathcal{R}(\mathcal{A})\subseteq \mathcal{X}_{s}$ and $\mathcal{R}(\overline{\mathcal{X}_{in}\setminus \mathcal{A}})\subseteq \mathcal{X}_s$, the safety property that $\forall \bm{x}_0\in \mathcal{X}_{in}.\  \bm{N}(\bm{x}_0)\in \mathcal{X}_s$
holds.

According to Theorem \ref{sound}, we have that if $\mathcal{R}(\partial \mathcal{A}) \subseteq \mathcal{X}_{s}$, 
the safety property that $\forall \bm{x}_0\in \mathcal{A}.\  \bm{N}(\bm{x}_0)\in \mathcal{X}_s$ holds. 

According to the condition that $\mathcal{A}\subseteq \mathcal{X}_{in}$ and $\mathcal{A}\cap \partial \mathcal{X}_{in}=\emptyset$, we have that $\mathcal{A} \subseteq \mathcal{X}_{in}^{\circ}$ and thus $\partial \mathcal{A}\subseteq \overline{\mathcal{X}_{in}\setminus \mathcal{A}}$. Therefore, $\mathcal{R}(\overline{\mathcal{X}_{in}\setminus \mathcal{A}})\subseteq \mathcal{X}_s$ implies that $\forall \bm{x}_0\in \mathcal{X}_{in}. \ \bm{N}(\bm{x}_0)\in \mathcal{X}_s$. The proof is completed.
\end{proof}

Theorem \ref{wadiao} tells us that it is still possible to use a subset of the input set for addressing the safety verification problem in Definition \ref{safety}, even if the given NN is not a homeomorphism with respect to the entire input set $\mathcal{X}_{in}$. This is shown in Example \ref{ex4}.
\begin{example}
    \label{ex4}
    Consider the situation in Example \ref{ex3} again. If the entire input set is used for computations, all of $4\times 10^4$ small intervals participate in calculations. However, 
    Theorem \ref{wadiao} tells us that only 9071 intervals (i.e., subset $\overline{\mathcal{X}_{in}\setminus \mathcal{A}}$) are needed, which is much smaller than $4\times 10^4$. The computation results based on interval arithmetic are illustrated in Fig. \ref{illu_eps4}. It is noting that $9071$ intervals rather than 8527 $(=4\times 10^4 -31473)$ intervals are used since some intervals, which have non-empty intersection with the boundary of the input set $\mathcal{X}_{in}$ (since Theorem \ref{wadiao} requires $\mathcal{A}\cap \partial \mathcal{X}_{in} = \emptyset$), should participate in calculations.

\end{example}

\begin{remark}
According to Theorem \ref{wadiao}, we can also observe that the boundary of the output reachable set $\mathcal{R}(\mathcal{X}_{in})$ is included in the output reachable set of the input set $\overline{\mathcal{X}_{in}\setminus \mathcal{A}}$, i.e., $\partial \mathcal{R}(\mathcal{X}_{in}) \subseteq \mathcal{R}(\overline{\mathcal{X}_{in}\setminus \mathcal{A}})$. This can also be visualized in Fig. \ref{illu_eps4}. Consequently, this observation may open new research directions of addressing various problems of NNs \cite{ghorbani2019interpretation}.  For instance, it may facilitate the generation of adversarial examples, which are inputs causing the NN to falsify the safety property, and the characterization of decision boundaries of NNs, which are a surface that separates data points belonging to different class labels.  
\end{remark}

Therefore, we arrive at an algorithm for safety verification of non-invertible NNs, which is formulated in Algorithm \ref{alg: iNNs1}.
\begin{algorithm}[tb]
\caption{Safety Verification Framework for Non-Invertible NNs}
\label{alg: iNNs1}

\begin{algorithmic}[1] 
\REQUIRE a non-invertible NN $\bm{N}(\cdot): \mathbb{R}^n \rightarrow \mathbb{R}^n$, an input set $\mathcal{X}_{in}$ and a safe set $\mathcal{X}_s$.\\
\ENSURE \textbf{Safe} or \textbf{Unknown}.
\STATE determine a subset $\mathcal{A}$ of the set $\mathcal{X}_{in}$ such that $\bm{N}(\cdot): \mathbb{R}^n \rightarrow \mathbb{R}^n$ is a homeomorphism with respect to it;
\STATE apply existing methods to compute an over-approximation $\Omega(\overline{\mathcal{X}_{in}\setminus  \mathcal{A}})$;
\IF {$\Omega(\overline{\mathcal{X}_{in}\setminus  \mathcal{A}})\subseteq \mathcal{X}_s$} 
\STATE return \textbf{Safe}
\ELSE
\STATE return \textbf{Unknown}
\ENDIF
\end{algorithmic}
\end{algorithm}

\begin{theorem}[Soundness]
    \label{sound1}
If Algorithm \ref{alg: iNNs1} returns \textbf{\em Safe},  the safety property that $\forall \bm{x}_0\in \mathcal{X}_{in}. \ \bm{N}(\bm{x}_0) \in \mathcal{X}_s$
holds.    
\end{theorem}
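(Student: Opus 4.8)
The plan is to reduce the soundness of Algorithm \ref{alg: iNNs1} directly to Theorem \ref{wadiao}, which already packages all the topological content. The only gap to bridge is the passage from the over-approximation $\Omega(\overline{\mathcal{X}_{in}\setminus\mathcal{A}})$ that the algorithm actually inspects to the exact reachable set $\mathcal{R}(\overline{\mathcal{X}_{in}\setminus\mathcal{A}})$ required by that theorem, together with checking that the set $\mathcal{A}$ produced in Step 1 satisfies the hypotheses of Theorem \ref{wadiao}.

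First I would unwind the return condition. Suppose Algorithm \ref{alg: iNNs1} returns \textbf{Safe}. By the guard of the conditional branch this means exactly that $\Omega(\overline{\mathcal{X}_{in}\setminus\mathcal{A}})\subseteq\mathcal{X}_s$. Since $\Omega(\cdot)$ is, by construction in Step 2, an over-approximation of the reachable set, we have $\mathcal{R}(\overline{\mathcal{X}_{in}\setminus\mathcal{A}})\subseteq\Omega(\overline{\mathcal{X}_{in}\setminus\mathcal{A}})$. Chaining these two inclusions yields $\mathcal{R}(\overline{\mathcal{X}_{in}\setminus\mathcal{A}})\subseteq\mathcal{X}_s$, which is precisely the hypothesis on the discarded region appearing in Theorem \ref{wadiao}. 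Step 1 simultaneously guarantees that $\mathcal{A}\subseteq\mathcal{X}_{in}$ and that $\bm{N}(\cdot):\mathcal{A}\rightarrow\mathcal{R}(\mathcal{A})$ is a homeomorphism with respect to $\mathcal{A}$, so applying Theorem \ref{wadiao} would then deliver $\forall\bm{x}_0\in\mathcal{X}_{in}.\ \bm{N}(\bm{x}_0)\in\mathcal{X}_s$, which is the desired safety property.

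The main obstacle, and the step deserving the most care, is the additional requirement $\mathcal{A}\cap\partial\mathcal{X}_{in}=\emptyset$ demanded by Theorem \ref{wadiao}: this is what forces $\partial\mathcal{A}\subseteq\overline{\mathcal{X}_{in}\setminus\mathcal{A}}$, ensuring that the reachable set of the discarded region still covers the boundary of $\mathcal{A}$ and thereby the boundary of $\mathcal{R}(\mathcal{A})$. Step 1 of the algorithm as literally stated only asks that $\bm{N}$ be a homeomorphism on $\mathcal{A}$, so I would make explicit that the chosen $\mathcal{A}$ must additionally be kept away from $\partial\mathcal{X}_{in}$. Concretely, any candidate piece of $\mathcal{A}$ meeting $\partial\mathcal{X}_{in}$ must be returned to $\overline{\mathcal{X}_{in}\setminus\mathcal{A}}$ before the over-approximation is computed, exactly the adjustment already illustrated in Example \ref{ex4} (the use of $9071$ rather than $8527$ intervals). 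Once this bookkeeping condition on $\mathcal{A}$ is read into Step 1, the soundness follows immediately from Theorem \ref{wadiao} with no further topological work, completing the proof.
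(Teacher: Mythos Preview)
Your proposal is correct and follows the same approach as the paper, which simply cites Theorem \ref{wadiao}. You are in fact more careful than the paper's one-line proof: you make explicit both the passage from $\Omega(\overline{\mathcal{X}_{in}\setminus\mathcal{A}})$ to $\mathcal{R}(\overline{\mathcal{X}_{in}\setminus\mathcal{A}})$ and the requirement $\mathcal{A}\cap\partial\mathcal{X}_{in}=\emptyset$, which the algorithm's Step~1 does not literally state but which the paper treats as implicit (cf.\ Example~\ref{ex4}).
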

\begin{proof}
This conclusion can be assured by Theorem \ref{wadiao}.
\end{proof}

\begin{remark}
    The set-boundary reachability method can also be applied to intermediate layers in a given NN, rather than just the input and output layers. Suppose that there exists a sub-NN $\bm{N}'(\cdot):\mathbb{R}^{n'}\rightarrow \mathbb{R}^{n'}$, which maps the input of the $l$-th layer to the output of the $k$-th layer, in the given NNs, and its input set is $\mathcal{X}'_{in}$ which is an over-approximation of the output reachable set of the $(l-1)$-th layer. If $\bm{N}'(\cdot): \mathbb{R}^{n'}\rightarrow \mathbb{R}^{n'}$ is a homeomorphism with respect to $\mathcal{X}'_{in}$, we can use $\partial \mathcal{X}'_{in}$ to compute an over-approximation $\Omega'(\partial \mathcal{X}'_{in})$ of the output reachable set $\{\bm{y}\mid \bm{y}=\bm{N}'(\bm{x}_0),\bm{x}_0\in \partial \mathcal{X}'_{in}\}$; otherwise, we can apply Theorem \ref{wadiao} and compute an over-approximation $\Omega'(\overline{\mathcal{X}'_{in}\setminus \mathcal{A}})$ of the output reachable set $\{\bm{y}\mid \bm{y}=\bm{N}'(\bm{x}_0),\bm{x}_0\in \overline{\mathcal{X}'_{in}\setminus \mathcal{A}}\}$. In case that the $k$-th layer is not the output layer of the NN, we need to construct a simply connected set, like convex polytope, zonotope or interval, to cover $\Omega'(\partial \mathcal{X}'_{in})$ or $\Omega'(\overline{\mathcal{X}'_{in}\setminus \mathcal{A}})$ for the subsequent layer-by-layer propagation. This set is an over-approximation of the output reachable set of the $k$-th layer, according to Lemma 1 in \cite{xue2016under}.
\end{remark}

\begin{remark}
    Any existing over-approximate  reachability methods such as interval arithmetic- \cite{wang2018efficient}, zonotopes- \cite{singh2018fast}, star sets \cite{tran2019star} based methods, which are suitable for given NNs, can be used to compute the involved over-approximations, i.e., $\Omega(\partial \mathcal{X}_{in})$ and $\Omega(\overline{\mathcal{X}_{in}\setminus \mathcal{A}})$, in Algorithm \ref{alg: iNNs} and \ref{alg: iNNs1}.
\end{remark}

\subsection{Safety Verification with Open Maps}
\label{safety-om}
Actually, the homeomorphism property imposes a strict limitation on NNs, i.e., the same input and output layer dimensions.  Just take the MNIST dataset with handwritten digital images of 28$\times$28 pixels as an example, fully-connected neural networks for the classification task are generally with 784 inputs and 10 outputs and they violate the basic requirement for satisfying homeomorphisms. Subsequently, in this subsection, we relax homeomorphisms into open maps, which are easier to meet in NNs while keeping the necessary properties for safety verification  via set-boundary analysis. More concretely, open maps ensure that the boundaries of the output set come from the boundary of the input set, and the redundancy situation that mapping the input set's boundary to the output set's boundary has no impact on the safety verification, declared in the following.

First, we give the identification method of open maps in NNs. On one hand, for the affine transformations involved in NNs, the conditions required for open maps are declared as follows.

\begin{lemma}
Suppose $f: \mathbb{R}^{m} \rightarrow \mathbb{R}^{n}$, $m\geq n$ and $f=\bm{W}\bm{x}+\bm{b}$, if \  $\bm{W}$ is full rank, then $f$ is an open map.
\label{af_open}
\end{lemma}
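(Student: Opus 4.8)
The plan is to decompose the affine map into its linear and translation parts and handle each with tools already available in this subsection. Write $f = \tau_{\bm{b}} \circ L$, where $L(\bm{x}) = \bm{W}\bm{x}$ is the linear part and $\tau_{\bm{b}}(\bm{y}) = \bm{y} + \bm{b}$ denotes translation by $\bm{b}$. Since each of these two maps is open, Lemma \ref{comp} then immediately yields that the composition $f$ is open.

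First I would argue that $L$ is an open map via the Open Map Theorem (Lemma \ref{open map}). Both $\mathbb{R}^m$ and $\mathbb{R}^n$ are finite-dimensional and hence Banach spaces, and $L$ is a continuous linear transformation, since every linear map between finite-dimensional normed spaces is continuous. It then remains only to verify surjectivity: because $m \geq n$ and $\bm{W}$ is full rank, its rank equals $\min(m,n) = n$, so the column space of $\bm{W}$ is all of $\mathbb{R}^n$, i.e. $L$ maps $\mathbb{R}^m$ onto $\mathbb{R}^n$. The Open Map Theorem then guarantees that $L$ is open.

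Next, the translation $\tau_{\bm{b}}$ is a homeomorphism of $\mathbb{R}^n$, its inverse being $\tau_{-\bm{b}}$, which is also continuous; in particular it carries open sets to open sets and is therefore an open map. Applying Lemma \ref{comp} to the two open maps $L$ and $\tau_{\bm{b}}$ concludes that $f = \tau_{\bm{b}} \circ L$ is open.

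I do not anticipate a serious obstacle here; the only points requiring care are translating the linear-algebra notion of ``full rank'' for an $n \times m$ matrix with $m \geq n$ into genuine surjectivity of $L$ onto $\mathbb{R}^n$, and confirming the hypotheses of the Open Map Theorem (finite-dimensional spaces are Banach, finite-dimensional linear maps are continuous). An alternative that avoids invoking the Open Map Theorem would be an elementary direct argument: restrict $L$ to a complement of $\ker \bm{W}$, on which $L$ is a linear bijection onto $\mathbb{R}^n$ and hence a homeomorphism, and observe that the orthogonal projection of $\mathbb{R}^m$ onto this complement is itself open; composing these facts shows $L$ is open. However, routing the argument through Lemma \ref{open map} is cleaner given that this result is stated precisely for this purpose.
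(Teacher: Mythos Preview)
Your proposal is correct and follows essentially the same approach as the paper: decompose the affine map into its linear part $L(\bm{x})=\bm{W}\bm{x}$ and a translation, argue that $L$ is surjective (since $\rank(\bm{W})=n$) and hence open by the Open Map Theorem, and then pass to the affine map. The paper simply asserts the last step as ``obvious,'' whereas you spell it out via Lemma~\ref{comp} and the fact that translation is a homeomorphism, but the substance is the same.
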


\begin{proof}
 Considering the linear map $f_1: \mathbb{R}^{m} \rightarrow \mathbb{R}^{n}$, and  $f_1 = \bm{Wx}$, for any $\bm{y} \in \mathbb{R}^{n}$,  the equation  $\bm{y}=\bm{W}\bm{x}$ with $\rank(\bm{W})=n$  exists a solution $\bm{x} \in \mathbb{R}^{m}$. Thus,  $f_1$ is a  linear map from $\mathbb{R}^{m}$ onto $\mathbb{R}^{n}$. Meanwhile, it is a bounded  (continuous) linear map. With the open map theorem (Lemma \ref{open map}), $f_1$ is an open map. Further, obviously, the mapping $f$ is an open map.
\end{proof}

On the other hand, it comes to the activation function utilized in NNs. Herein, We restrict the activation function to be strictly monotonic and continuous. It is not a strict requirement that most of the activation functions satisfy it, just like $\mathtt{Sigmoid}$, $\mathtt{Tanh}$ and  the variants $\mathtt{LeakyReLU}$ \cite{maas2013rectifier} and $\mathtt{ELU}$ (Exponential Linear Unit) \cite{clevert2015fast}. Lemma \ref{f_open} and \ref{acf_open}  reveal the reason why we impose the \emph{strictly monotonic and continuous} constraint on them.

\begin{lemma}
Strictly monotonic and continuous activation functions are open maps.
\label{acf_open}
\end{lemma}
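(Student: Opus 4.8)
The plan is to reduce the multidimensional activation layer to the scalar case already settled in Lemma \ref{f_open}, using the coordinate-wise factorization (\ref{actfunc}) together with the composition result of Lemma \ref{comp}. First I would recall that an element-wise activation $\sigma_i:\mathbb{R}^{d_i}\rightarrow\mathbb{R}^{d_i}$ decomposes, exactly as in (\ref{actfunc}), into $\sigma_i=\sigma_i^{d_i}\circ\cdots\circ\sigma_i^{1}$, where each $\sigma_i^{j}$ applies the scalar activation $f$ (strictly monotonic and continuous) to the $j$-th coordinate and the identity to all remaining coordinates. By Lemma \ref{comp}, a finite composition of open maps is again open, so it suffices to prove that every single-coordinate map $\sigma_i^{j}$ is open.

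Next I would show $\sigma_i^{j}$ is open by regarding it as a product map $\mathrm{id}\times\cdots\times f\times\cdots\times\mathrm{id}$, carrying $f$ on the $j$-th factor and the identity on the others. By Lemma \ref{f_open} the scalar map $f$ is open, and the identity is trivially open. The core step is then the product-of-open-maps fact: given any open $U\subseteq\mathbb{R}^{d_i}$, write $U$ as a union of open boxes $\prod_k I_k$; the image of each box equals $\prod_k f_k(I_k)$, where $f_k=f$ for $k=j$ and $f_k=\mathrm{id}$ otherwise, which is a product of open intervals and hence open in $\mathbb{R}^{d_i}$. Since images commute with unions, $\sigma_i^{j}(U)$ is a union of open sets and therefore open. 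Thus $\sigma_i^{j}$ is open, and by Lemma \ref{comp} so is the full activation $\sigma_i$.

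The main obstacle is precisely this bridge from the one-dimensional openness of Lemma \ref{f_open} to the openness of the single-coordinate lift $\sigma_i^{j}$ on $\mathbb{R}^{d_i}$: neither Lemma \ref{f_open} (purely scalar) nor Lemma \ref{comp} (composition) directly delivers that applying $f$ to one coordinate while fixing the others is an open map. It is the product-of-open-maps argument above that supplies it, resting on the fact that open boxes form a basis for the topology of $\mathbb{R}^{d_i}$ and that direct images preserve unions. One subtlety worth flagging is the codomain: for activations such as $\mathtt{Sigmoid}$ the image is a proper open interval, but since a strictly monotonic continuous $f$ maps open intervals onto open intervals of $\mathbb{R}$ (as established in Lemma \ref{f_open}), the resulting boxes remain open in $\mathbb{R}^{d_i}$, so the openness conclusion is unaffected. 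Once this lifting step is secured, the remainder of the argument is a direct appeal to Lemma \ref{comp}.
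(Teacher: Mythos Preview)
Your proof is correct and follows essentially the same approach as the paper: decompose the activation via Eq.~\eqref{actfunc} into single-coordinate maps and invoke Lemma~\ref{comp}. The paper's proof is a one-line appeal to these two ingredients, leaving implicit the step you explicitly supply---namely, that each single-coordinate lift $\sigma_i^{j}$ is itself open---so your version is a more complete rendering of the same argument rather than a different route.
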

\begin{proof}
    According to the lifted operator in Eq. \eqref{actfunc} and Lemma \ref{comp}, it is obvious that any strictly monotonic and continuous functions are open maps.
\end{proof}

Based on the analysis of affine transformations and activation functions contained in NNs, from the view of open maps, it is readily to reveal that open maps widely exist in neural networks, which is summarized in the following theorem.

\begin{theorem}\label{nohid}
Given an NN $\bm{N}$ without hidden layers, which has $m$ inputs $n$ outputs, and the activation function $\sigma$ for the output layer is strictly monotonic and continuous. The mapping between the input and output layer is denoted  by $\bm{N}(\cdot): \mathbb{R}^{m} \rightarrow \mathbb{R}^{n}$, with input $\bm{x} \in \mathbb{R}^{m}$, output $\bm{y} \in \mathbb{R}^{n}$, i.e., $\bm{y}=\sigma(\bm{W}\bm{x}+\textbf{b})$, where $\bm{W} \in \mathbb{R}^{n \times m}$, $\bm{b} \in \mathbb{R}^{n}$. If $m\geq n$ and $\bm{W}$ is full rank, then the map $\bm{N}(\cdot): \mathbb{R}^{m} \rightarrow \mathbb{R}^{n}$ is an open map.
\end{theorem}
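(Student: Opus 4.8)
The plan is to read the network map $\bm{N}(\cdot):\mathbb{R}^{m}\rightarrow\mathbb{R}^{n}$ as the composition of its affine part and its activation part, and then to invoke the composition lemma for open maps (Lemma \ref{comp}). Writing $g(\bm{x})=\bm{W}\bm{x}+\bm{b}$ for the affine transformation and $\sigma$ for the element-wise activation, we have $\bm{N}=\sigma\circ g$, so it suffices to show that each of the two factors is an open map and that their domains and codomains are compatible.

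First I would dispatch the affine factor $g:\mathbb{R}^{m}\rightarrow\mathbb{R}^{n}$. Under the hypotheses $m\geq n$ and $\bm{W}$ full rank, the rank of $\bm{W}$ equals $n$, so $\bm{x}\mapsto\bm{W}\bm{x}$ is a continuous linear surjection of $\mathbb{R}^{m}$ onto $\mathbb{R}^{n}$; by the Open Map Theorem (Lemma \ref{open map}) it is open, and adding the translation by $\bm{b}$ preserves openness. This is exactly the content of Lemma \ref{af_open}, which I would apply directly to conclude that $g$ is an open map. Next I would treat the activation factor $\sigma:\mathbb{R}^{n}\rightarrow\mathbb{R}^{n}$: since $\sigma$ is strictly monotonic and continuous, Lemma \ref{acf_open} shows that the lifted element-wise map is open.

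Finally, because $g$ lands in $\mathbb{R}^{n}$, which is precisely the domain of $\sigma$, the two open maps compose; Lemma \ref{comp} then yields that $\bm{N}=\sigma\circ g$ is an open map, completing the argument. I do not expect a genuine obstacle here, as the theorem is essentially an assembly of the preceding lemmas rather than new analysis. The one point that should be stated with care is that the hypotheses $m\geq n$ together with full rank are exactly what forces $g$ to be \emph{surjective} onto $\mathbb{R}^{n}$ (and hence open via the Open Map Theorem), as opposed to merely injective, which would be the relevant situation in the opposite dimension regime and would not give openness onto the target space.
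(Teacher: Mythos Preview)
Your proposal is correct and matches the paper's own proof essentially line for line: the paper simply states that the result follows by combining Lemma \ref{af_open}, Lemma \ref{acf_open}, and Lemma \ref{comp}, which is exactly the decomposition $\bm{N}=\sigma\circ g$ and the two-step verification you describe. Your additional remark about why $m\geq n$ plus full rank forces surjectivity (rather than injectivity) is a helpful clarification that the paper leaves implicit.
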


\begin{proof}
It is easy to conclude by combining Lemma \ref{af_open}, \ref{acf_open} and Lemma \ref{comp}.
\end{proof}

Further, we generalize the neural network in sense of Theorem \ref{nohid} into more general networks, which are named the  trapezoidal network  structures. 

\begin{theorem}\label{hidblock}
Given an NN  $\bm{N}$ with $l$ layers, whose accompanied function is denoted by $\bm{N}(\cdot): \mathbb{R}^{m} \rightarrow \mathbb{R}^{n}$, with input $\bm{x} \in \mathbb{R}^{m}$, output $\bm{y} \in \mathbb{R}^{n}$, i.e., $\bm{y}=\bm{N}(\bm{x})=\bm{N}_{l} \circ \dots \circ \bm{N}_{2}\circ \bm{N}_{1}(\bm{x})$. If the numbers of neurons located in any two adjacent layers are non-increasing and all the weight matrices are full rank,  also, activation functions are strictly monotonic and continuous, then the map $\bm{N}(\cdot): \mathbb{R}^{m} \rightarrow \mathbb{R}^{n}$  is an open map. 
\end{theorem}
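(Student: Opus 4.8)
The plan is to reduce Theorem \ref{hidblock} to the single-layer situation already resolved by Theorem \ref{nohid}, and then close the whole network under composition using Lemma \ref{comp}. The crucial observation is that the hypotheses have been arranged precisely so that \emph{each individual layer map} is an open map; once that is established, the result follows from the fact that a finite composition of open maps is open.

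First I would decompose the network into its layers. Write $d_0 = m, d_1, \dots, d_l = n$ for the layer dimensions, so that each layer map has the form $\bm{N}_i(\cdot): \mathbb{R}^{d_{i-1}} \to \mathbb{R}^{d_i}$ and factors as $\bm{N}_i = \sigma_i \circ \bm{A}_i$, where $\bm{A}_i(\bm{z}) = \bm{W}_i \bm{z} + \bm{b}_i$ is the affine transformation of the $i$-th layer and $\sigma_i$ is its (element-wise lifted) activation function.

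Second, I would check that each $\bm{N}_i$ is an open map by verifying the hypotheses of Lemmas \ref{af_open} and \ref{acf_open}. The non-increasing dimension assumption gives $d_{i-1} \geq d_i$, which is exactly the $m \geq n$ condition of Lemma \ref{af_open}; since $\bm{W}_i \in \mathbb{R}^{d_i \times d_{i-1}}$ is full rank it has $\rank = d_i$, so the induced linear map is surjective onto $\mathbb{R}^{d_i}$ and Lemma \ref{af_open} shows $\bm{A}_i$ is open. Lemma \ref{acf_open} shows the activation $\sigma_i$ is open because it is strictly monotonic and continuous. Applying Lemma \ref{comp} to $\sigma_i \circ \bm{A}_i$ then yields that each layer map $\bm{N}_i$ is open. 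Finally I would assemble the layers: the full network is $\bm{N} = \bm{N}_l \circ \cdots \circ \bm{N}_1$, a composition of $l$ open maps, and a straightforward induction on $l$ using Lemma \ref{comp} (which disposes of the two-map case) shows this composition is again an open map, completing the argument.

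As for difficulty, there is essentially no deep obstacle here: the content lives entirely in the supporting lemmas, and Theorem \ref{hidblock} is really just the ``stacked'' version of Theorem \ref{nohid}. The only points that require care are confirming that ``full rank'' for the rectangular matrix $\bm{W}_i$ means $\rank = d_i$ (so the affine map is surjective, as required by the Open Map Theorem through Lemma \ref{af_open}), and noting that Lemma \ref{comp} must be \emph{iterated} across all $l$ layers rather than applied a single time. Both are routine once the per-layer decomposition $\bm{N}_i = \sigma_i \circ \bm{A}_i$ is in place.
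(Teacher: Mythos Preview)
Your proposal is correct and matches the paper's approach: the paper does not even supply an explicit proof of Theorem \ref{hidblock}, treating it as the evident multi-layer extension of Theorem \ref{nohid} (whose proof reads ``It is easy to conclude by combining Lemma \ref{af_open}, \ref{acf_open} and Lemma \ref{comp}''). Your per-layer decomposition $\bm{N}_i = \sigma_i \circ \bm{A}_i$ followed by iterated application of Lemma \ref{comp} is exactly the intended argument.
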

\begin{remark}
For the safety verification with open maps, non-increasing layer dimensions of NNs are required and that is why they are termed trapezoidal network  structures. Moreover, for those networks which do not share the non-increasing structure globally, this style structure may also exist in them locally.
\end{remark}

The following examples show that open maps are more relaxed properties compared with homeomorphisms.

\begin{example}
\label{exm_home}
Consider an NN, which has 2 inputs, 2 outputs and 2 hidden layers consisting of 4 and 3 neurons, respectively. The same input and output dimensions are designed for the possible homeomorphism property.  The input set is $\mathcal{X}_{in}=[-0.5,0.5]^2$. 
The activation functions for the hidden layer and the output layer are $\mathtt{Sigmoid}$ and $\mathtt{Purelin}$ functions, respectively, whose weight matrices and bias vectors are listed as follows. 
\begin{gather*}
     \bm{W}_1 = \begin{bmatrix}
    -0.3143  & -1.2349 \\
    0.6374  &  0.0476 \\
    0.5337  &  0.9933 \\
    0.1006  & -0.3207 
    \end{bmatrix},  \bm{b}_1 = \begin{bmatrix}
   0.9499 \\
    0.7459 \\
    1.8229 \\
   -0.7684
    \end{bmatrix}
    \\
     \bm{W}_2 = \begin{bmatrix}
    0.7289 &  -0.8494   & 1.0250 &  -1.2558 \\
    1.4836 &  -1.2679 &   1.8014  & -0.8406 \\
   -0.0152 &   1.8449  &  0.6851  & -1.7672
    \end{bmatrix}, \bm{b}_2= \begin{bmatrix}
    -1.8237 \\
   -0.8818 \\
   -1.3181
    \end{bmatrix}
    \\
    \bm{W}_3 = \begin{bmatrix}
    0.7084  & -0.5921 &   1.1720\\
    1.6097 &  -1.5994  & -0.1263
    \end{bmatrix},  \bm{b}_3 = \begin{bmatrix}
    0.7494 \\
    0.7323
    \end{bmatrix}
\end{gather*}    
 For this neural network, based on interval arithmetic, we can show that the determinant of the Jacobian matrix $\frac{\partial \bm{y}}{\partial \bm{x}_0}=\frac{\partial \bm{N}(\bm{x}_0)}{\partial \bm{x}_0}$ is not non-zero for any $\bm{x}_0\in \mathcal{X}_{in}$. Therefore, the map $\bm{N}(\cdot):\mathcal{X}_{in} \rightarrow \mathcal{R}(\mathcal{X}_{in})$ is not a homeomorphism with respect to the input set $\mathcal{X}_{in}$, leading to $\mathcal{R}(\partial \mathcal{X}_{in})\neq \partial \mathcal{R}(\mathcal{X}_{in})$. This statement is also verified via the visualized results in Fig. \ref{exm_nonhome}. Whereas, the map between the first hidden layer and the output layer is an open map via testing the conditions described in Theorem \ref{hidblock}.
\end{example}

\begin{figure}[htbp]
\centering
\subfigure[Example \ref{exm_home}: \textcolor{blue}{$\mathcal{R}(\mathcal{X}_{in})$}; \textcolor{red}{$\mathcal{R}(\partial \mathcal{X}_{in})$}]{
\begin{minipage}[t]{0.5\linewidth}
\centering
\includegraphics[width=2in]{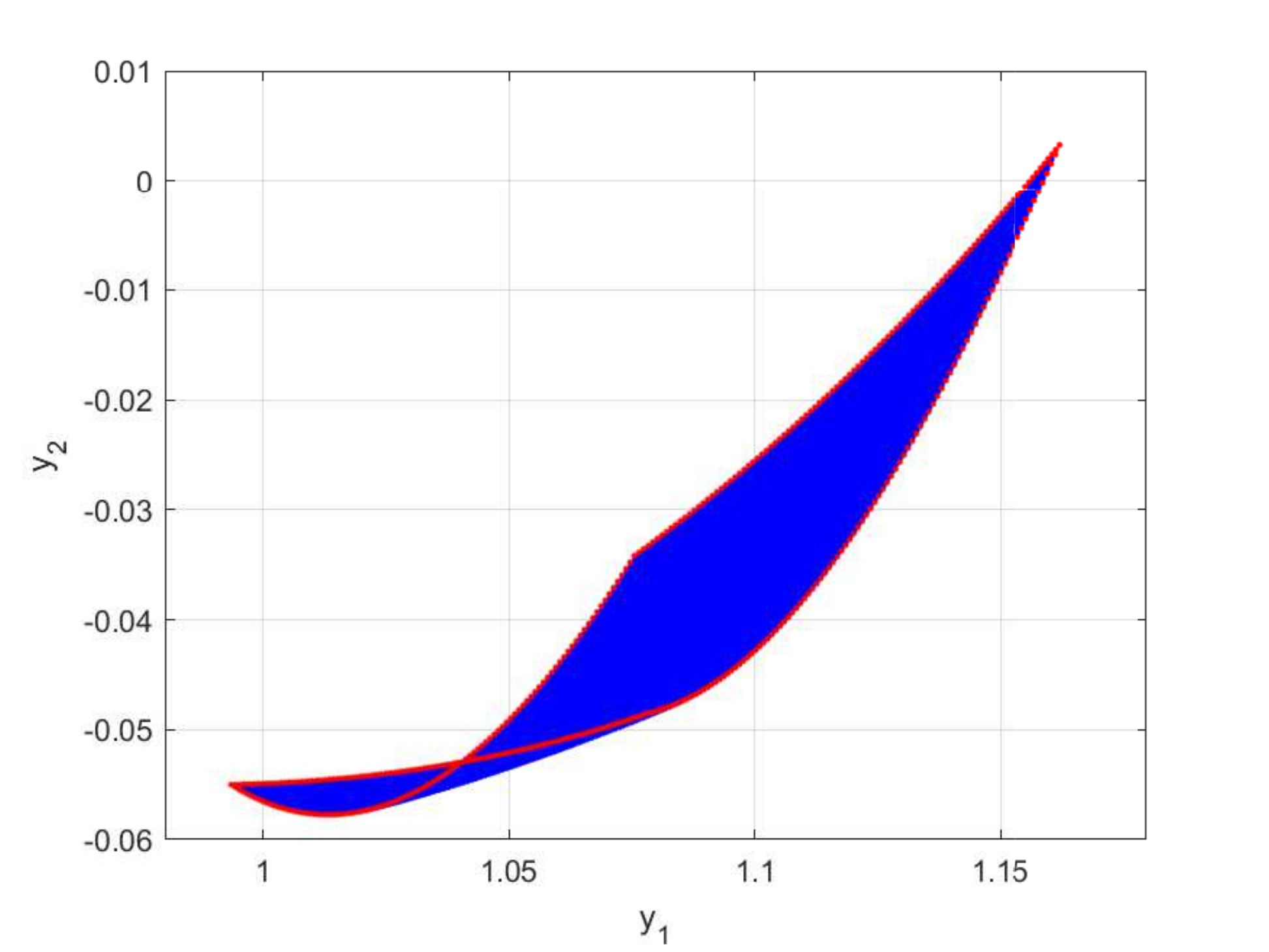}
\label{exm_nonhome}
\end{minipage}%
}%
\subfigure[Output reachable sets: \textcolor{blue}{$\Omega(\mathcal{X}_{in})$}; \textcolor{red}{$\Omega_O(\mathcal{X}_{in})$}]{
\begin{minipage}[t]{0.5\linewidth}
\centering
\includegraphics[width=2in]{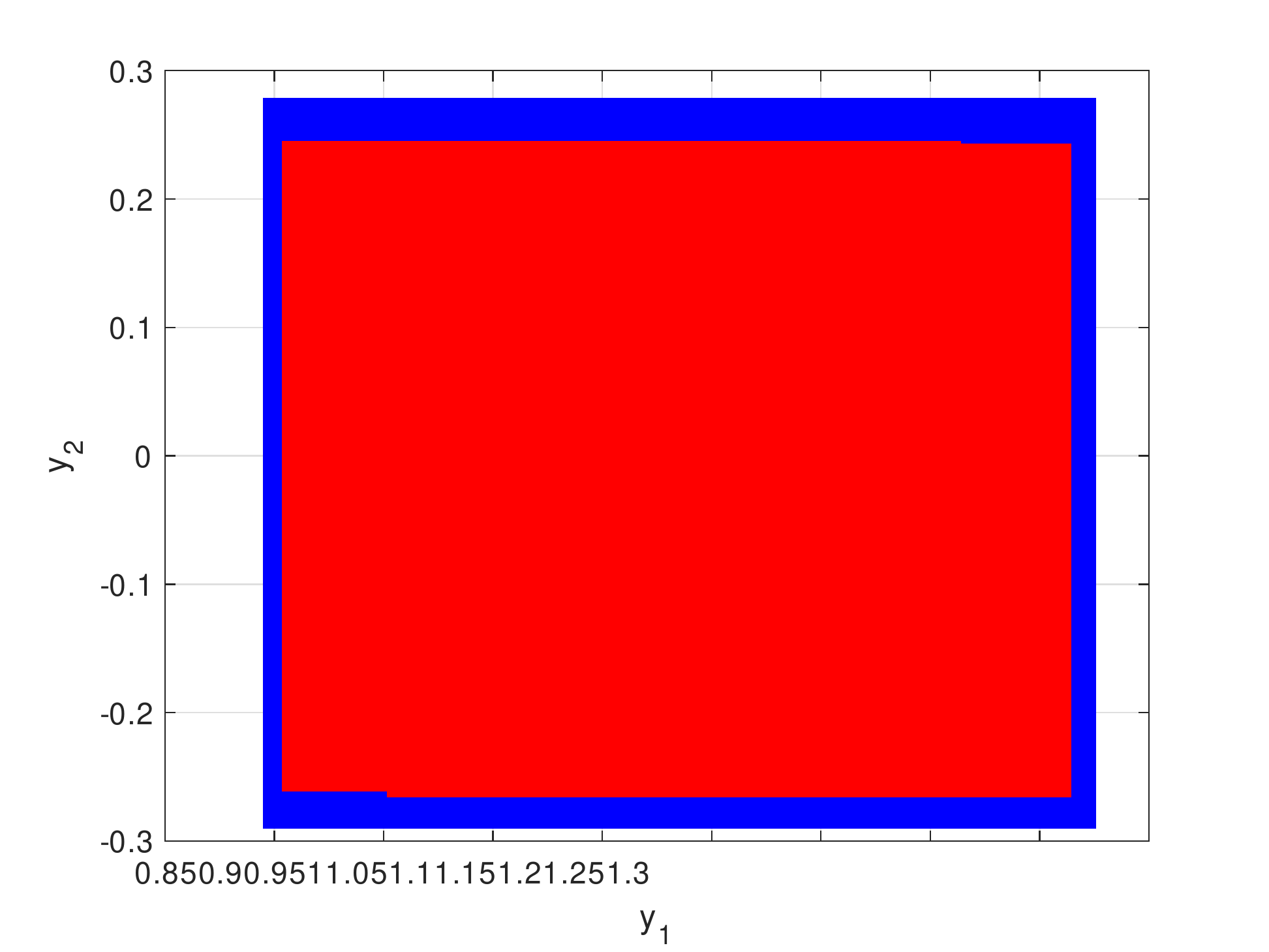}
\label{refine}
\end{minipage}%
}%
\centering
\caption{Illustrations on Example \ref{exm_home} and Example \ref{exm_refine}. }
\label{fig_exam}
\end{figure}

From Example \ref{exm_home}, it can be observed that open maps are easy to hold in neural networks, but also to be verified in practice. Based on the property of open maps, the safety verification of NNs establishing open maps can be carried out with the procedures presented in Algorithm \ref{alg: iNNs}, demonstrated in the following example. More importantly, the properties of open maps on set-boundary analysis provide a soundness guarantee of the safety verification, as illustrated in Theorem \ref{sound}.

\begin{example}
\label{exm_refine}
Considering the neural network in Example \ref{exm_home} again, we also take the input set as $\mathcal{X}_{in}=[-0.5, 0.5]^{2}$. In this example, we take the Interval Bound Propagation (IBP) \cite{gowal2018effectiveness} as the baseline, which is an over-approximation reachable sets computing algorithm with intervals and the reachable set of IBP is $\Omega(\mathcal{X}_{in}) = [0.8952, 1.2753]\times [-0.2896, 0.2783]$. As for the safety verification with open maps, the first iteration works with the entire input set as the dimension of the first hidden layer is larger than the input dimension. Afterward, open maps are established during all the remaining loops. Consequently, for the reachable set of the first loop, $[0.5437, 0.8487]\times [0.5995, 0.7481]\times [0.7426, 0.9300]\times [0.2731, 0.3641]$, we extra its 8 (2 $\times$ 4) set boundaries for the subsequent computation and obtain the final reachable set $\Omega_O(\mathcal{X}_{in}) = [0.9040, 1.2640]\times [-0.2656, 0.2449]$. Obviously,  $\Omega_O(\mathcal{X}_{in})\subset \Omega(\mathcal{X}_{in})$ and the reachable set computed by set-boundary analysis reduces 10.11\% and 5.29\% in each dimension, respectively. This statement is also verified via the visualized results in Fig. \ref{refine}.
\end{example}

Through Example \ref{exm_refine}, we also see that set-boundary analysis based on open maps, like that on homeomorphisms, exhibits nice compatibility with over-approximate reachability methods. This means that open maps based set-boundary analysis can further refine the output reachable sets computed by existing reachability tools. In addition, we do not require the entire NNs to establish open maps, and as long as certain layers within them establish open maps, then set-boundary analysis can be coupled with existing methods.

\subsection{Comparisons and Discussion}
For better understanding, we also make some comparisons between homeomorphisms and open maps established in NNs in the following.

    \begin{itemize}
        \item  \textbf{Establishment condition.} Homeomorphisms require two neural network layers with the same dimensions, while open maps demand trapezoidal network structures. Both these properties impose specific structure requirements for the NNs and the  trapezoidal network structures are more common in NNs. Also, some requirements are proposed for the activation functions.
        \item \textbf{Identification method and efficiency.} It is computationally prohibitive to verify the Jacobian matrix being non-zero to identify homeomorphisms in practice. For some invertible NNs, like neural ODE, the homeomorphism property exists naturally without checking. In contrast, the identification of open maps is more time-saving by algebraic analysis, i.e., checking matrix ranks. 
        \item  \textbf{Verification Guarantee.} Homeomorphisms and open maps both provide soundness guarantees for our set-boundary analysis based safety verification, and open maps have some redundancy situations, i.e., sometimes mapping the input set's 
        boundary point to the output set's interior point.
    \end{itemize}

In sum, for one thing, compared with homeomorphisms, open maps are much more relaxed properties existing in NNs, either from the aspect of their establishment or their identification. For another thing, even though open maps do not preserve all the topological properties like homeomorphisms, they still maintain the necessary topological property for our set-boundary based safety verification of NNs. Moreover, as shown in Examples \ref{ex1} and \ref{exm_refine}, compared with the entire input set based reachability methods, either homeomorphisms based or open maps based set-boundary analysis render tighter reachable sets and easier tackled safety verification with less computation burdens.

\section{Experiments}
\label{Sec:exp}
In this section, several examples of NNs are used to demonstrate the performance of the proposed set-boundary reachability method for safety verification. Experiments are conducted from the aspects of homeomorphisms and open maps, respectively. Recall that the proposed set-boundary method is applicable for any reachability analysis algorithm based on set representation, resulting in tighter and verifiable over-approximations when existing approaches fail.  Thus, we compare  the set-boundary method versus the entire set based one on some existing reachability tools in terms of effectiveness and efficiency.

\noindent\textbf{Experiment Setting.} All the experiments herein are run on the platform with MATLAB 2021a, Intel (R) Core (TM) i7-10750H CPU@2.60 GHz and RAM 16 GB. The codes and models are available from \url{https://github.com/laode2022/BoundaryNN}.

\subsection{Experiments on safety verification with homeomorphisms}
In this subsection, we conduct some experiments on safety verification utilizing homeomorphisms and local homeomorphisms, which are respectively evaluated on invertible NNs and general feedforward NNs.

\subsubsection{Experiments on Invertible NNs}
We carry out some examples involving neural ODEs and invertible feedforward neural networks.

\noindent{\textbf{Neural ODEs.}} We experiment on two widely-used neural ODEs in  \cite{manzanas2022reachability}, which are respectively a nonlinear 2-dimensional spiral \cite{chen2018neural} with the input set $\mathcal{X}_{in}=[1.5,2.5]\times [-0.5,0.5]$ and the safe set $\mathcal{X}_s=[-0.08, 0.9]\times [-1.5,-0.3]$  and a 12-dimensional controlled cartpole \cite{gruenbacher2020lagrangian} with the input set $\mathcal{X}_{in}=[-0.001,0.001]^{12}$ and the safe set $\mathcal{X}_s=\{\bm{y}\in \mathbb{R}^{12} \mid y_1 \in [0.0545,0.1465], y_2\in [0.145,0.725]\}$. For simplicity, we respectively denote them by $\bm{N}_1$ and $\bm{N}_2$.

\begin{figure}[htbp]
\centering
\subfigure[\textcolor{blue}{$4\times4$} Vs. \textcolor{red}{$4^2$}; \textcolor{blue}{Safe}, \textcolor{red}{Unknown}]{
\begin{minipage}[t]{0.48\linewidth}
\centering
\includegraphics[width=2in]{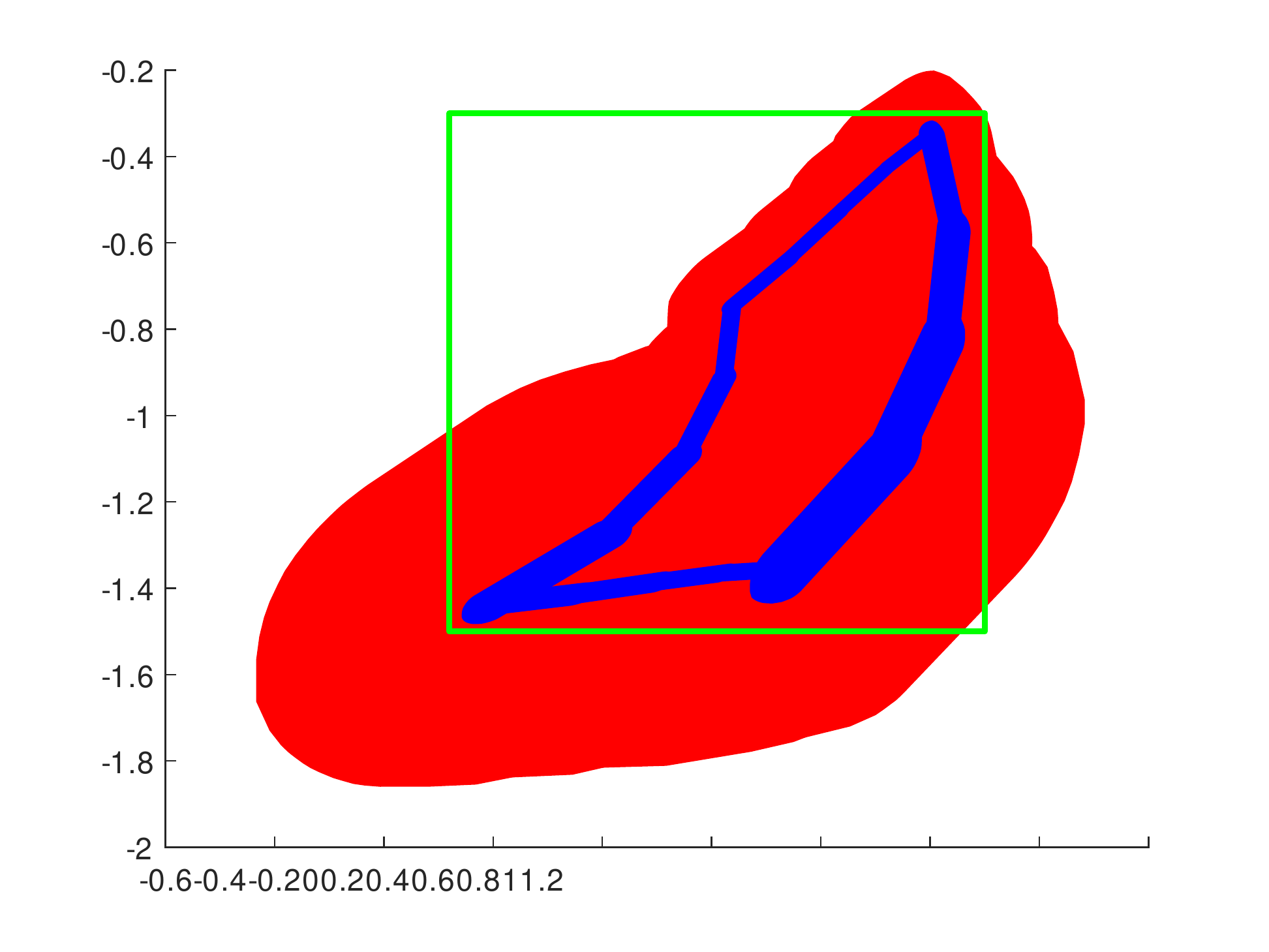}
\label{spiral_2}
\end{minipage}%
}%
\subfigure[\textcolor{blue}{$4\times4$} Vs. \textcolor{red}{$7^2$}; \textcolor{blue}{Safe}, \textcolor{red}{Safe}]{
\begin{minipage}[t]{0.48\linewidth}
\centering
\includegraphics[width=2in]{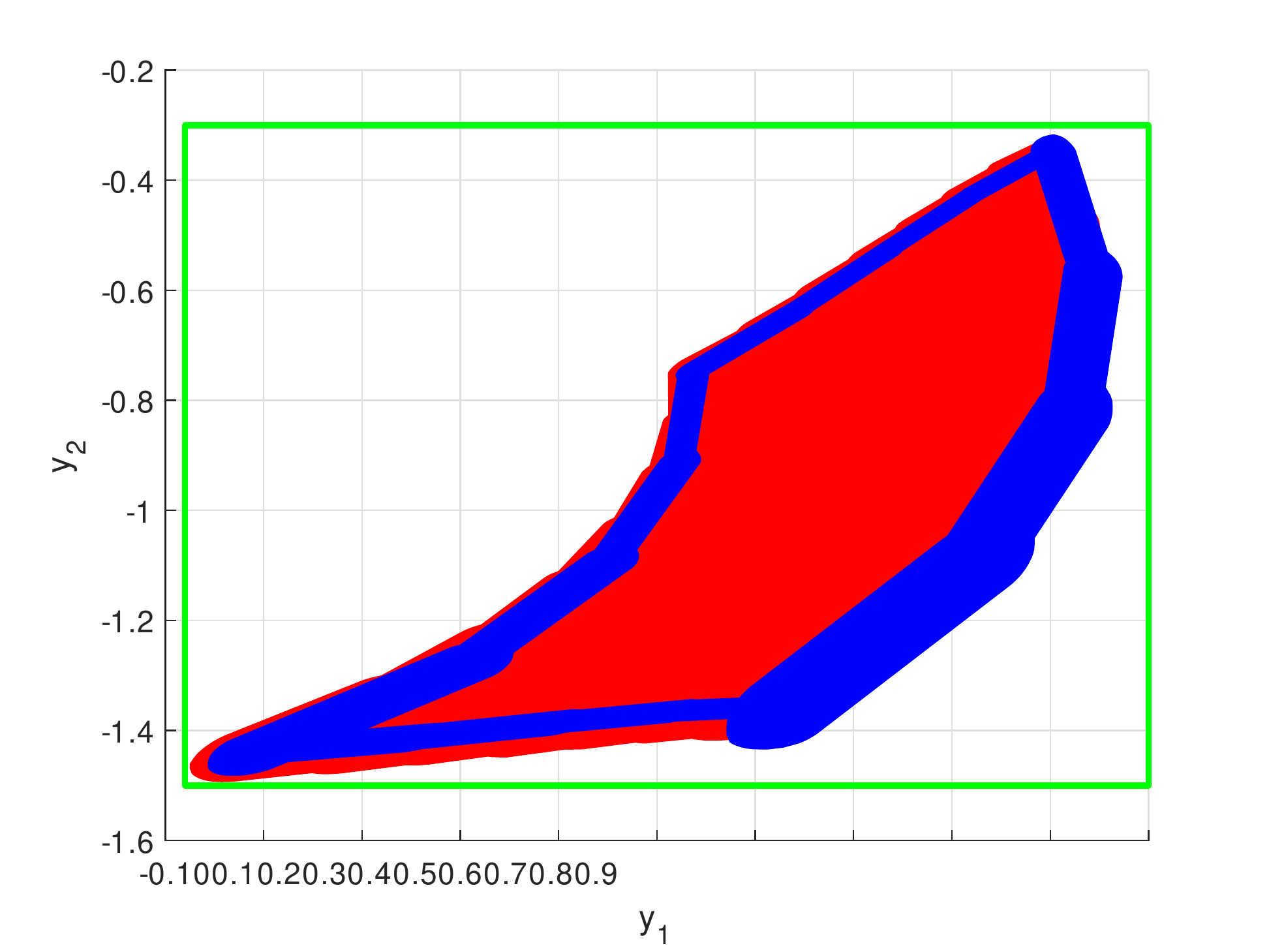}
\label{spiral_3}
\end{minipage}
}
\label{FPA}
\caption{Verification on $\bm{N}_1$.
\textcolor{blue}{$\Omega(\partial \mathcal{X}_{in})$}; \textcolor{red}{$\Omega(\mathcal{X}_{in})$}; \textcolor{green}{$\partial \mathcal{X}_{s}$}}
\label{node}
\end{figure}

Here, we take zonotopes as abstract domains and compare the output reachable sets computed by the proposed set-boundary reachability method and the entire input set based method. The over-approximate reachability analysis is performed using the continuous reachability analyzer CORA \cite{althoff2015introduction}, with some minor modifications \cite{manzanas2022reachability}. 

\begin{table}[htbp]

    \centering
     \caption{Time consumption of safety verification on $\bm{N}_1$ and $\bm{N}_3$}
    \setlength{\tabcolsep}{3mm}{
    \begin{tabular}{*{7}{c}}
  \toprule
  \multirow{2}*{\textbf{Partition}} & \multicolumn{3}{c}{\textbf{Network} $\bm{N}_1$} & \multicolumn{3}{c}{\textbf{Network} $\bm{N}_3$}  \\
  \cmidrule(lr){2-4}\cmidrule(lr){5-7}
  &\textbf{Entire set} & \textbf{Boundary} & \textbf{Ratio} & \textbf{Entire set} & \textbf{Boundary} & \textbf{Ratio}  \\
  \midrule
 1 & 3.3417&	29.8787&	8.941   & 1.5080 &	5.6662 &	3.757 \\
 2 & 31.9848&	106.0663&	3.316  & 5.9560 &	11.4954 &	1.930 \\
 3& 111.6617 &	159.6125&	1.429  &  13.5740&	17.1452&	1.263 \\
 4 & 235.0475&	\textbf{\textcolor{blue}{220.8258}}&	0.939& 23.9392&	22.9996&	0.961\\
 5& 325.2500&	\textcolor{gray}{-}&	\textcolor{gray}{-} &  37.2290 &	\textbf{\textcolor{blue}{28.4822}}& 0.765 \\
 6& 477.5187&	\textcolor{gray}{-}&	\textcolor{gray}{-} & 54.4542&	\textcolor{gray}{-}&	\textcolor{gray}{-} \\
 7& \textbf{\textcolor{red}{671.3200}}&	\textcolor{gray}{-}&	\textcolor{gray}{-} &73.9100&	\textcolor{gray}{-}&	\textcolor{gray}{-} \\
 8& \textcolor{gray}{-}&	\textcolor{gray}{-}&	\textcolor{gray}{-} & \textbf{\textcolor{red}{95.5307}}&	\textcolor{gray}{-}&	\textcolor{gray}{-} \\
 \midrule
 \textbf{Total} & \textbf{1856.1240} & \textbf{516.3833} & \textbf{0.278} & \textbf{306.0975}& \textbf{85.7886} & \textbf{0.280}\\
  \bottomrule
\end{tabular}}
   
    \label{tab:computation time}
\end{table}

The time horizon and the time step are respectively $[0, 6]$ and 0.01 for verification computations on $\bm{N}_1$. The performance of our set-boundary reachability method and the comparison with the entire input set based method are summarized in Table \ref{tab:computation time}. The number $n$ in the `Partition' column denotes that the entire input set is divided into equal $n^2$ subsets, and the boundary of the input set into $4n$ subsets. During each partition, if we fail to verify the safety property, we continue the use of partition operator to obtain smaller subsets for computations. Otherwise, we terminate the partition. In each partition-verification round, the corresponding computation time  from the entire input set based method and our set-boundary reachability method are respectively listed in the `Entire set' and `Boundary' column of Table \ref{tab:computation time}, which shows their ratios as well. We also present the total verification time in the last row, i.e., the sum of computation times in all rounds. These statements also applies to other tables. Our set-boundary reachability method for $\bm{N}_1$ returns `\textcolor{blue}{Safe}' when the boundary of the input set is partitioned into $16$ equal subsets. However, the safety property for $\bm{N}_1$ is  verified until the entire input set is partitioned into $49$ equal subsets. The computed output reachable sets for $\bm{N}_1$ are displayed in Fig. \ref{spiral_2} and \ref{spiral_3}. It is observed from Table \ref{tab:computation time} that the computation time from the set-boundary reachability method is reduced by 72.2\%, compared to the entire input set based method.

For network $\bm{N}_2$, when the time horizon is $[0.0, 1.1]$ and the time step is 0.01, we first use the entire set based method to verify the safety property. Unfortunately, it fails. Consequently, we have to partition the entire input set into small subsets for the verification. In this case, the boundary of the input set is a good choice and thus we utilize the set-boundary reachability method for the safety verification. Without further splitting, our set-boundary reachability method returns  `\textcolor{blue}{Safe}' with the time consumption of around 7400.5699 seconds. 

\begin{figure}[htbp]
\centering
\subfigure[$\epsilon=0.375$, \textcolor{blue}{Safe}, \textcolor{red}{Safe}]{
\begin{minipage}[t]{0.33\linewidth}
\centering
\includegraphics[width=1.6in]{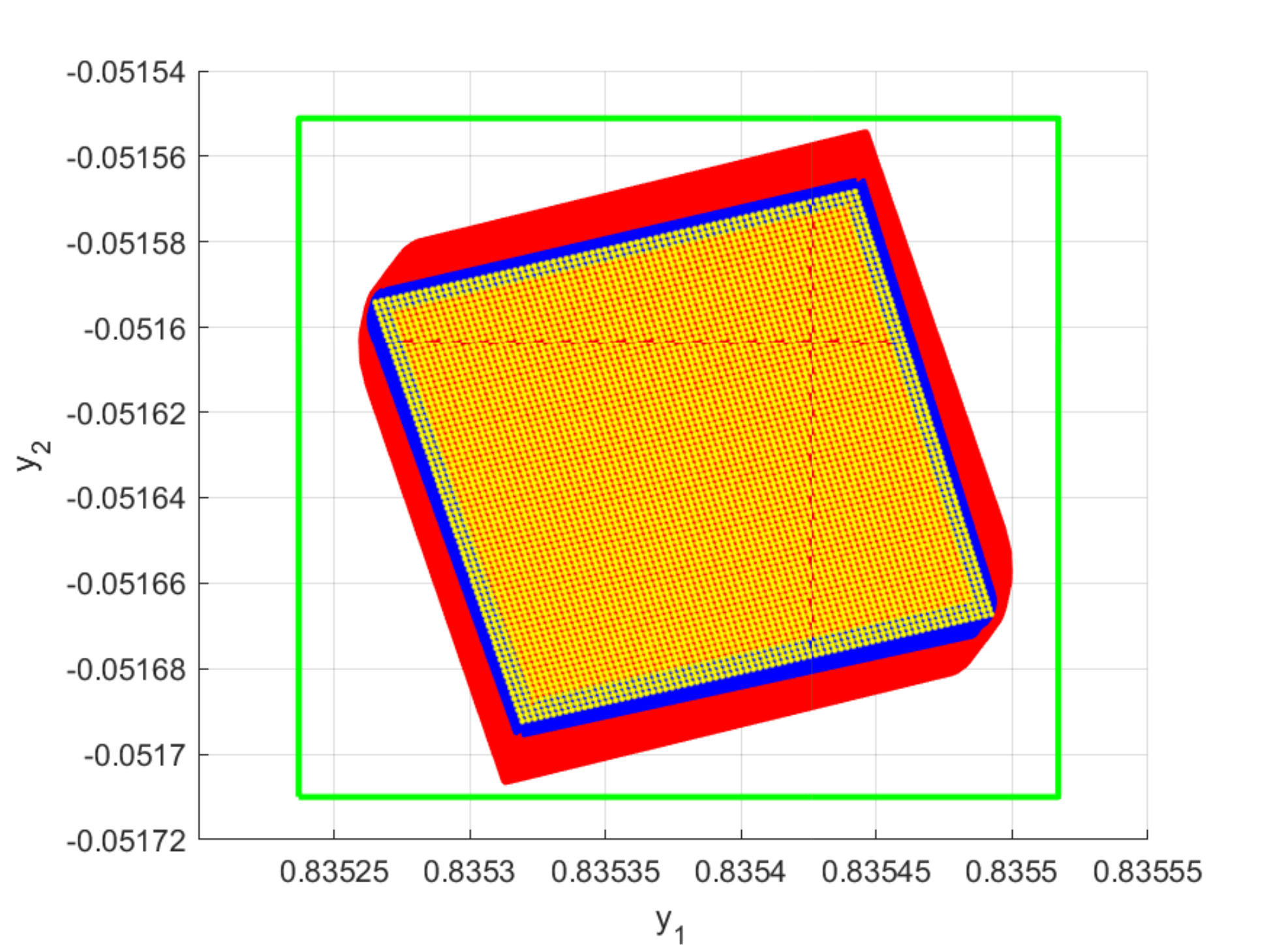}
\label{0.125}
\end{minipage}%
}%
\subfigure[$\epsilon=0.400$,  \textcolor{blue}{Safe}, \textcolor{red}{Unknown}]{
\begin{minipage}[t]{0.33\linewidth}
\centering
\includegraphics[width=1.6in]{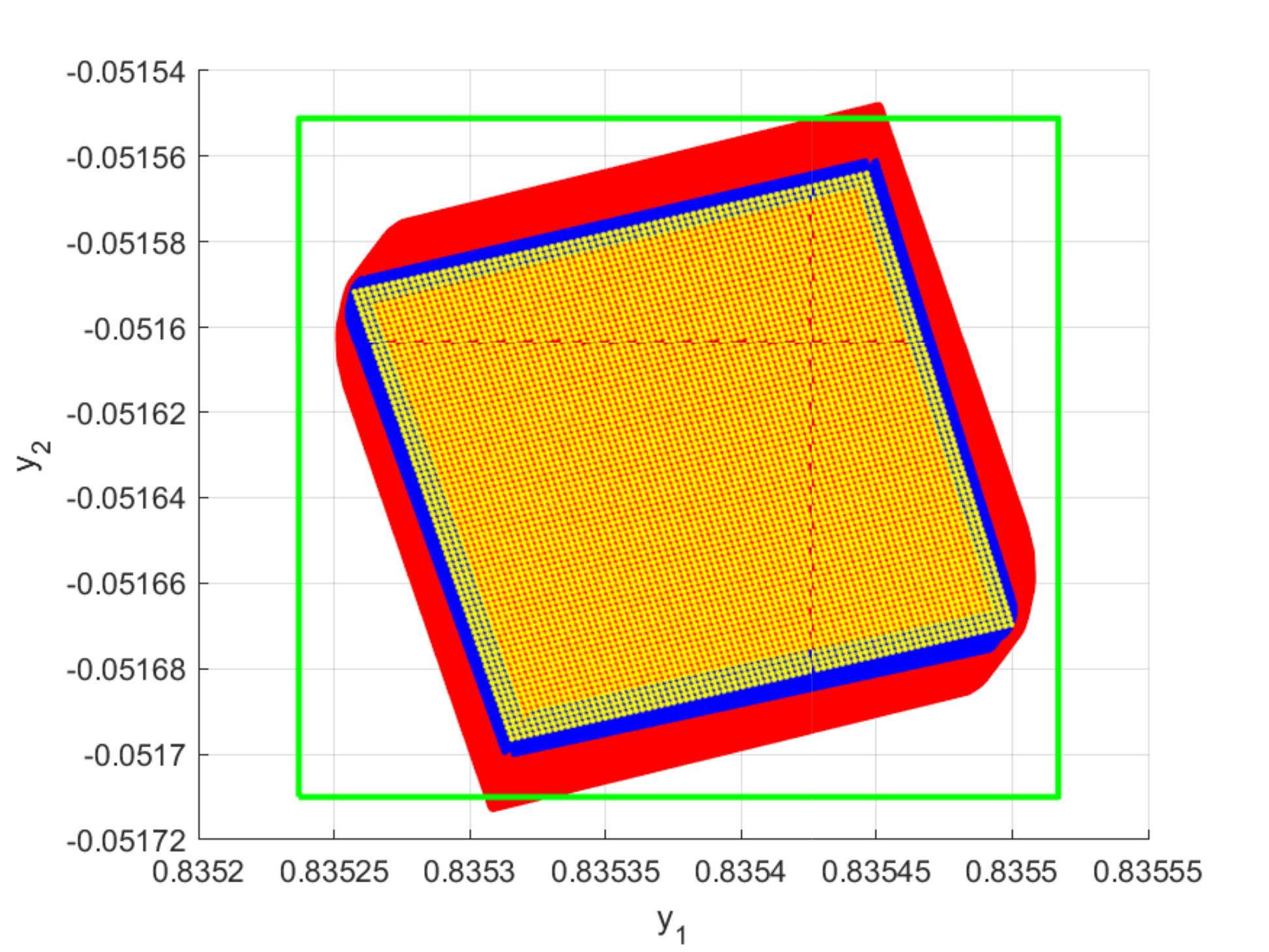}
\label{0.25}
\end{minipage}%
}%
\subfigure[$\epsilon=0.425$,  \textcolor{blue}{Safe}, \textcolor{red}{Unknown}]{
\begin{minipage}[t]{0.33\linewidth}
\centering
\includegraphics[width=1.6in]{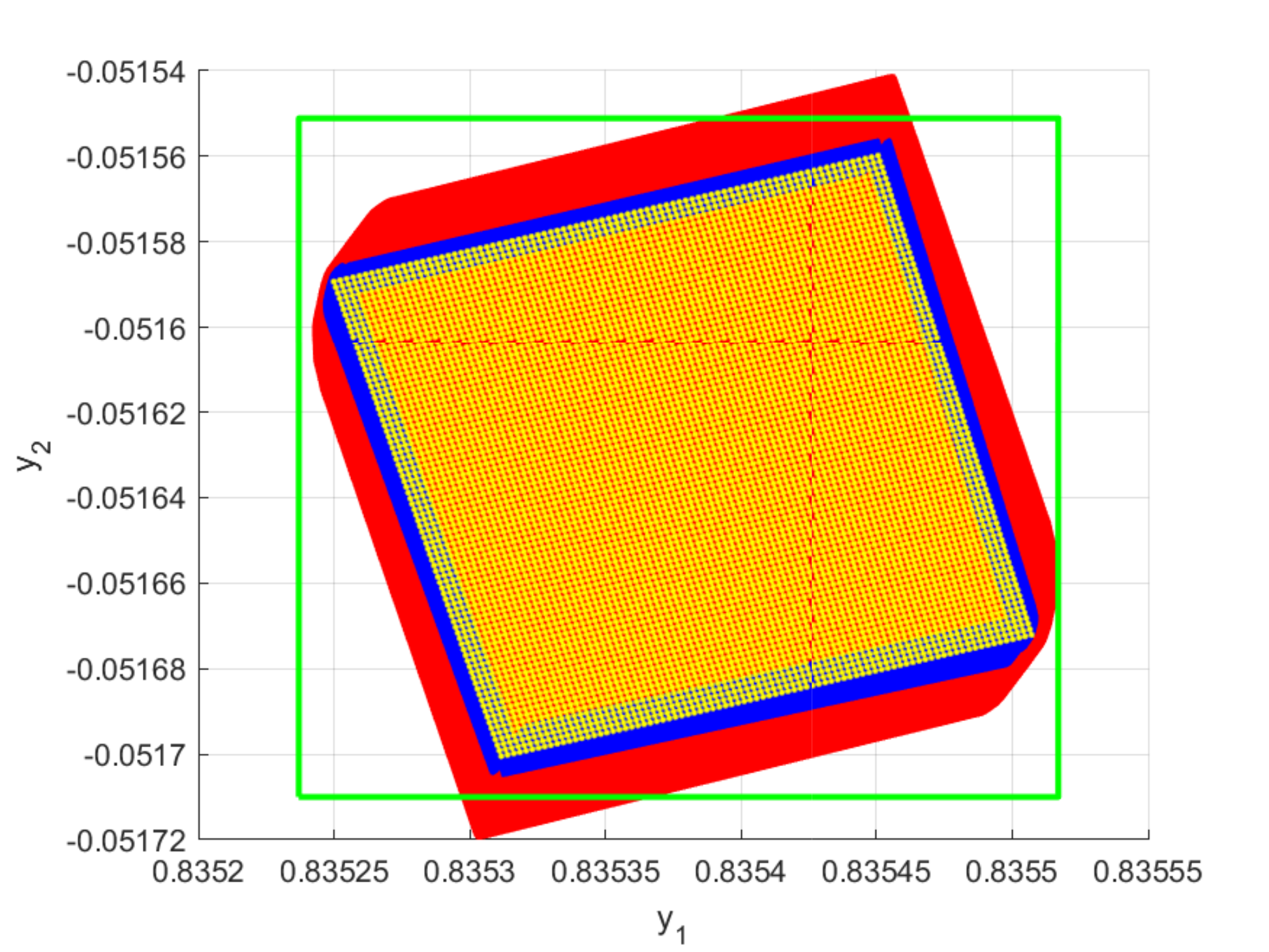}
\label{0.375}
\end{minipage}%
}%
\\
\subfigure[$\epsilon=0.450$,  \textcolor{blue}{Safe}, \textcolor{red}{Unknown}]{
\begin{minipage}[t]{0.33\linewidth}
\centering
\includegraphics[width=1.6in]{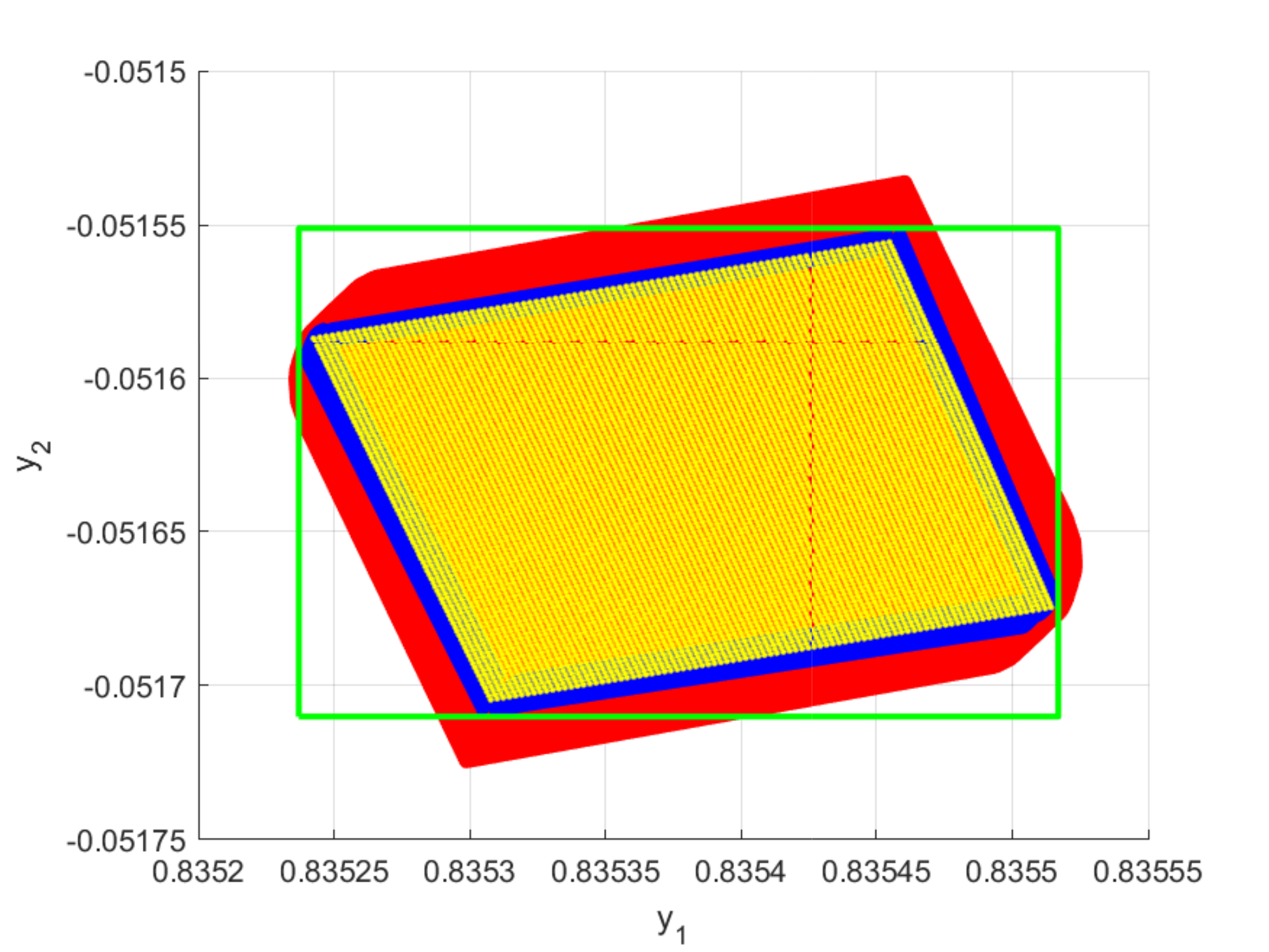}
\label{0.425}
\end{minipage}%
}%
\subfigure[\textcolor{blue}{$5\times4$} Vs. \textcolor{red}{$7^2$}. \textcolor{blue}{Safe}, \textcolor{red}{Unknown}]{
\begin{minipage}[t]{0.34\linewidth}
\centering
\includegraphics[width=1.6in]{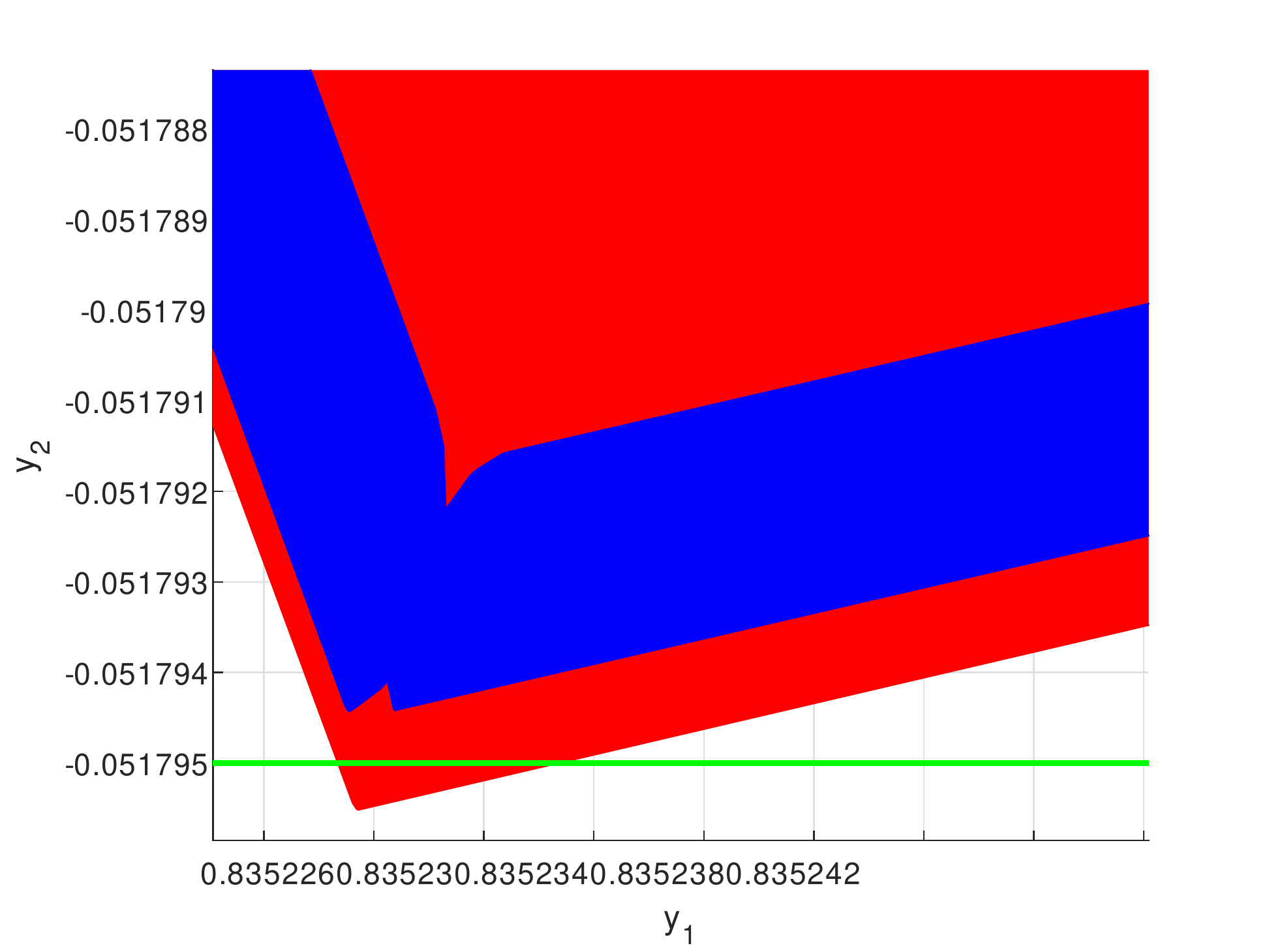}
\label{-11_2}
\end{minipage}%
}%
\subfigure[\textcolor{blue}{$5\times4$} Vs. \textcolor{red}{$8^2$}. \textcolor{blue}{Safe}, \textcolor{red}{Safe}]{
\begin{minipage}[t]{0.33\linewidth}
\centering
\includegraphics[width=1.6in]{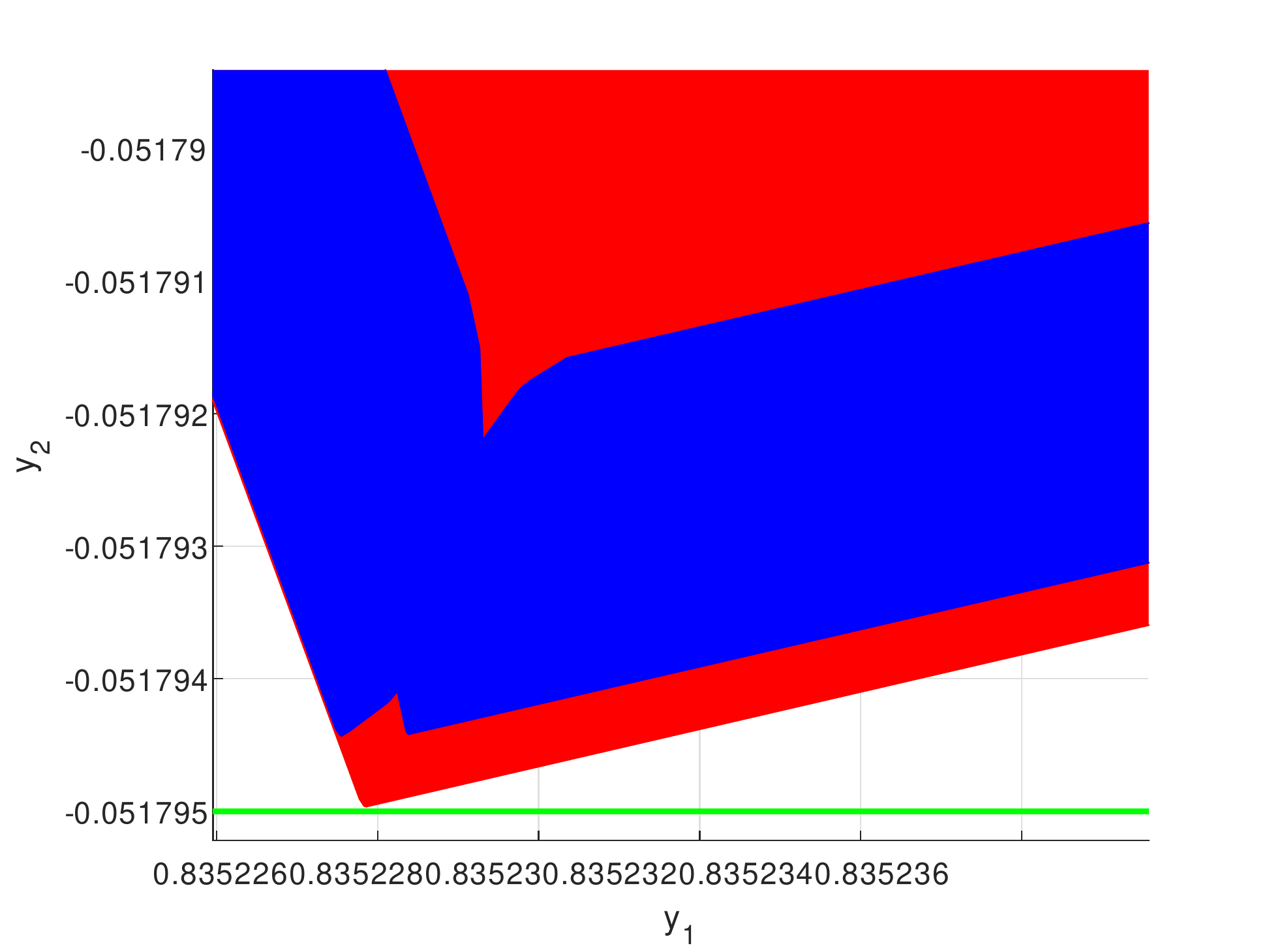}
\label{-11_3}
\end{minipage}
}
\centering
\caption{Safety verification on $\bm{N}_3$.
\textcolor{blue}{$\Omega(\partial \mathcal{X}_{in})$}; \textcolor{red}{$\Omega(\mathcal{X}_{in})$}; \textcolor{green}{$\partial \mathcal{X}_{s}$}}
\label{2-dim inn}

\end{figure}

\noindent\textbf{Feedforward NNs.} Rather than considering neural ODEs, we instead take more general invertible NNs, such as feedforward NNs,  into account.  The invertibility of  NNs used here, i.e., $\bm{N}_3$ and $\bm{N}_4$,  are assured by their Jacobian determinant not being zero. The NN $\bm{N}_3$ is fully connected with $\mathtt{Sigmoid}$ activation functions, having an input/output layer with dimension 2 and  5 hidden layers with size 10. The NN $\bm{N}_4$ is similar to $\bm{N}_3$, 
consisting 5 hidden layers with size 5 and its input/output dimensions are 3. The results of safety verification of $\bm{N}_3$ and $\bm{N}_4$ are demonstrated in Fig. \ref{2-dim inn} and \ref{3-dim inn1}. The input sets in Fig. \ref{2-dim inn} are  $[-0.375,0.375]^2$, $[-0.4,0.4]^2$, $[-0.425,0.425]^2$, $[-0.45,0.45]^2$ and $[-1.0,1.0]^2$ (Fig. \ref{-11_2}-\ref{-11_3}.) respectively  and those of Fig. \ref{3-dim inn1} are $[-0.45,0.45]^3$, $[-0.475,0.475]^3$,  $[-0.5,0.5]^3$. Their safe sets $\mathcal{X}_{s}$ are respectively $ [0.835237,0.835517]\times [-0.0517100,-0.0515511]$ (Fig. \ref{0.125}-\ref{0.425}), $ [0.835078,0.83567]\times [-0.051795,-0.05146]$ (Fig. \ref{-11_2}-\ref{-11_3}) and $[-0.5391325, -0.5391025] \times [-0.9921175,-0.99209530]\times[-0.348392,-0.3483625]$ (Fig. \ref{3-dim inn1}), whose boundaries are shown in green color.  The over-approximate reachability analysis is implemented with polynominal zonotope domains \cite{kochdumper2022open}, which is a recently proposed  NN verification tool via propagating polynominal zonotopes through networks and is termed OCNNV for brevity herein.

\begin{figure}[htbp]
\centering

\subfigure[$y_1-y_2$]{
\begin{minipage}[t]{0.32\linewidth}
\centering
\includegraphics[width=1.6in]{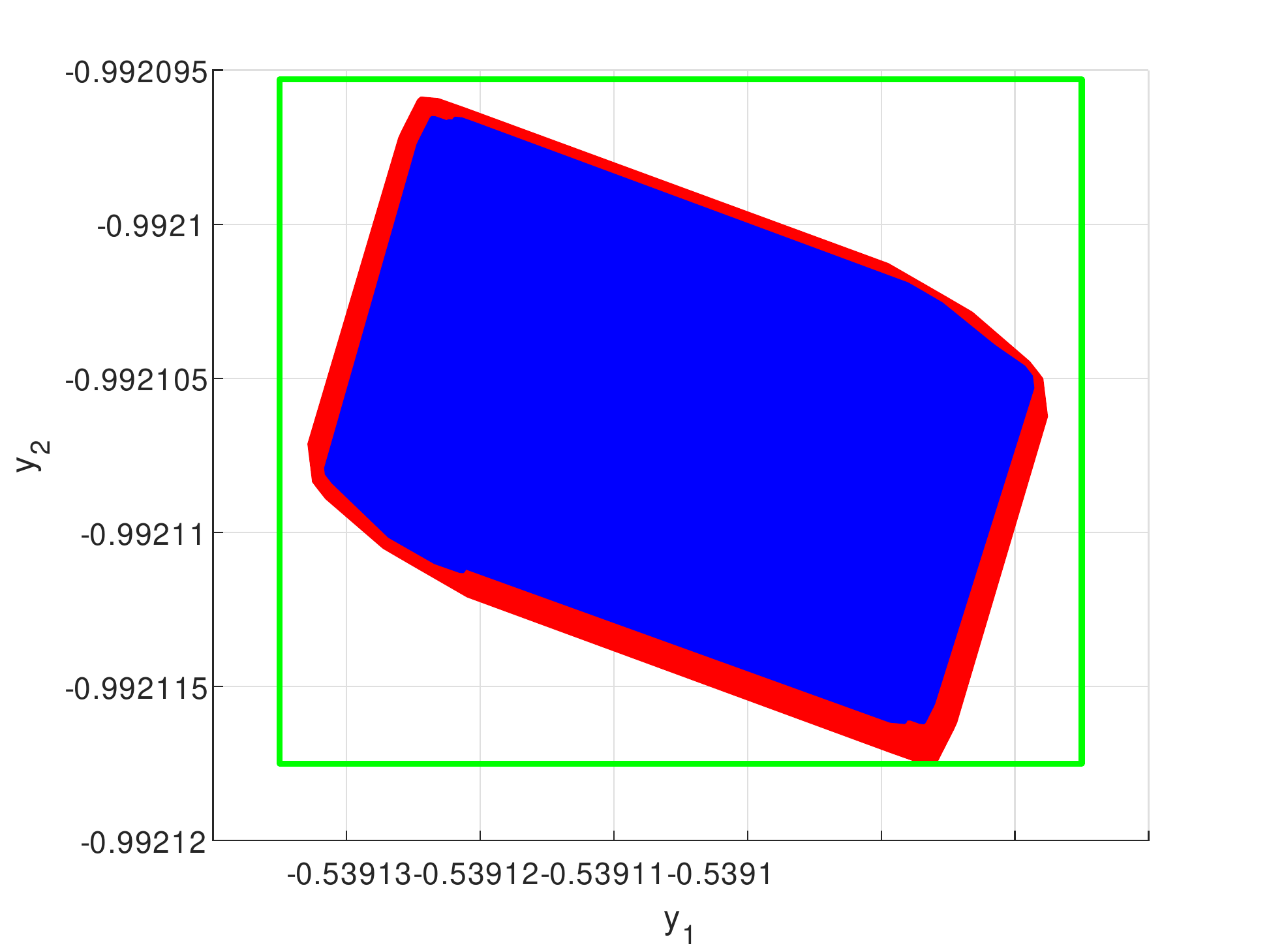}
\end{minipage}%
}%
\subfigure[$y_1-y_3$]{
\begin{minipage}[t]{0.32\linewidth}
\centering
\includegraphics[width=1.6in]{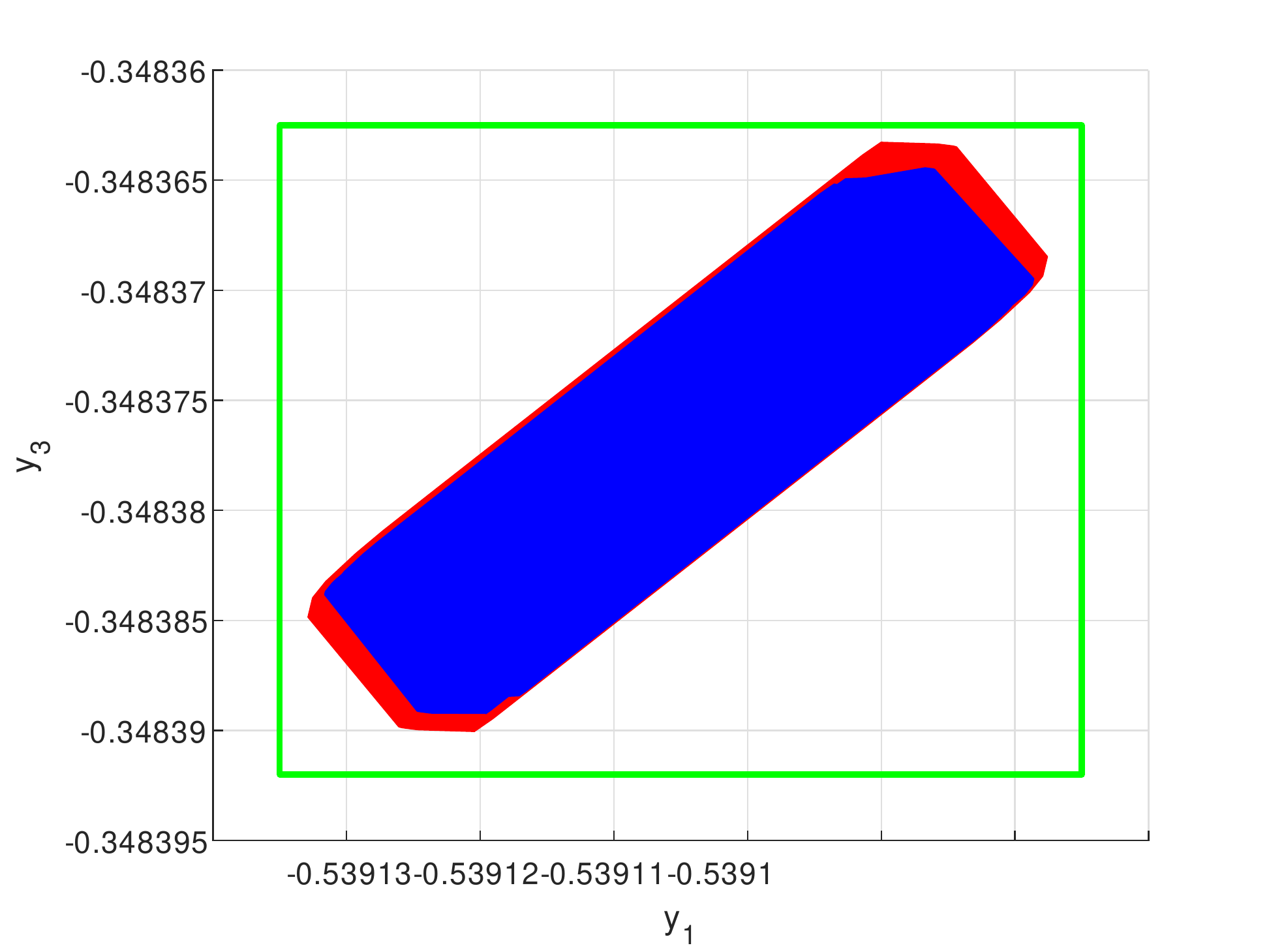}
\end{minipage}%
}%
\subfigure[$y_2-y_3$]{
\begin{minipage}[t]{0.32\linewidth}
\centering
\includegraphics[width=1.6in]{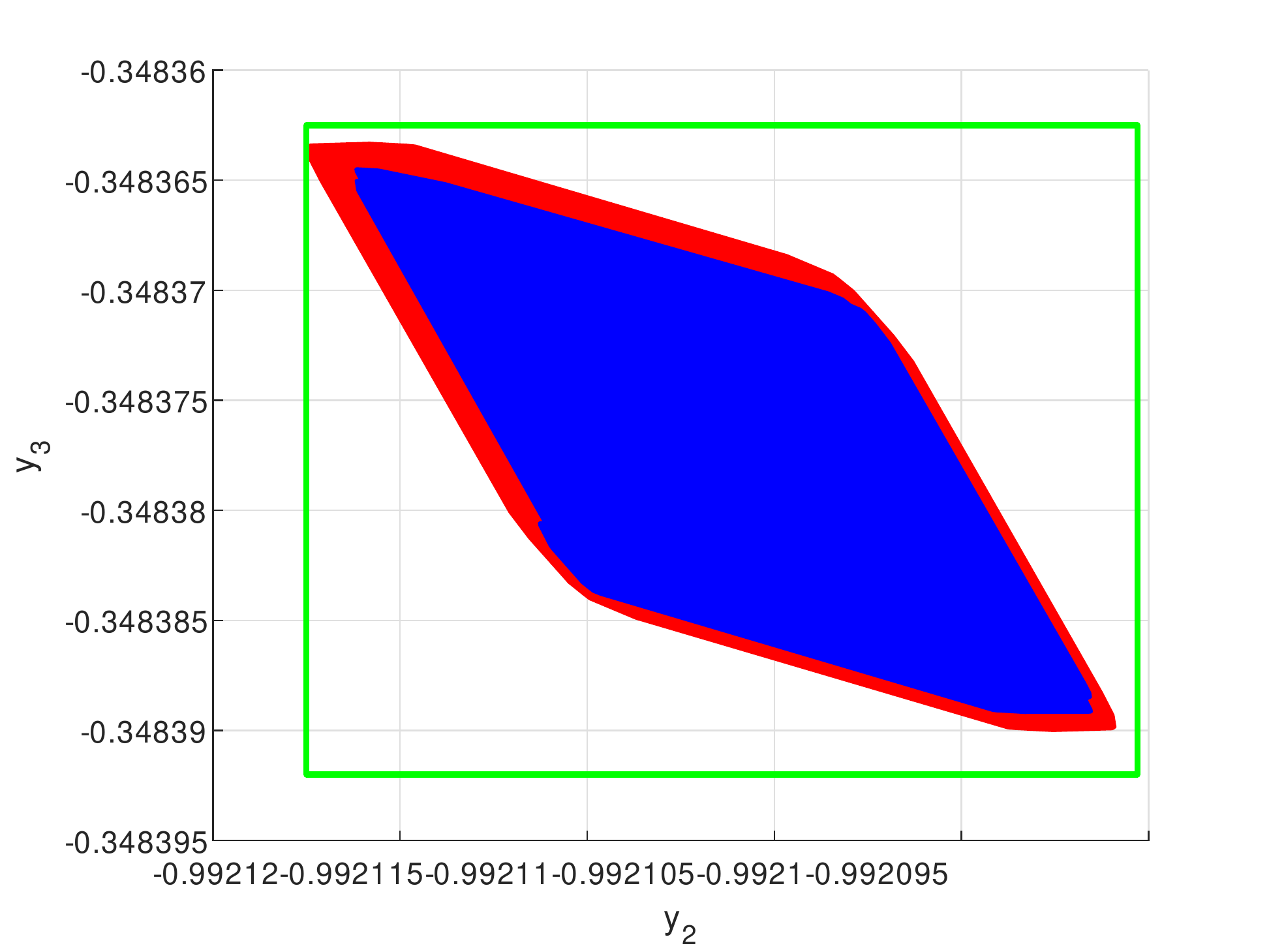}
\end{minipage}
}
\\
$\epsilon=0.450$, \textcolor{blue}{1 $\times$ 6} Vs. \textcolor{red}{1},\textcolor{blue}{Safe}, \textcolor{red}{Safe}
\\
\subfigure[$y_1-y_2$]{
\begin{minipage}[t]{0.32\linewidth}
\centering
\includegraphics[width=1.6in]{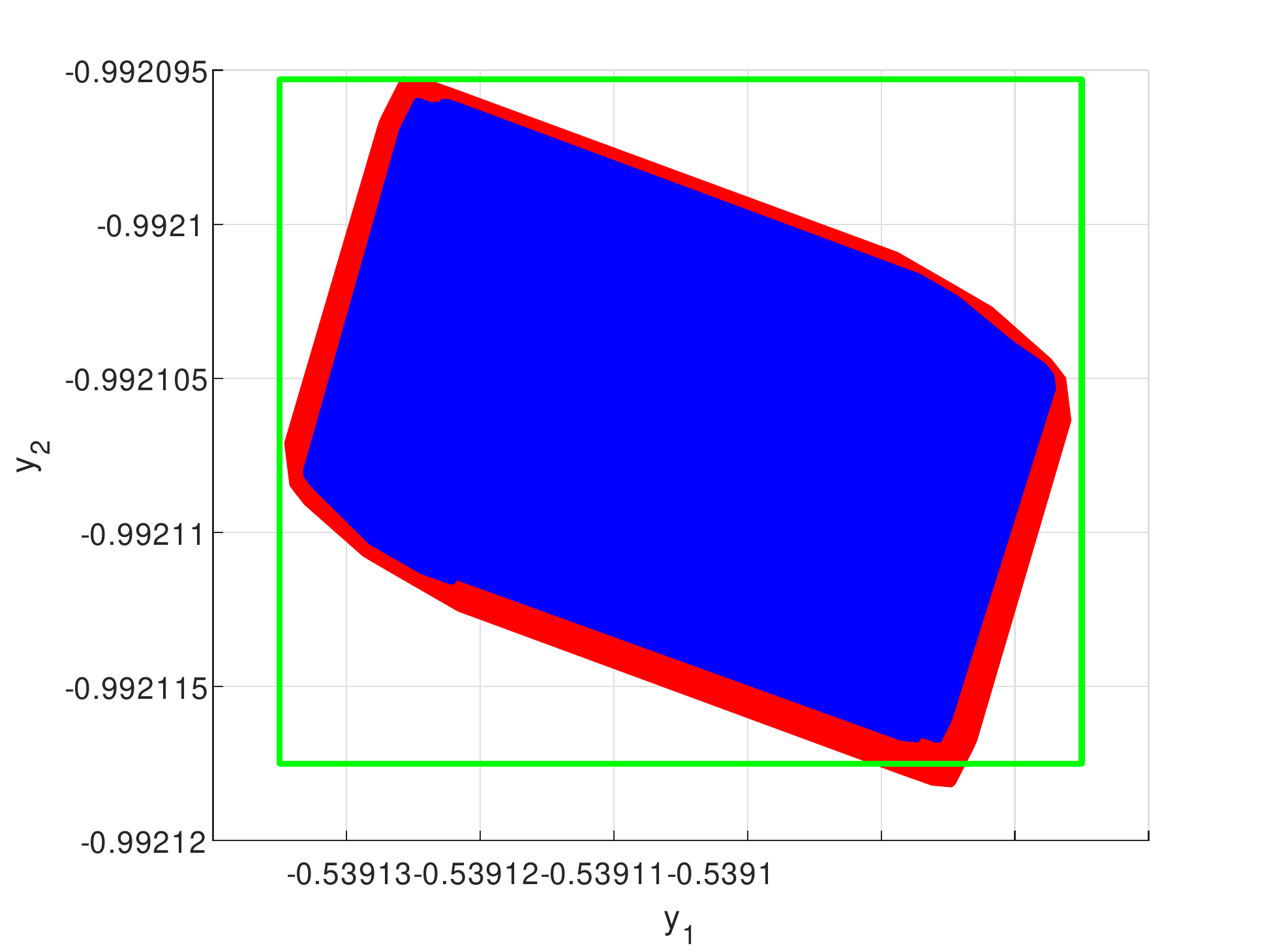}
\label{0.300-11}
\end{minipage}%
}%
\subfigure[$y_1-y_3$]{
\begin{minipage}[t]{0.32\linewidth}
\centering
\includegraphics[width=1.6in]{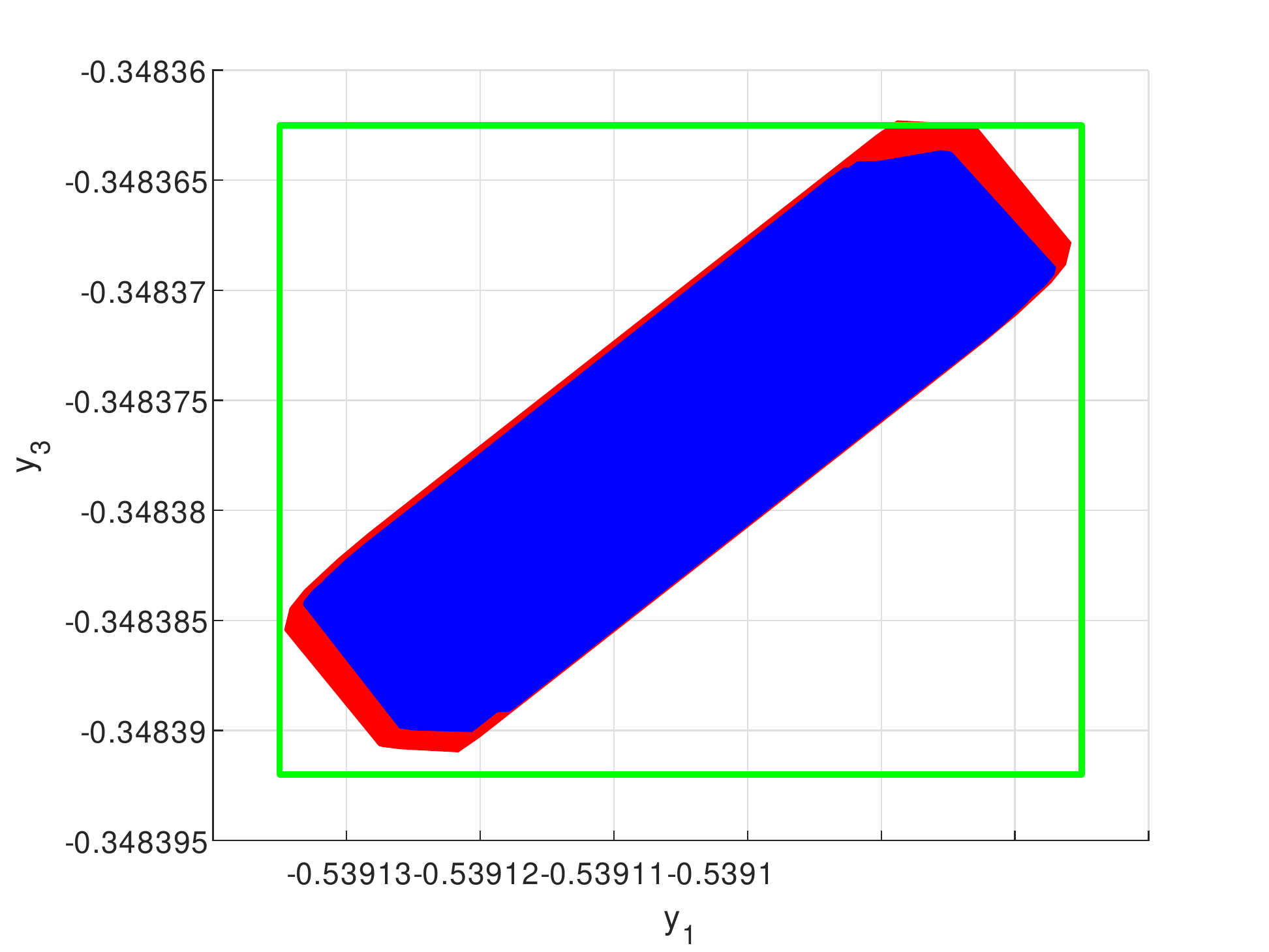}
\label{0.300-21}
\end{minipage}%
}%
\subfigure[$y_2-y_3$.]{
\begin{minipage}[t]{0.32\linewidth}
\centering
\includegraphics[width=1.6in]{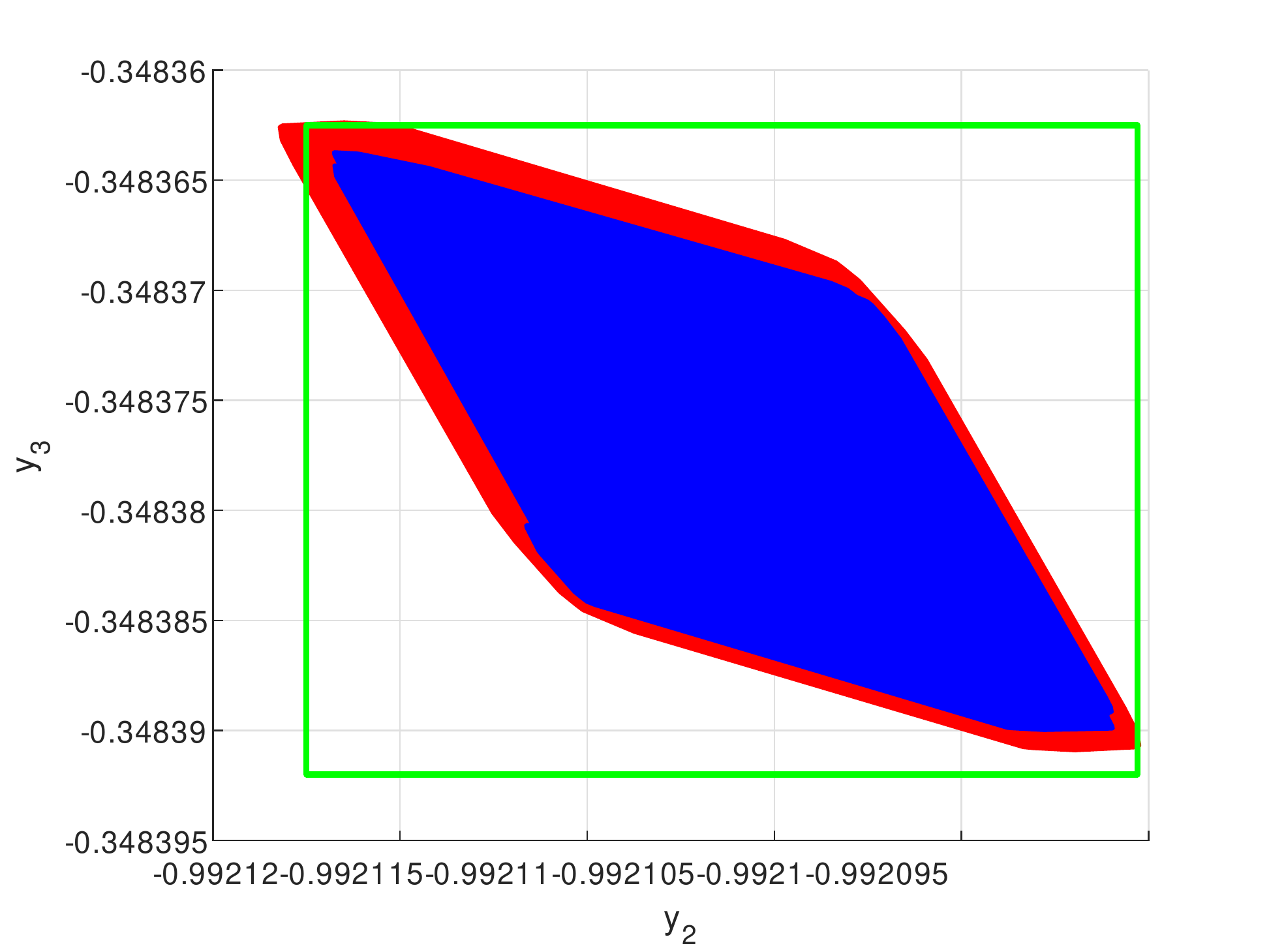}
\label{0.300-31}
\end{minipage}
}
\\
$\epsilon=0.475$, \textcolor{blue}{1 $\times$ 6} Vs. \textcolor{red}{1}, \textcolor{blue}{Safe}, \textcolor{red}{Unknown}
\\
\subfigure[$y_1-y_2$]{
\begin{minipage}[t]{0.3\linewidth}
\centering
\includegraphics[width=1.6in]{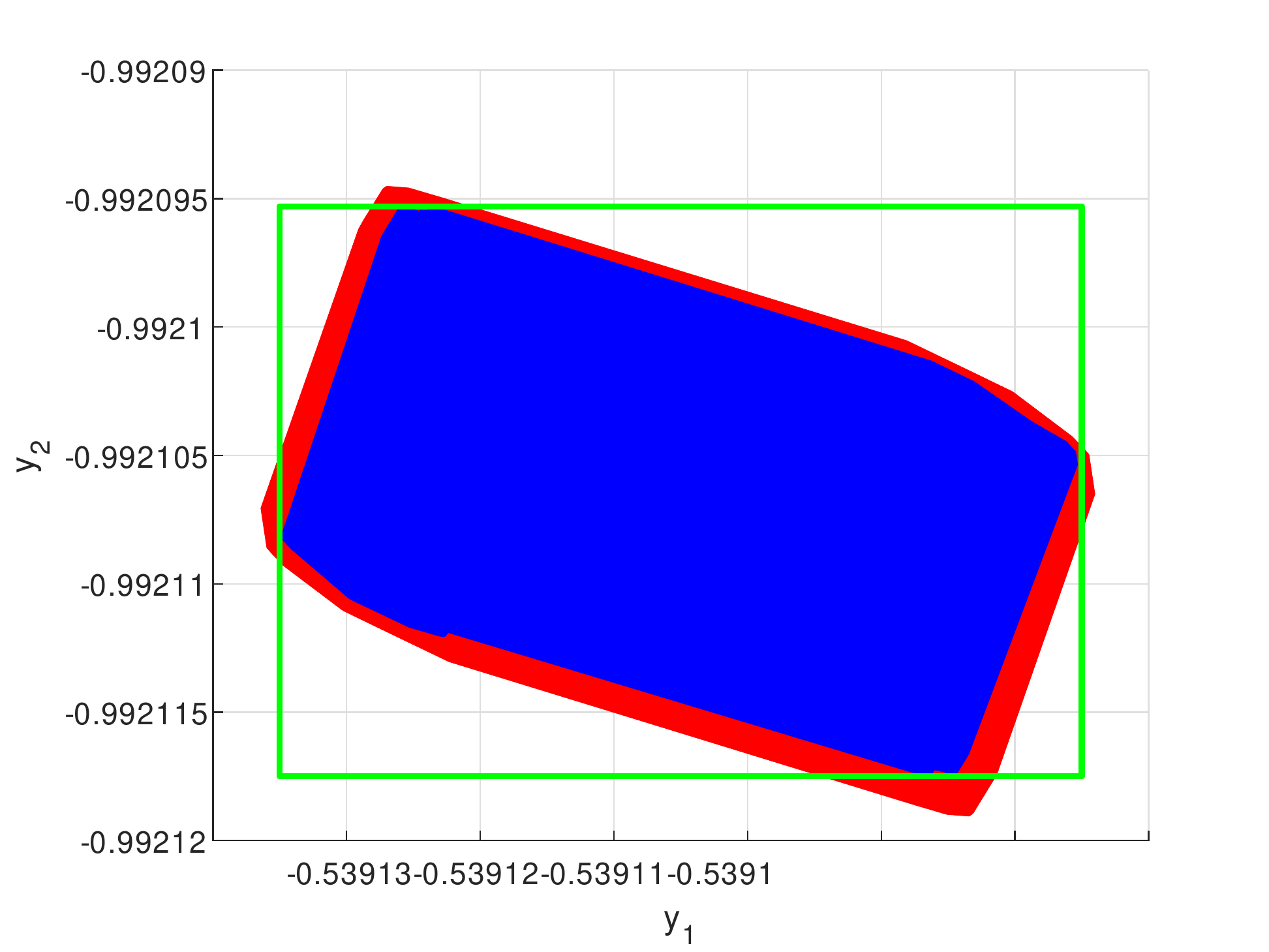}
\label{0.375-11}
\end{minipage}%
}%
\subfigure[$y_1-y_3$]{
\begin{minipage}[t]{0.3\linewidth}
\centering
\includegraphics[width=1.6in]{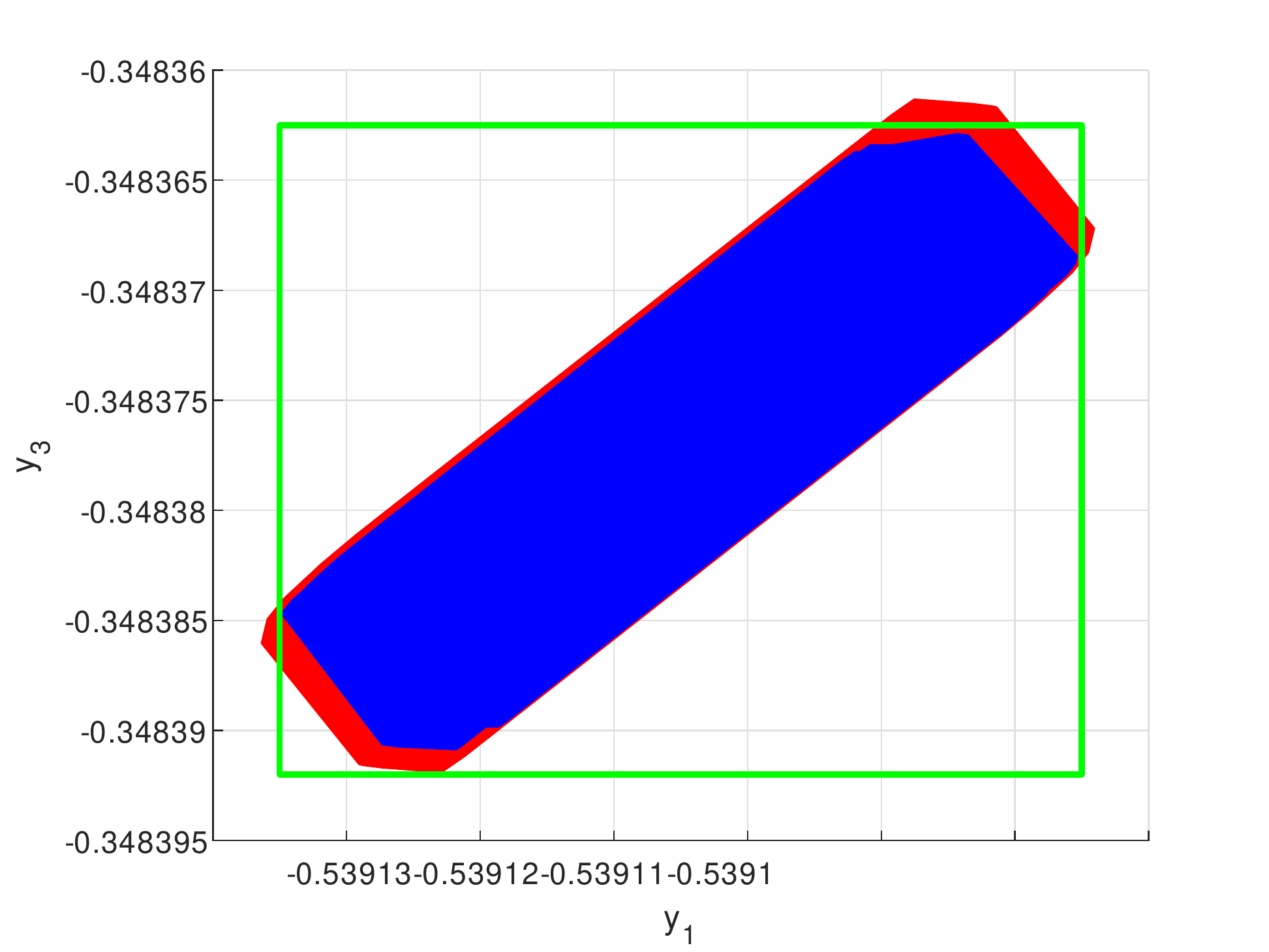}
\label{0.375-21}
\end{minipage}%
}%
\subfigure[$y_2-y_3$]{
\begin{minipage}[t]{0.32\linewidth}
\centering
\includegraphics[width=1.6in]{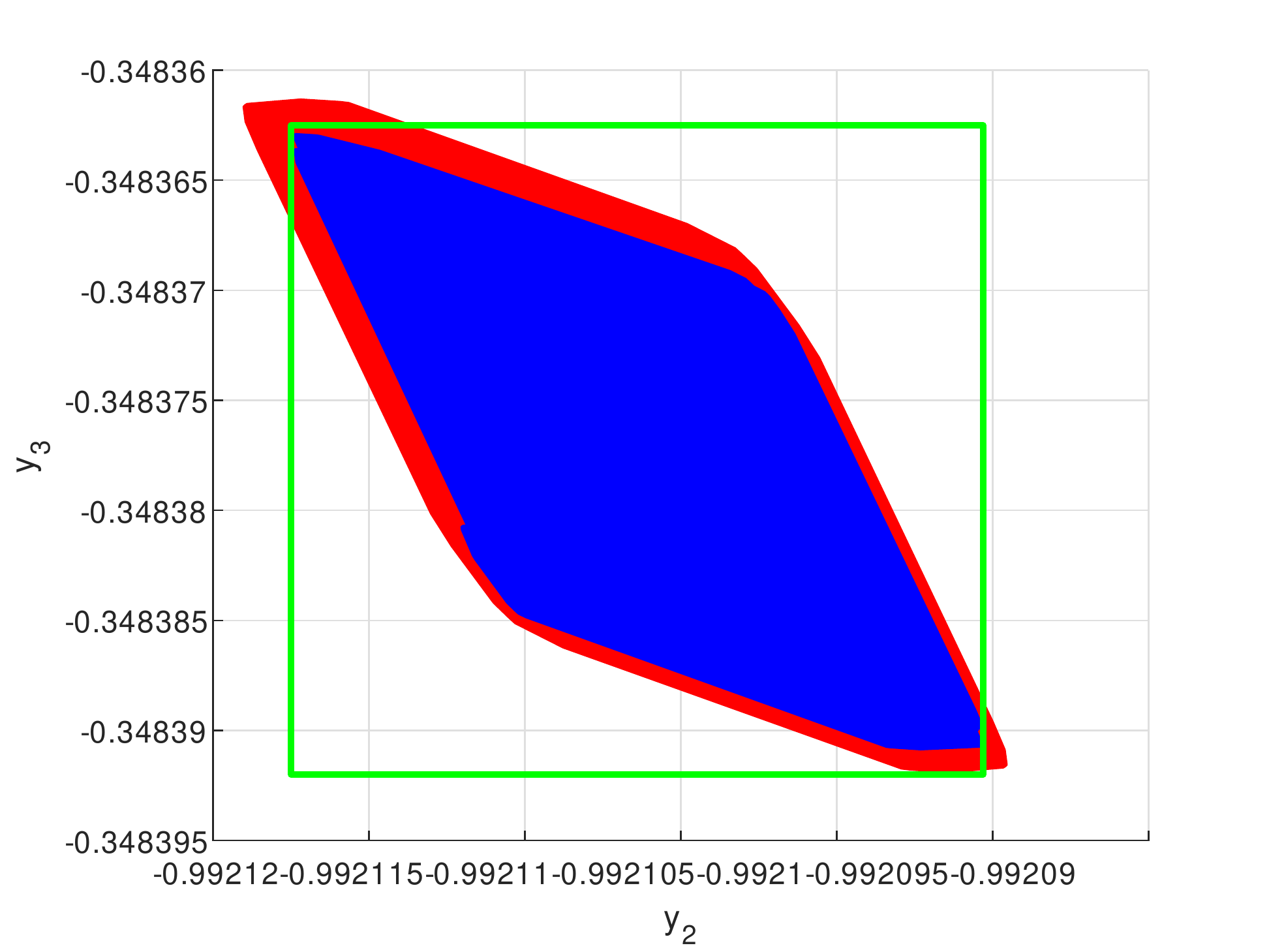}
\label{0.375-31}
\end{minipage}
}
\\
$\epsilon=0.500$,\textcolor{blue}{1$\times$ 6} Vs. \textcolor{red}{1},  \textcolor{blue}{Safe}, \textcolor{red}{Unknown}
\centering
\caption{Safety verification on $\bm{N}_4$.
\textcolor{blue}{$\Omega(\partial \mathcal{X}_{in})$}; \textcolor{red}{$\Omega(\mathcal{X}_{in})$}; \textcolor{green}{$\partial \mathcal{X}_{s}$}}
\label{3-dim inn1111}
\end{figure}

\begin{figure}[htbp]
\centering

\subfigure[$y_1-y_2$]{
\begin{minipage}[t]{0.3\linewidth}
\centering
\includegraphics[width=1.6in]{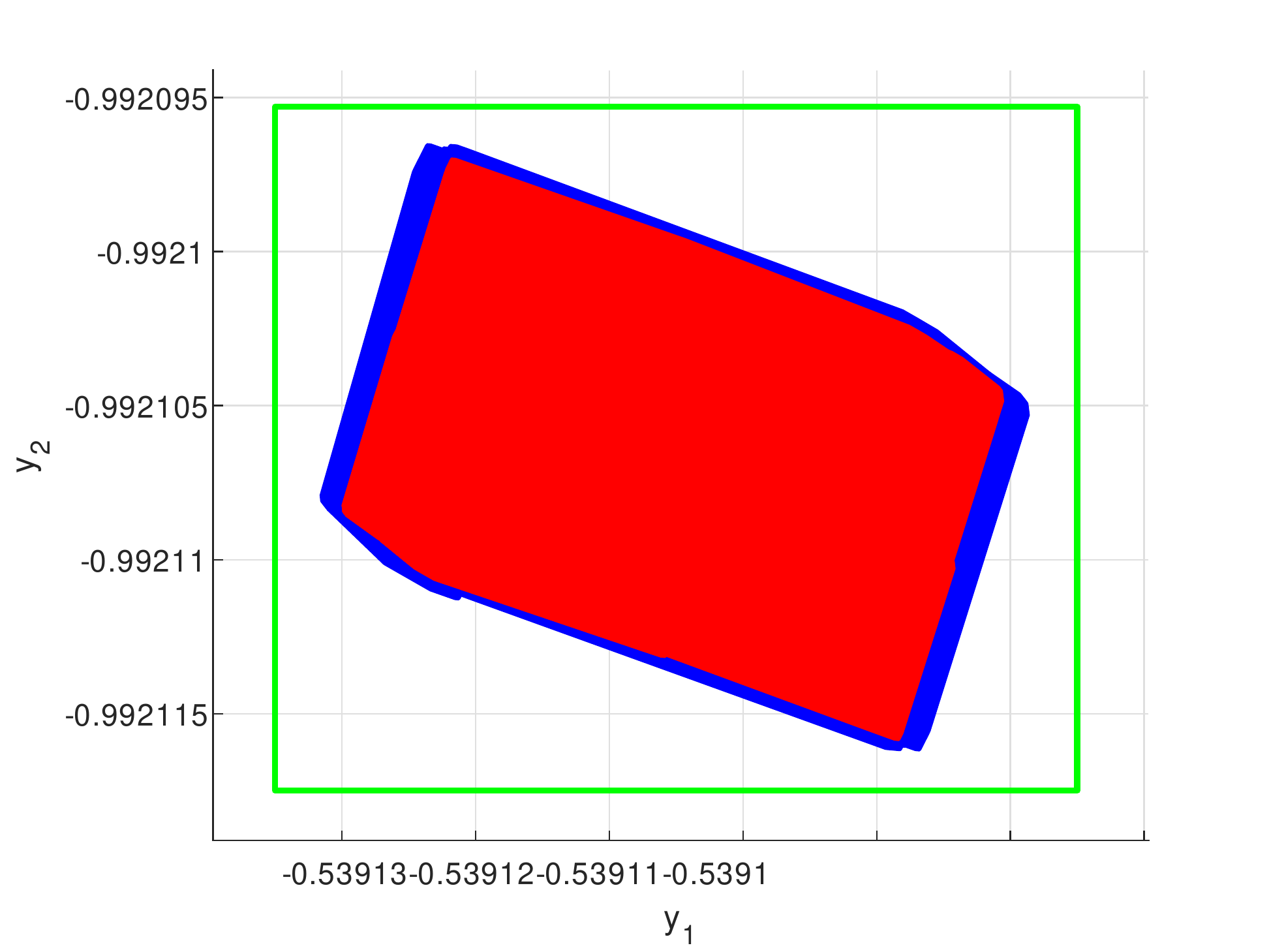}
\end{minipage}%
}%
\subfigure[$y_1-y_3$]{
\begin{minipage}[t]{0.3\linewidth}
\centering
\includegraphics[width=1.6in]{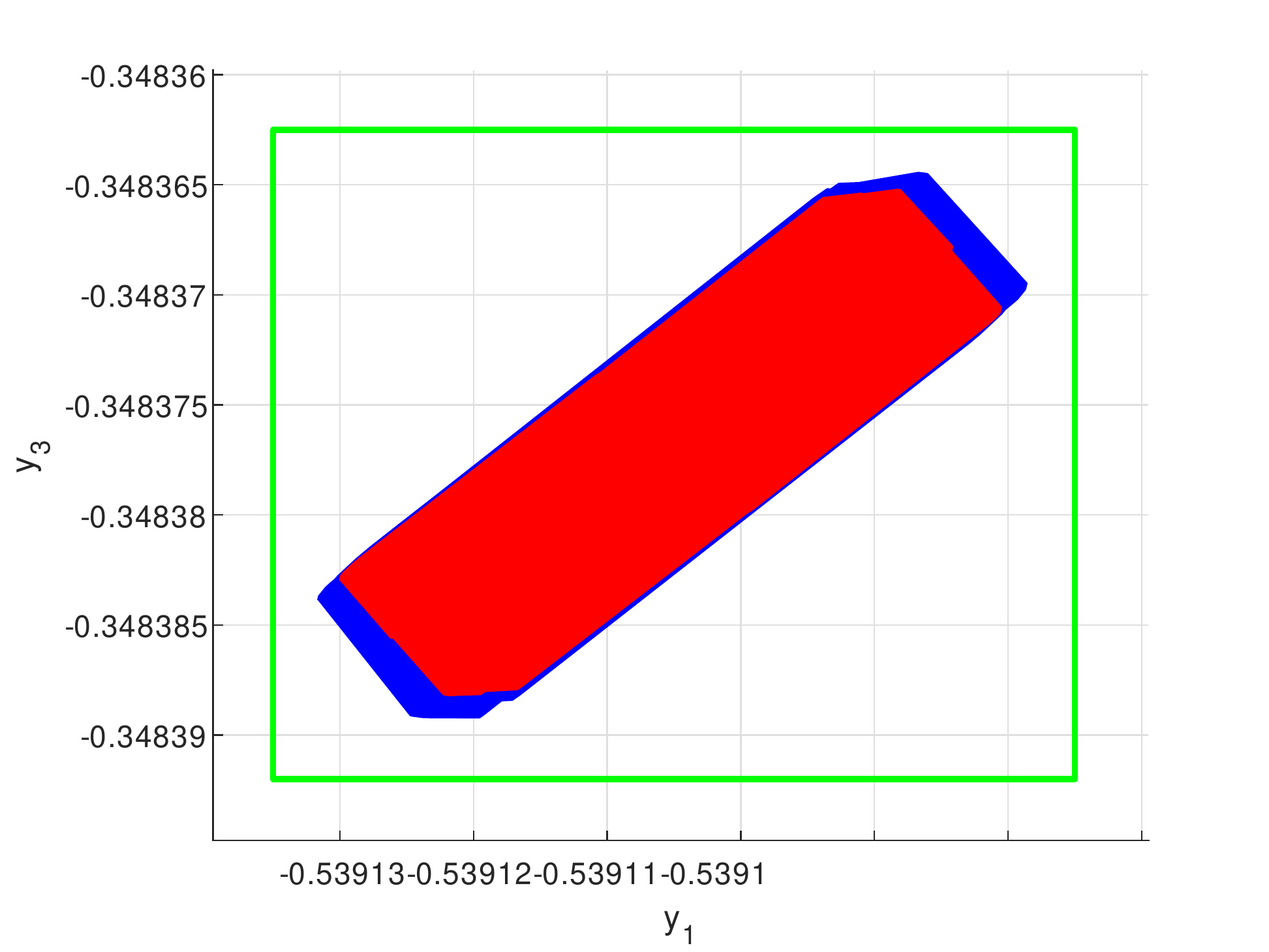}
\end{minipage}%
}%
\subfigure[$y_2-y_3$]{
\begin{minipage}[t]{0.3\linewidth}
\centering
\includegraphics[width=1.6in]{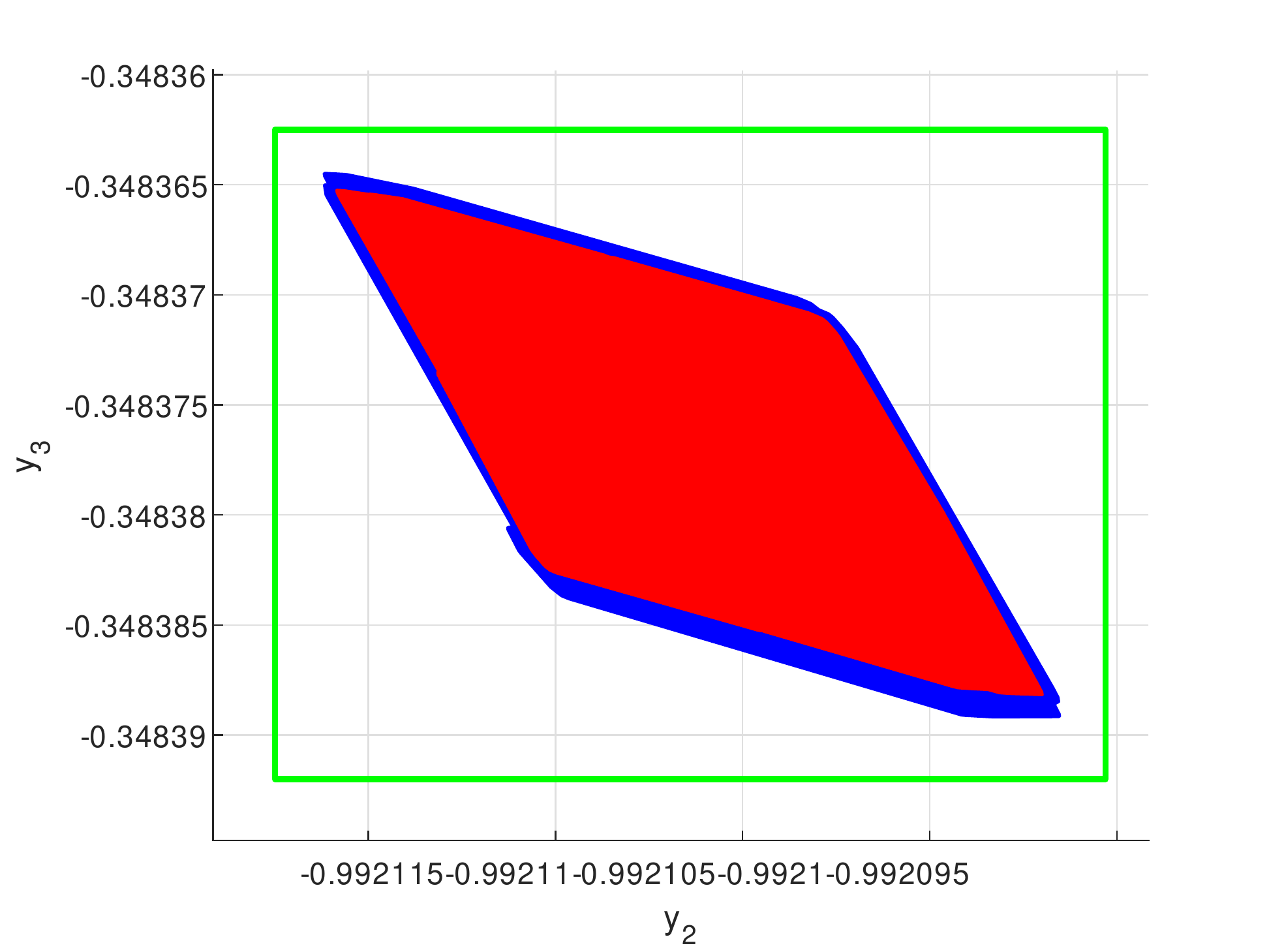}
\end{minipage}
}
\\
$\epsilon=0.450$, \textcolor{blue}{$1\times6$} Vs. \textcolor{red}{$2^3$}, \textcolor{blue}{Safe}, \textcolor{red}{Safe}
\\
\subfigure[$y_1-y_2$]{
\begin{minipage}[t]{0.3\linewidth}
\centering
\includegraphics[width=1.6in]{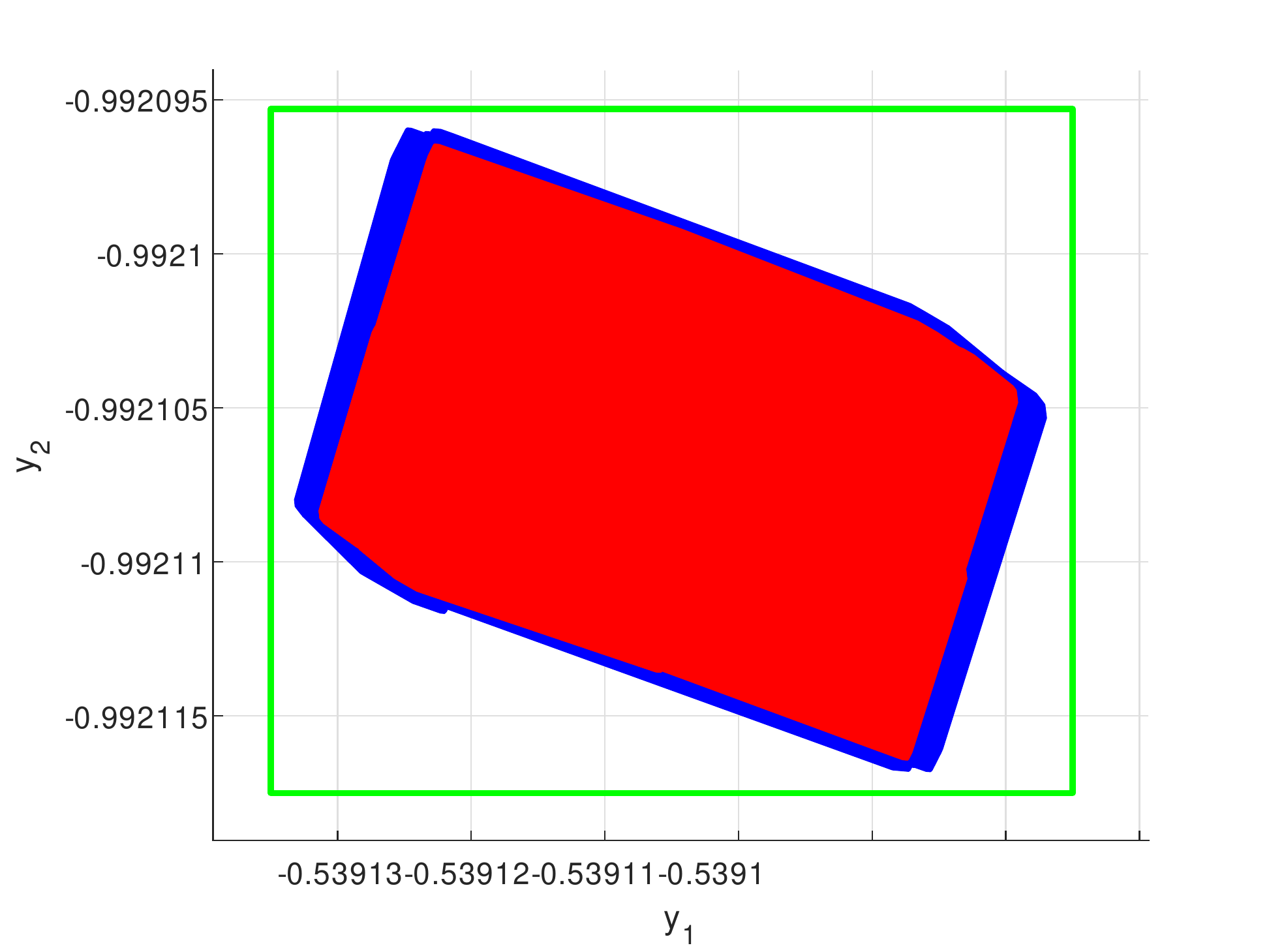}
\label{0.300-1}
\end{minipage}%
}%
\subfigure[$y_1-y_3$]{
\begin{minipage}[t]{0.3\linewidth}
\centering
\includegraphics[width=1.6in]{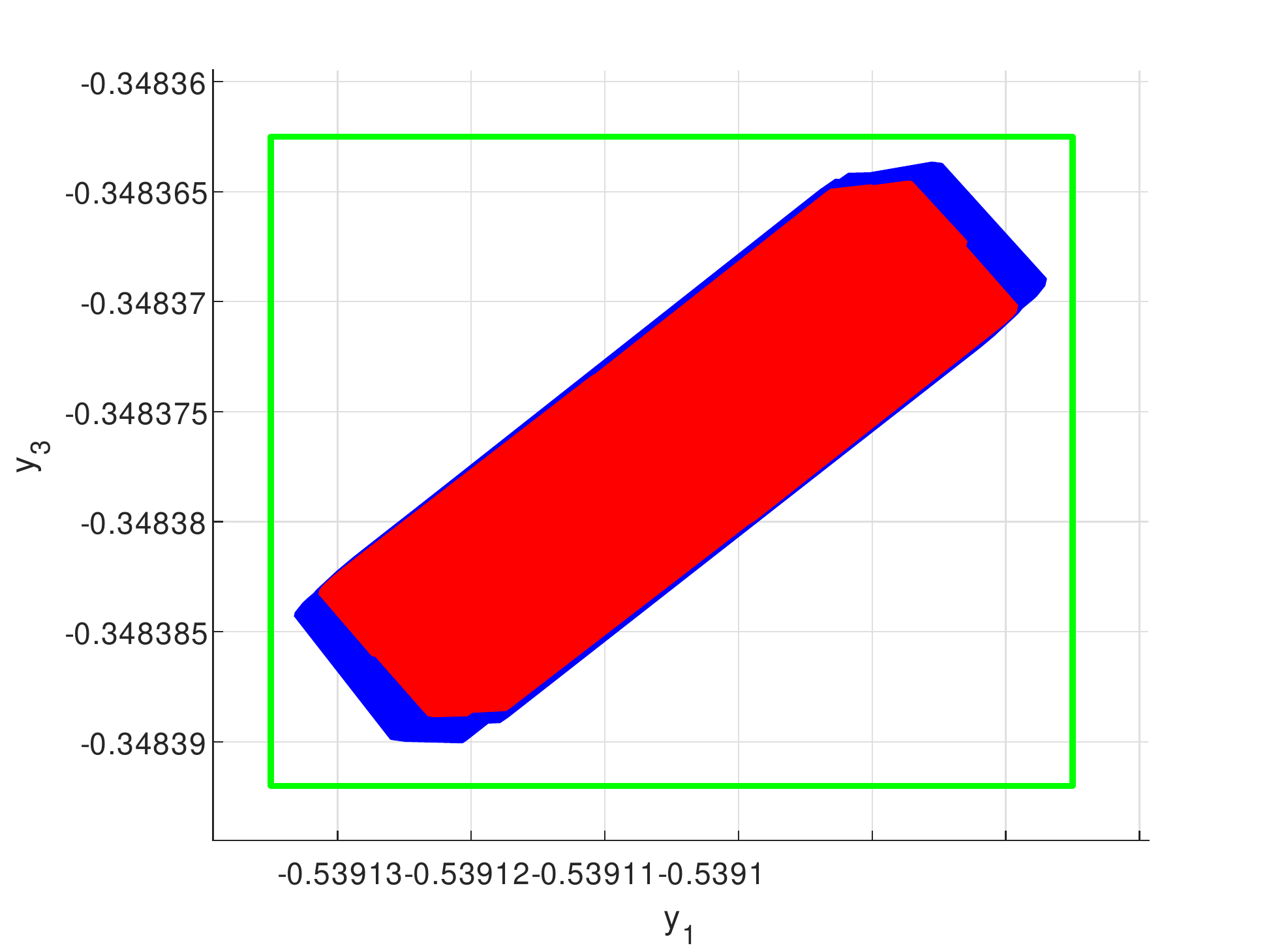}
\label{0.300-2}
\end{minipage}%
}%
\subfigure[$y_2-y_3$.]{
\begin{minipage}[t]{0.3\linewidth}
\centering
\includegraphics[width=1.6in]{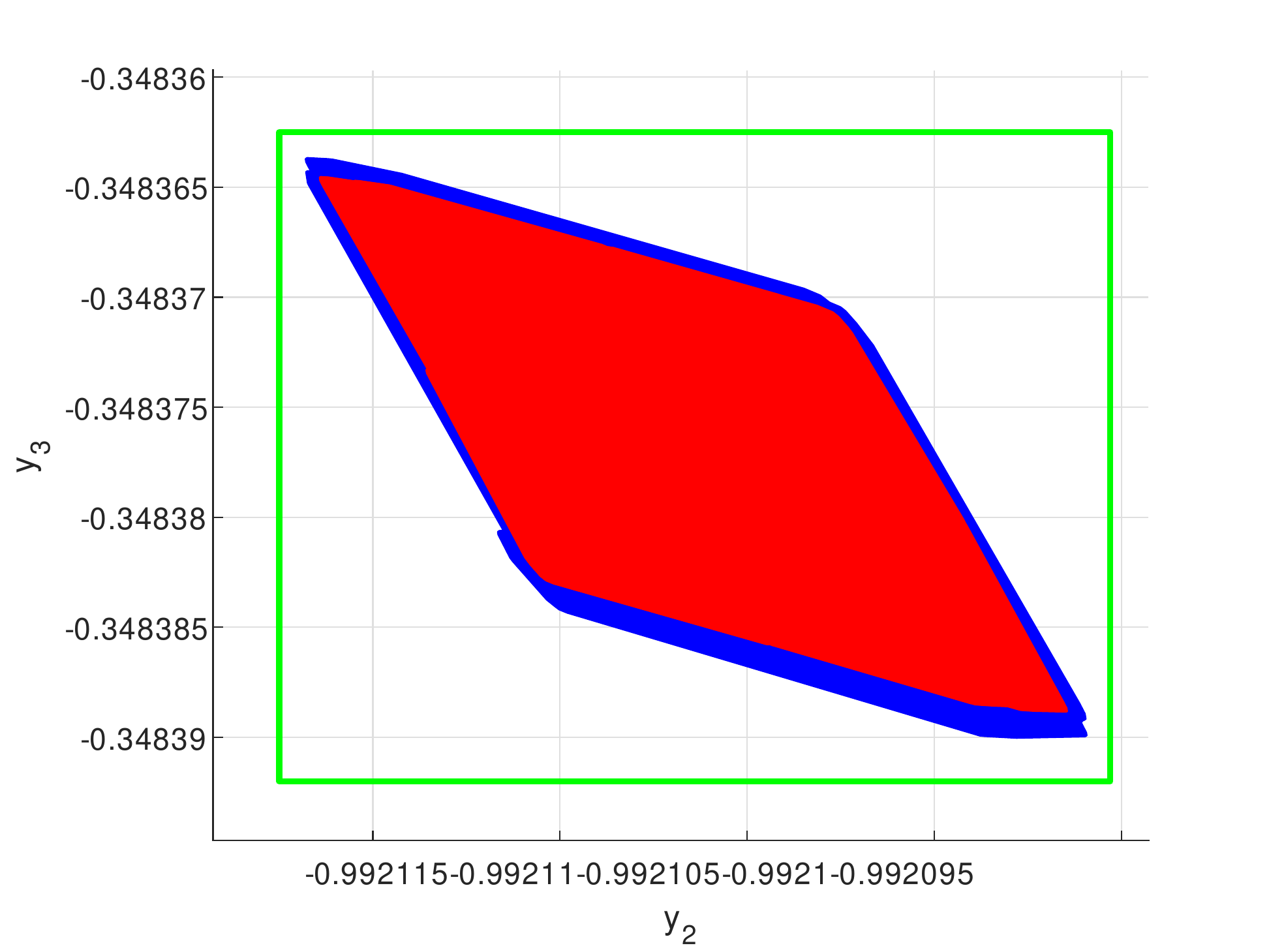}
\label{0.300-3}
\end{minipage}
}
\\
$\epsilon=0.475$, \textcolor{blue}{$1\times6$} Vs. \textcolor{red}{$2^3$}, \textcolor{blue}{Safe}, \textcolor{red}{Safe}
\\
\subfigure[$y_1-y_2$]{
\begin{minipage}[t]{0.3\linewidth}
\centering
\includegraphics[width=1.6in]{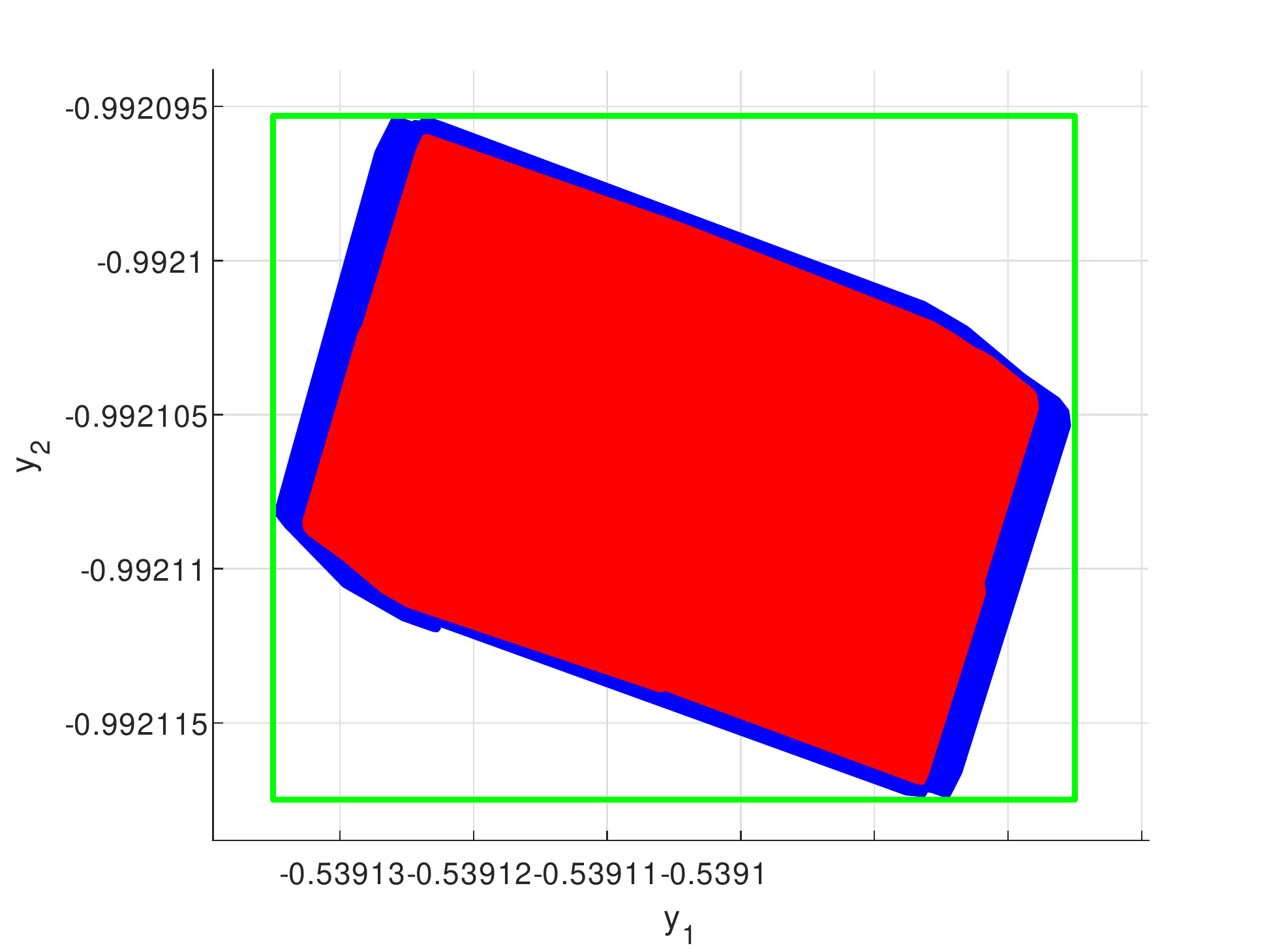}
\label{0.375-1}
\end{minipage}%
}%
\subfigure[$y_1-y_3$]{
\begin{minipage}[t]{0.3\linewidth}
\centering
\includegraphics[width=1.6in]{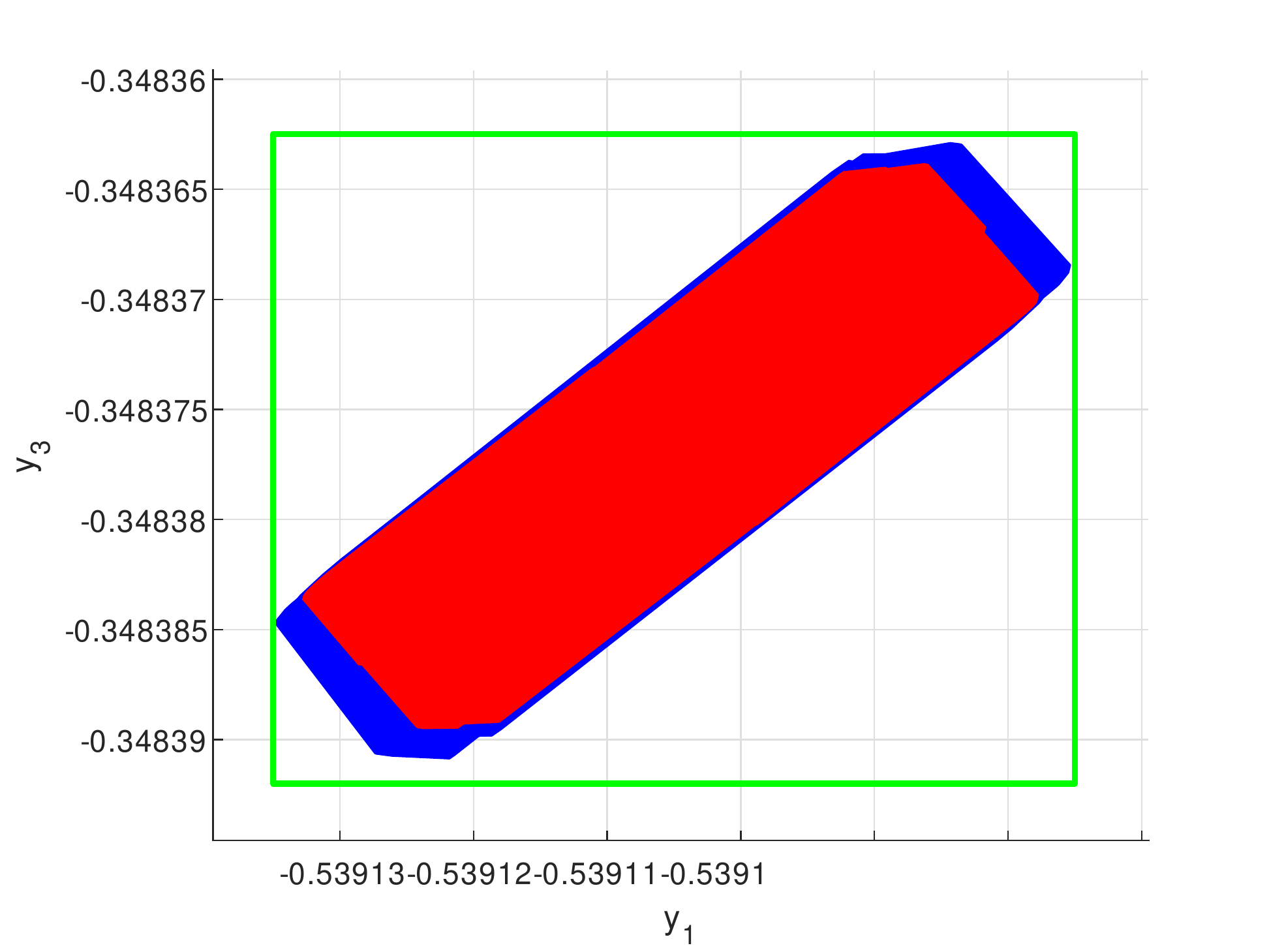}
\label{0.375-2}
\end{minipage}%
}%
\subfigure[$y_2-y_3$]{
\begin{minipage}[t]{0.3\linewidth}
\centering
\includegraphics[width=1.6in]{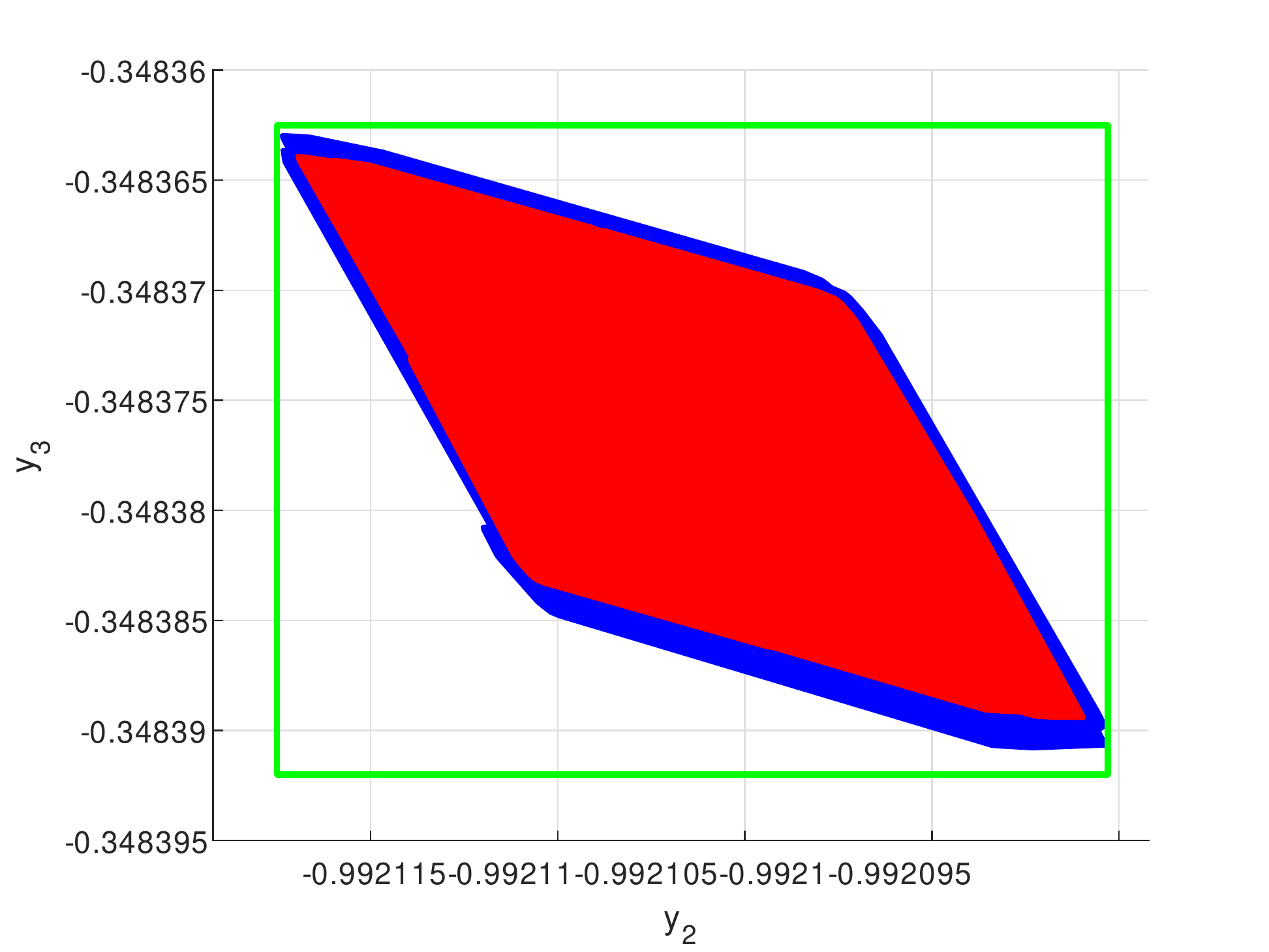}
\label{0.375-3}
\end{minipage}
}
\\
$\epsilon=0.500$, \textcolor{blue}{$1\times6$} Vs. \textcolor{red}{$2^3$}, \textcolor{blue}{Safe}, \textcolor{red}{Safe}
\centering
\caption{Safety verification on $\bm{N}_4$.
\textcolor{blue}{$\Omega(\partial \mathcal{X}_{in})$}; \textcolor{red}{$\Omega(\mathcal{X}_{in})$}; \textcolor{green}{$\partial \mathcal{X}_{s}$}}
\label{3-dim inn1}

\end{figure}

The output reachable sets from our set-boundary reachability method and the entire set based method are displayed in blue and red in Fig. \ref{2-dim inn} and \ref{3-dim inn1}, respectively. Further, we also show the exact output reachable sets estimated via the Monte-Carlo simulation method in Fig. \ref{2-dim inn}, which corresponds to the yellow regions. The visualized results show that the set-boundary reachability method can generate tighter output reachable sets than the entire set based method. As a result, our set-boundary reachability method can verify the safety properties successfully when $\epsilon \in \{0.375,0.400,0.425,0.450\}$, as shown in  Fig. \ref{0.125}-\ref{0.425}. In contrast, the entire set based method fails for cases with large input sets, demonstrated in Fig. \ref{0.25}-\ref{0.425}, since the computed output reachable sets are not included in safe sets. Furthermore, when the safety property cannot be verified with respect to the input set $[-1.0,1.0]^2$, we impose the uniform partition operator on both the entire input set and its boundary. When the boundary is divided into 20 equal subsets, the safety verification can be verified using the set-boundary reachability method (zoomed in, Fig. \ref{-11_2}) with the total verification time of 85.7886 seconds. However, when the entire input set is used, it should be partitioned into 64 equal subsets (zoomed in, Fig. \ref{-11_3}) and the total verification time is 306.5307 seconds. Consequently, the computation time from the set-boundary reachability method is reduced by 72.0\%, as opposed to the entire input set based method. These times are also listed in Table \ref{tab:computation time}.

\begin{table}[hbt]
    \centering
     \caption{Time consumption of safety verification on $\bm{N}_4$}
    \setlength{\tabcolsep}{3mm}{
    \begin{tabular}{*{7}{c}}
  \toprule
  \multirow{2}*{\textbf{Partition}} & \multicolumn{2}{c}{$\bm{\epsilon=0.450}$} & \multicolumn{2}{c}{$\bm{\epsilon=0.475}$} & \multicolumn{2}{c}{$\bm{\epsilon=0.500}$} \\
  \cmidrule(lr){2-3}\cmidrule(lr){4-5}\cmidrule(lr){6-7}
  &\textbf{Entire set} & \textbf{Boundary} &\textbf{Entire set} & \textbf{Boundary} & \textbf{Entire set} & \textbf{Boundary} \\
  \midrule
 1 & 0.7215&	\textbf{\textcolor{blue}{4.1115}}&	0.7246   & \textbf{\textcolor{blue}{4.2466}} &	0.7353 &	\textbf{\textcolor{blue}{4.3676}} \\
 2 & \textbf{\textcolor{red}{5.7110}}&	\textcolor{gray}{-}&	\textbf{\textcolor{red}{5.7957}}  & \textcolor{gray}{-} &	\textbf{\textcolor{red}{5.9281}} &	\textcolor{gray}{-} \\
  \midrule
 \textbf{Total} & \textbf{6.4325} & \textbf{4.1115} & \textbf{6.5203} & \textbf{4.2466}& \textbf{6.6634} & \textbf{4.3676}\\
  \bottomrule
\end{tabular}}
\label{tab:computation time N24}
\end{table}

For $\bm{N}_4$, the entire set based method fails to verify the safety property with respect to all the input sets, whose output reachable sets are shown in Fig. \ref{3-dim inn1111}. It succeeds with partitioning each input set into $2^3$ equal subsets, while the set-boundary reachability method succeeds without partitioning, as shown in Fig. \ref{3-dim inn1}.  It also can be observed from Table \ref{tab:computation time N24}, which shows the time consumption for safety verification on $\bm{N}_4$, that our set-boundary reachability method reduces the time consumption by around 35\% in all the cases.

\subsubsection{Experiments on General Feedforward NNs}

 When the homeomorphism property cannot be assured with respect to the given input region, our method is also able to facilitate the extraction of subsets from the input region for safety verification, as done in Algorithm \ref{alg: iNNs1}. 
In this subsection, we experiment on a non-invertible NN $\bm{N}_5$, which shares a similar structure with $\bm{N}_3$, with 3 hidden layers. The input set $\mathcal{X}_{in}$ and safe set $\mathcal{X}_s$ are $[-9.5, -9]\times [9.25, 9.5]$ and $[0.16294,0.16331]\times[-0.1393,-0.13855]$, respectively. For verifying the safety property successfully, the entire set based method implemented on the tool OCNNV has to divide the input set into 50 equal subsets and the time consumed is about 47.3114 seconds. The corresponding output reachable set is displayed in Fig.\ref{result} with the red region, which also shows the boundary of the safe region in green. Then, based on the tool OCNNV, we follow the computational procedure in Algorithm \ref{alg: iNNs1} to verify the safety property. The subset $\mathcal{A}=[-9.4,-9.05]\times [9.3,9.45] \cup [-9.45,-9.4]\times [9.35,9.45]$ rendering the NN homeomorphic is visualized in  Fig. \ref{initial} with the orange region, and the subset $\overline{\mathcal{X}_{in}\setminus \mathcal{A}}$ is shown with the blue region in Fig. \ref{initial}, which covers only 54\% of the initial input set and of which the corresponding output reachable set is illustrated in Fig.\ref{result} with the blue region. It is noting to point out that the blue region overlaps with the  boundary of the red region exactly. In this verification computation, the homeomorphism analysis takes around 10.3737  seconds and the output reachable set computations take 25.2487 seconds, totaling 35.6224 seconds. Therefore, our method reduces the time consumption of verification by 24.7\%, compared with the entire set based method.

\begin{figure}[htbp]
\centering
\subfigure[$\mathcal{X}_{in}$: \textcolor{orange}{$\mathcal{A}$} $\cup $\textcolor{blue}{$\overline{\mathcal{X}_{in}\setminus  \mathcal{A}}$}]{
\begin{minipage}[t]{0.5\linewidth}
\centering
\includegraphics[width=2in]{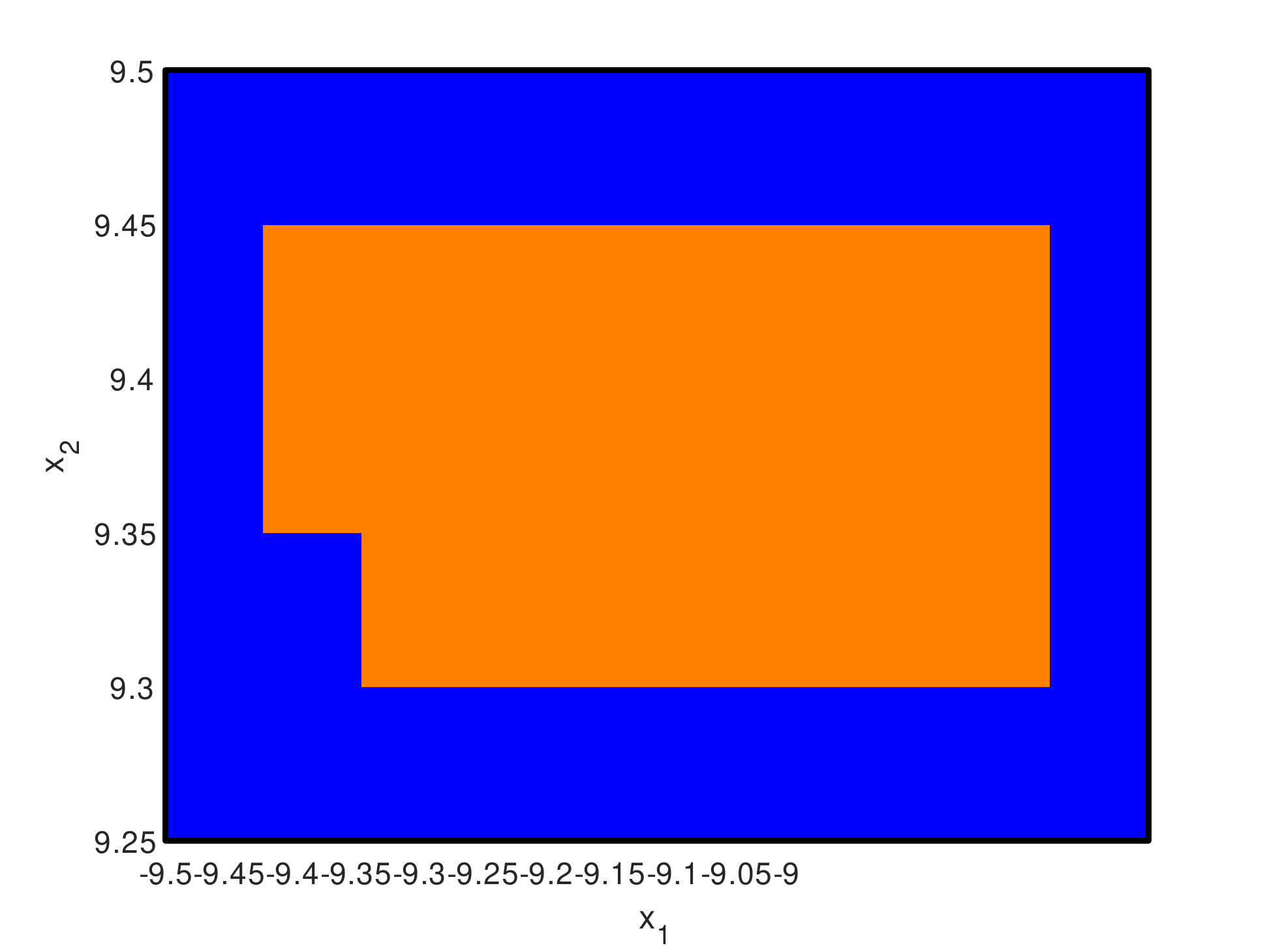}
\label{initial}
\end{minipage}%
}%
\centering
\subfigure[\textcolor{blue}{Safe}, \textcolor{red}{Safe}.
\textcolor{blue}{$\Omega(\overline{\mathcal{X}_{in}\setminus  \mathcal{A}})$}; \textcolor{red}{$\Omega(\mathcal{X}_{in})$}; \textcolor{green}{$\partial \mathcal{X}_{s}$}]
{
\begin{minipage}[t]{0.5\linewidth}
\centering
\includegraphics[width=2in]{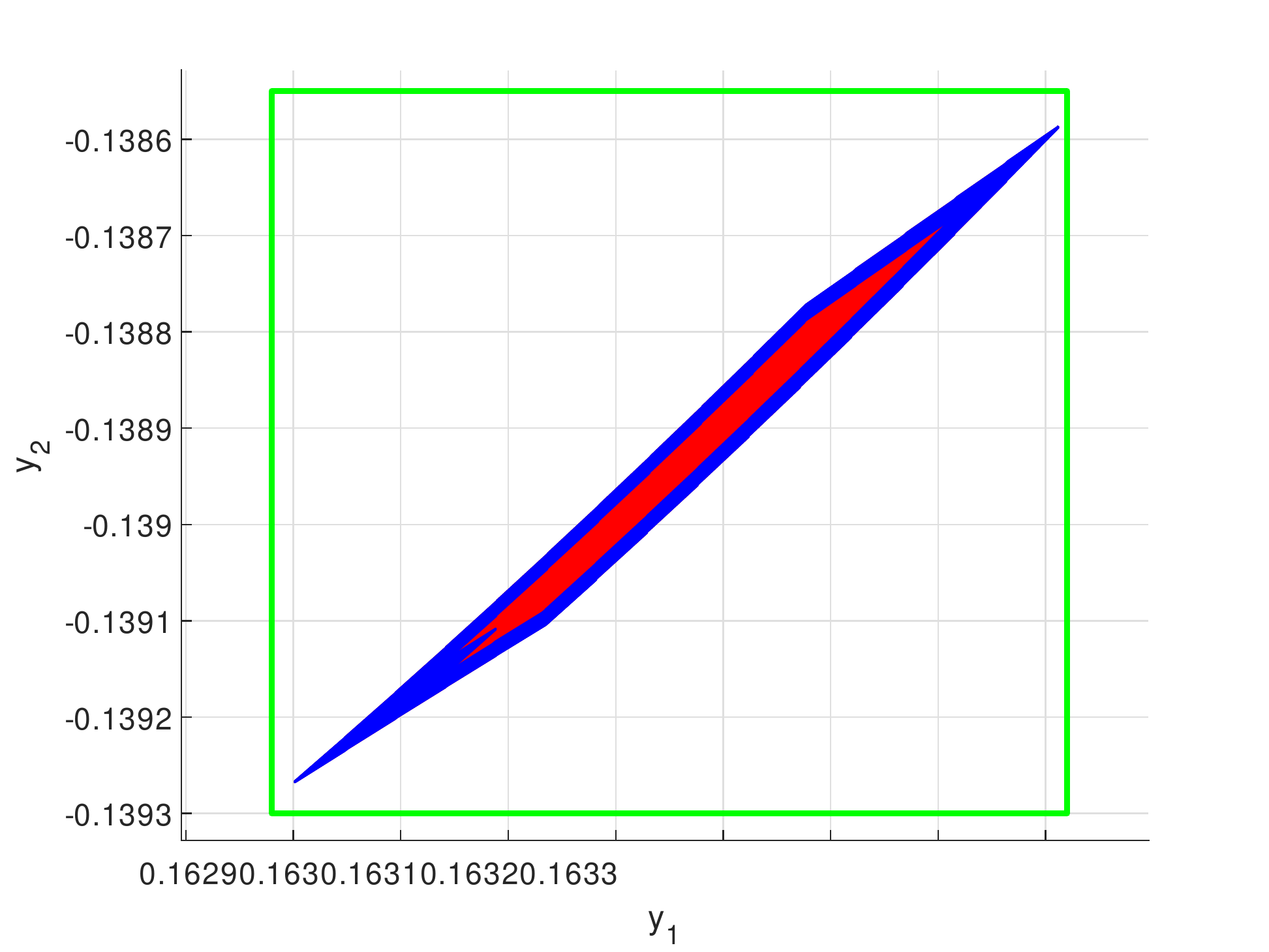}
\label{result}
\end{minipage}%
}%
\centering
\caption{Safety verification on $\bm{N}_5$}
\label{2-dim non-inn}

\end{figure}

\subsection{Experiments on safety verification with open maps}

In this subsection, we carry out the safety verification with open maps, compared with widely utilized neural network verification tools (or systems), that is IBP, DeepZ \cite{singh2018fast} and Verisig 2.0 \cite{ivanov2021verisig}, all of which are entire set based reachability methods. Recall that IBP and DeepZ compute an over-approximation of the reachable set based on interval and zonotope domains and Verisig 2.0 works with Taylor models. For simplicity, on a specific reachable set computation method \emph{M}, we denote \emph{M-O} as the reachable set computation method based on open maps, for example, \emph{DeepZ} and \emph{DeepZ-O}. Several NNs, $\bm{N}_6-\bm{N}_{11}$,  with different sizes are used to demonstrate the performance of the proposed reachability method by open maps, whose architectures are displayed in Table \ref{nn block}.  All the parameters of the NNs are generated randomly and they are tested for establishing open maps. By the way, during the generation process of NNs, we found that almost all the generated blocks satisfy the requirements of open maps. Besides, unlike boundary analysis based on homeomorphisms, the experimental NNs we utilized here own relatively large (output) dimensions. Therefore, in this subsection, we no longer explicitly show the safety regions, but focus more on comparing the ranges of output reachable sets computed from existing entire set based reachability methods and set-boundary analysis based ones. Generally, a smaller reachable set means a higher probability of passing the safety verification.

\begin{table}[htbp]
  \centering
  \caption{Structures of utilized neural networks.}
    \setlength{\tabcolsep}{5mm}{ \begin{tabular}{c|c|c|c}
    \Xhline{1.2pt}
    \textbf{NN} & \textbf{Structure} & \textbf{NN} & \textbf{Structure} \\
    \hline
    $\bm{N_6}$    & $\bm{4}$-4-3-3-$\bm{2}$  &
    $\bm{N_7}$      & $\bm{5}$-4-4-3-3-$\bm{2}$ \\
    \hline
    $\bm{N_8}$     & $\bm{50}$-45-40-35-30-25-20-15-$\bm{10}$ &
    $\bm{N_9}$     & \quad $\bm{50}$-46-42-38-34-30-26-22-18-14-$\bm{10}$\\
    \hline
    $\bm{N_{10}}$     & $\bm{80}$-76-72-68-64-$\bm{60}$  &
     $\bm{N_{11}}$    & $\bm{80}$-77-74-71-68-65-$\bm{62}$ \\
    \Xhline{1.2pt}
    \end{tabular}}
  \label{nn block}%
\end{table}%

For the reason that reachable sets computed by IBP tend to explode in high dimension and Verisig 2.0 is used in neural network controller systems (NNCS) where the NN controller has  a small number of neurons in each layer, that is to say, the over-approximation of the reachable sets with high dimensions resulted from IBP are overly pessimistic, with low accuracy, and those of Verisig 2.0 are prohibitively time-consuming. Consequently,  we only compared IBP, Verisig 2.0, and our set-boundary based reachability methods on small size NNs $\bm{N_6}$ and $\bm{N_7}$. For DeepZ, we make comparisons with all the NNs due to its well-performed scalability, i.e., $\bm{N_6}-\bm{N_{11}}$.

For each NN, we evaluate the entire set based reachability methods and the set-boundary based ones with varied perturbations, i.e., $\mathcal{X}_{in} = [-\epsilon, \epsilon]^{n}$, where $n$ is the  input dimension.  
For presentation clarity, we compute  the interval hulls of the superset outputted by IBP, DeepZ or Verisig 2.0 and those resulting from the proposed  boundary analysis based on open maps for subsequent comparisons. We take the interval range of each dimension as the benchmark and the range of  interval $[lb, ub]$ is denoted by $|ub - lb|$. The minimum, maximum and mean of the interval range ratios between the interval hulls of  the set-boundary based reachability methods and the entire based ones are compared comprehensively.

\begin{figure}[tbp]
\centering
\subfigure[$\epsilon=0.1$, \textcolor{blue}{IBP} Vs. \textcolor{red}{IBP-O}]{
\begin{minipage}[t]{0.32\linewidth}
\centering
\includegraphics[width=1.6in]{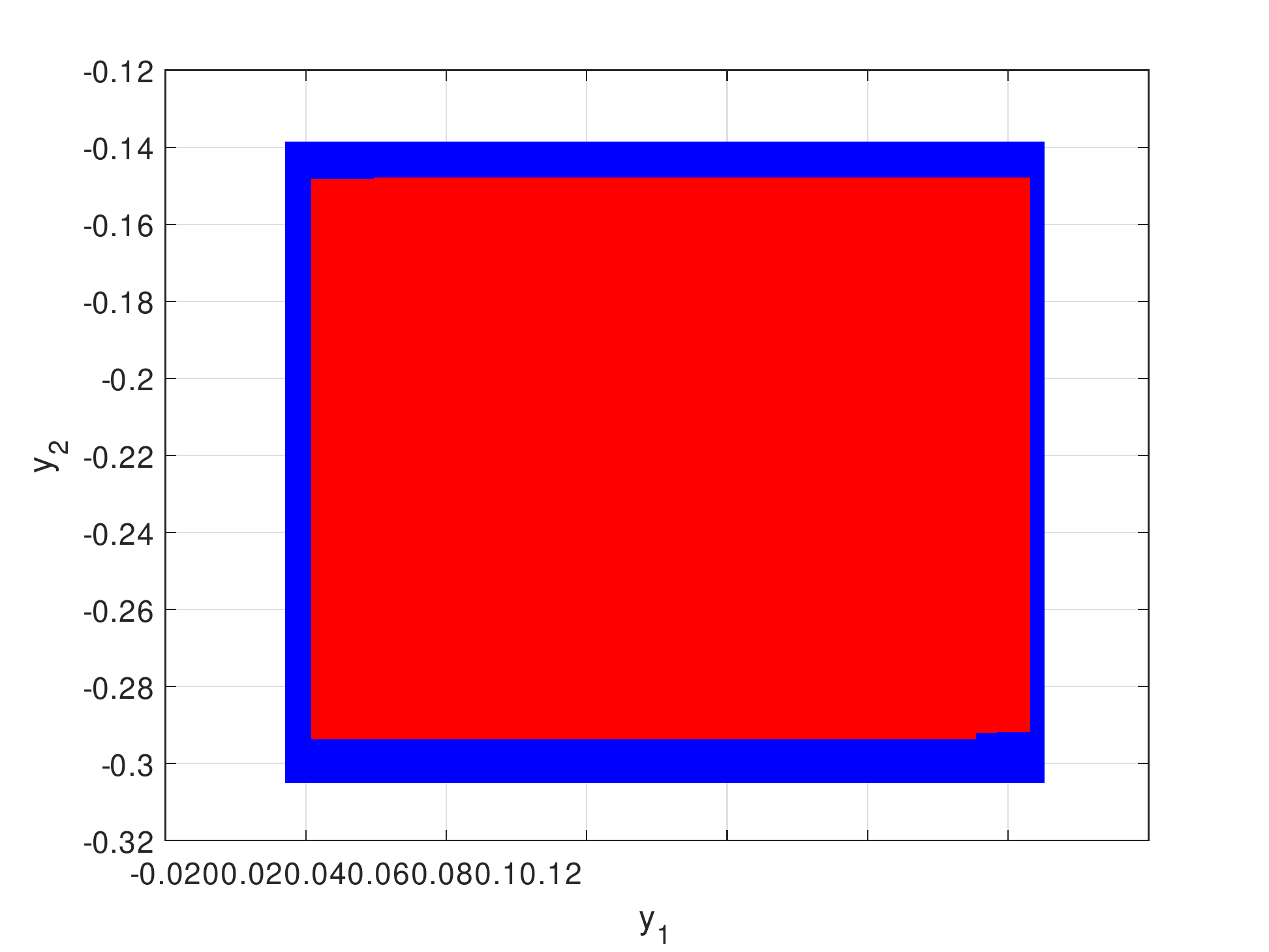}
\label{S1-ibp0.1}
\end{minipage}%
}%
\subfigure[$\epsilon=0.2$,  \textcolor{blue}{IBP} Vs. \textcolor{red}{IBP-O}]{
\begin{minipage}[t]{0.32\linewidth}
\centering
\includegraphics[width=1.6in]{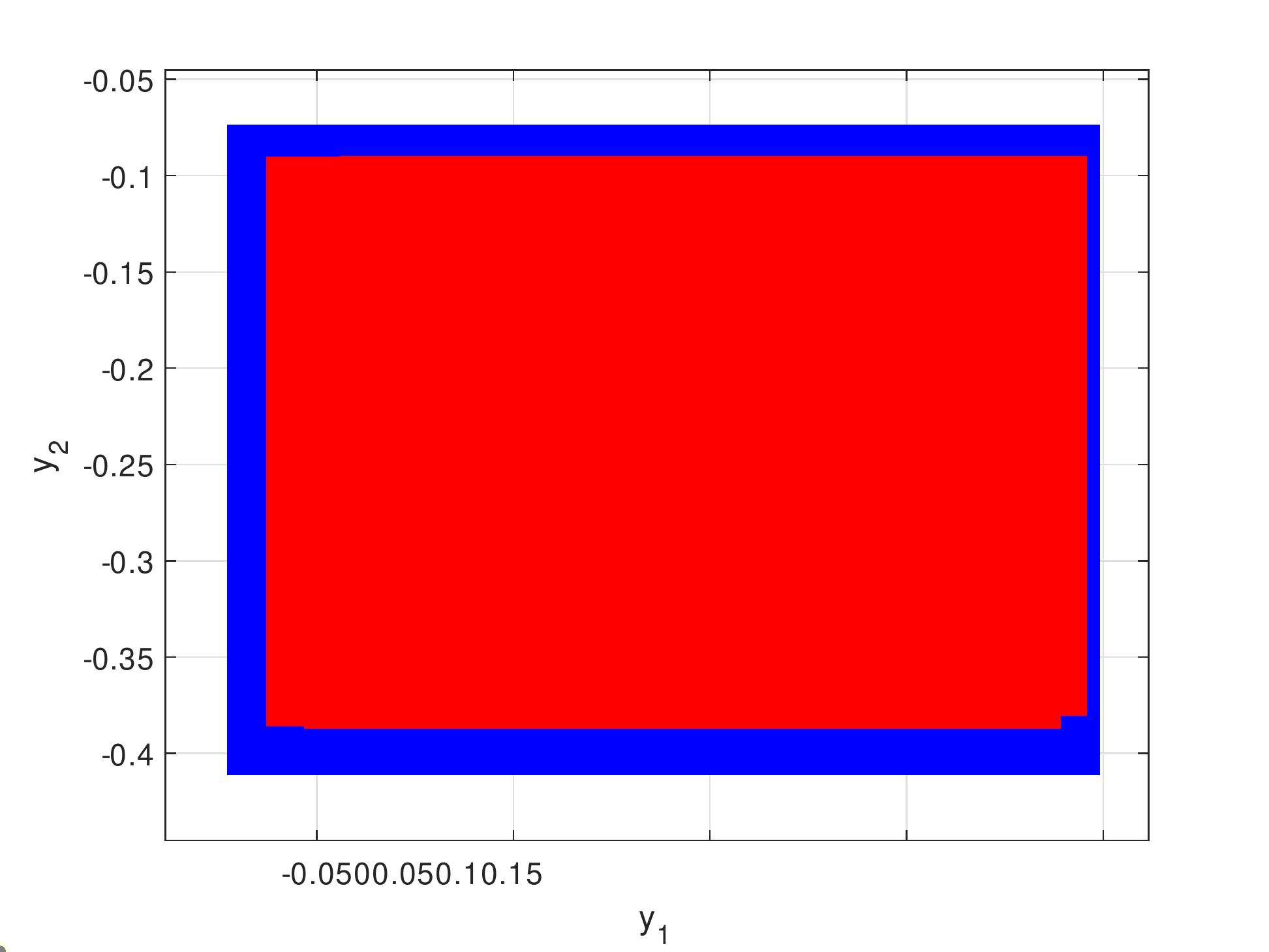}
\label{S1-ibp0.2}
\end{minipage}%
}%
\subfigure[$\epsilon=0.5$,  \textcolor{blue}{IBP} Vs. \textcolor{red}{IBP-O}]{
\begin{minipage}[t]{0.32\linewidth}
\centering
\includegraphics[width=1.6in]{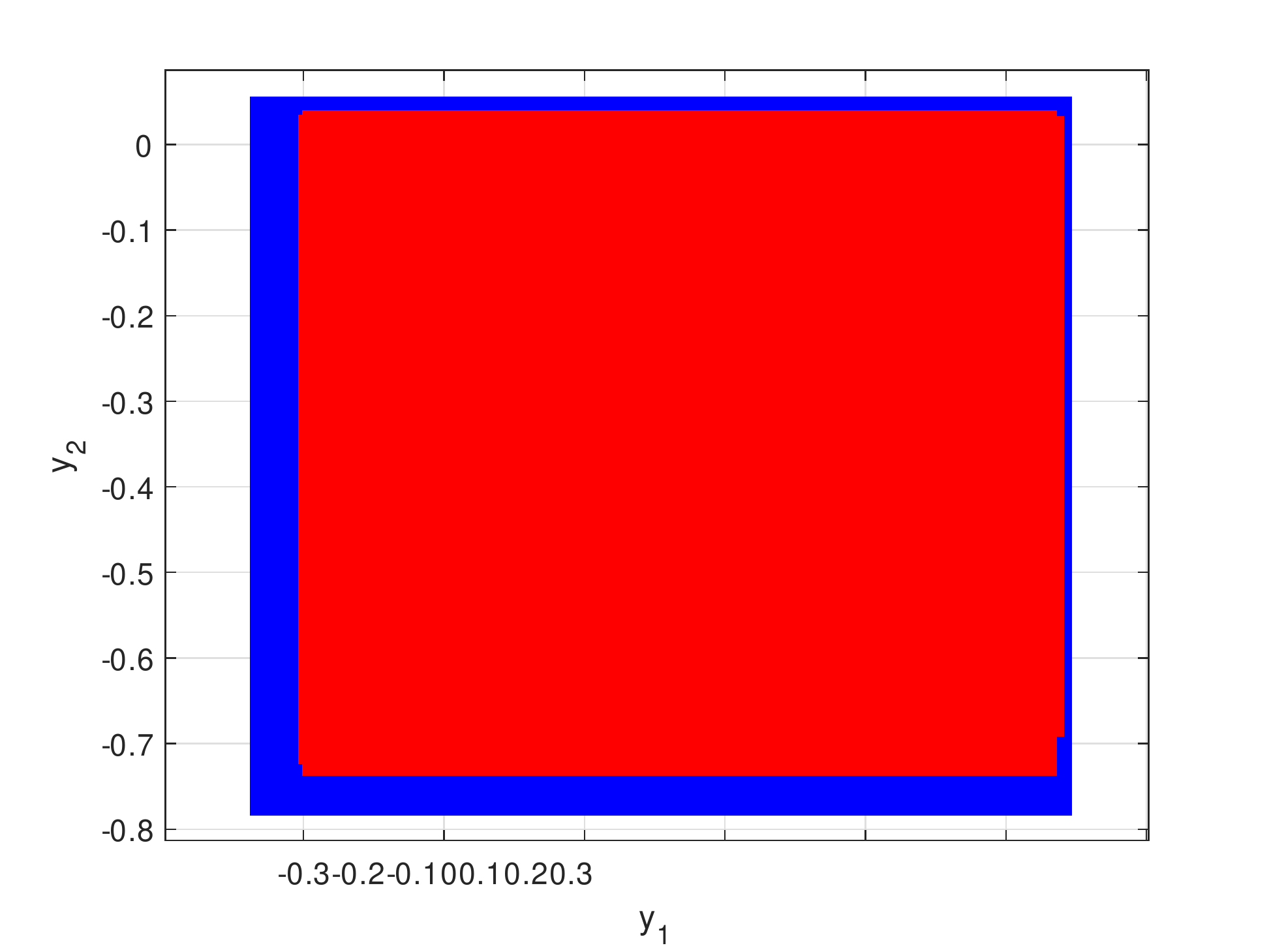}
\label{S1-ibp0.5}
\end{minipage}%
}%
\\
\subfigure[$\epsilon=0.1$,  \textcolor{blue}{DeepZ} Vs. \textcolor{red}{DeepZ-O}]{
\begin{minipage}[t]{0.32\linewidth}
\centering
\includegraphics[width=1.6in]{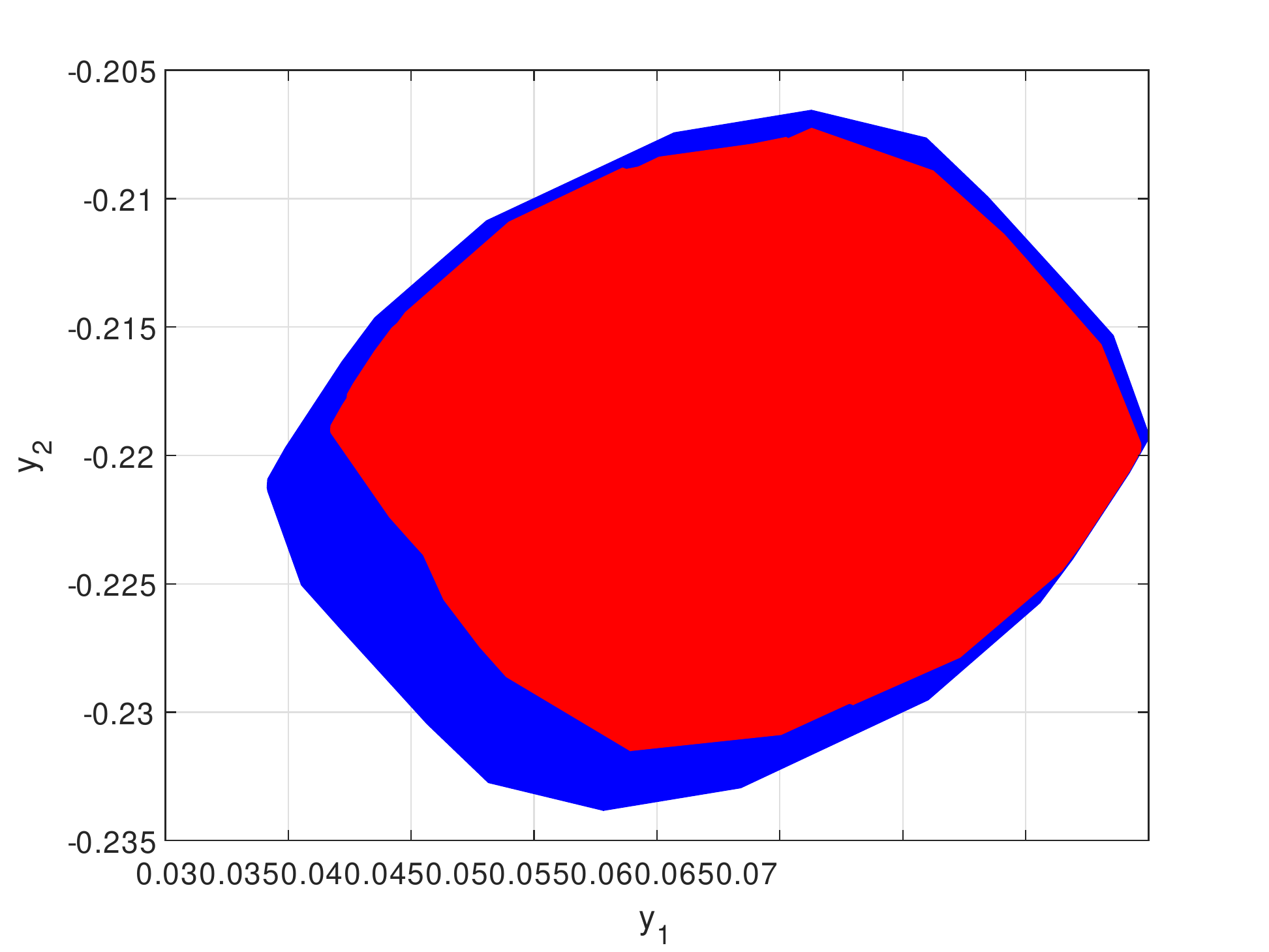}
\label{S1-deepz0.1}
\end{minipage}%
}%
\subfigure[$\epsilon=0.2$,  \textcolor{blue}{DeepZ} Vs. \textcolor{red}{DeepZ-O}]{
\begin{minipage}[t]{0.32\linewidth}
\centering
\includegraphics[width=1.6in]{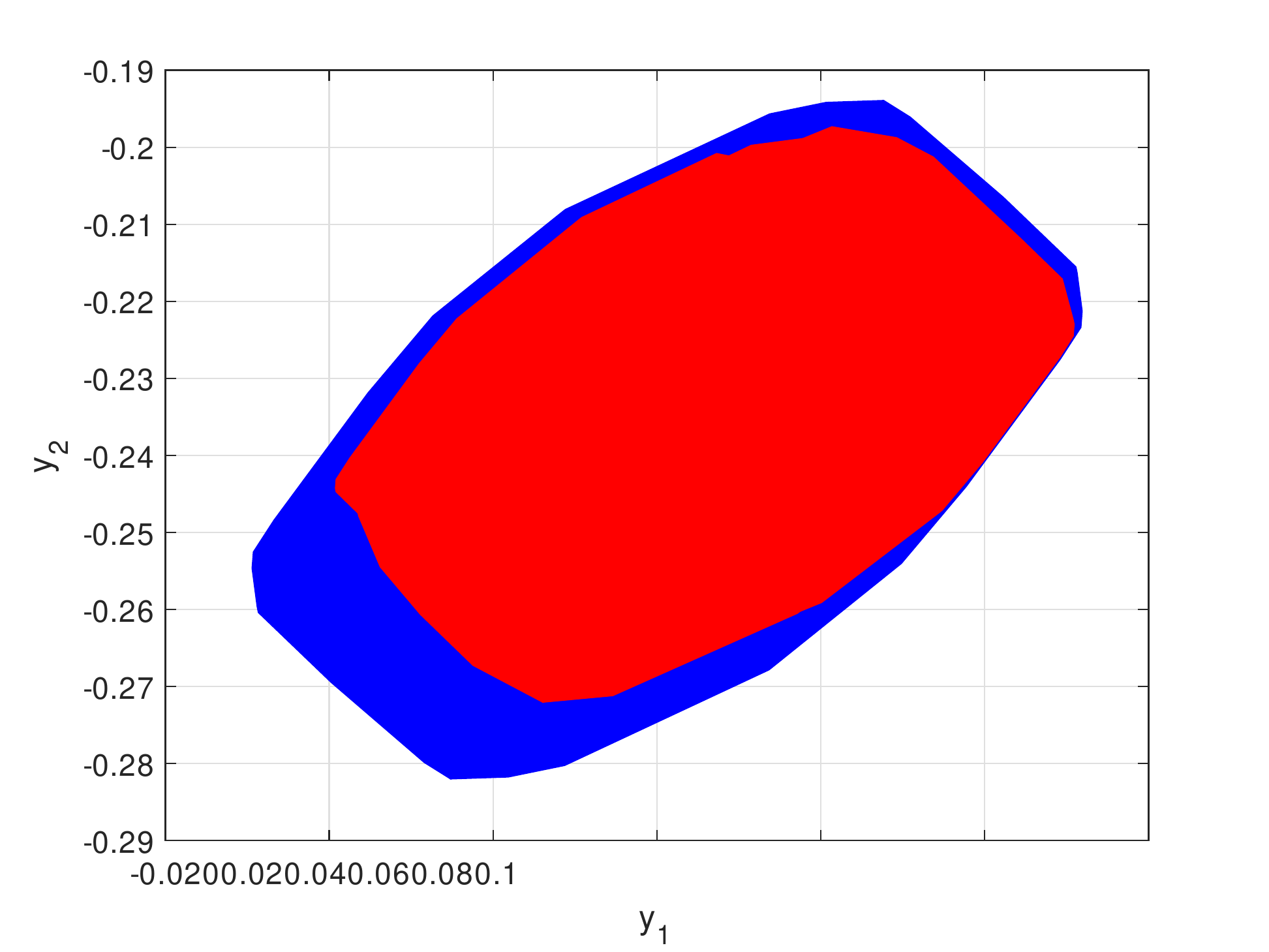}
\label{S1-deepz0.2}
\end{minipage}%
}%
\subfigure[$\epsilon=0.5$,  \textcolor{blue}{DeepZ} Vs. \textcolor{red}{DeepZ-O}]{
\begin{minipage}[t]{0.32\linewidth}
\centering
\includegraphics[width=1.6in]{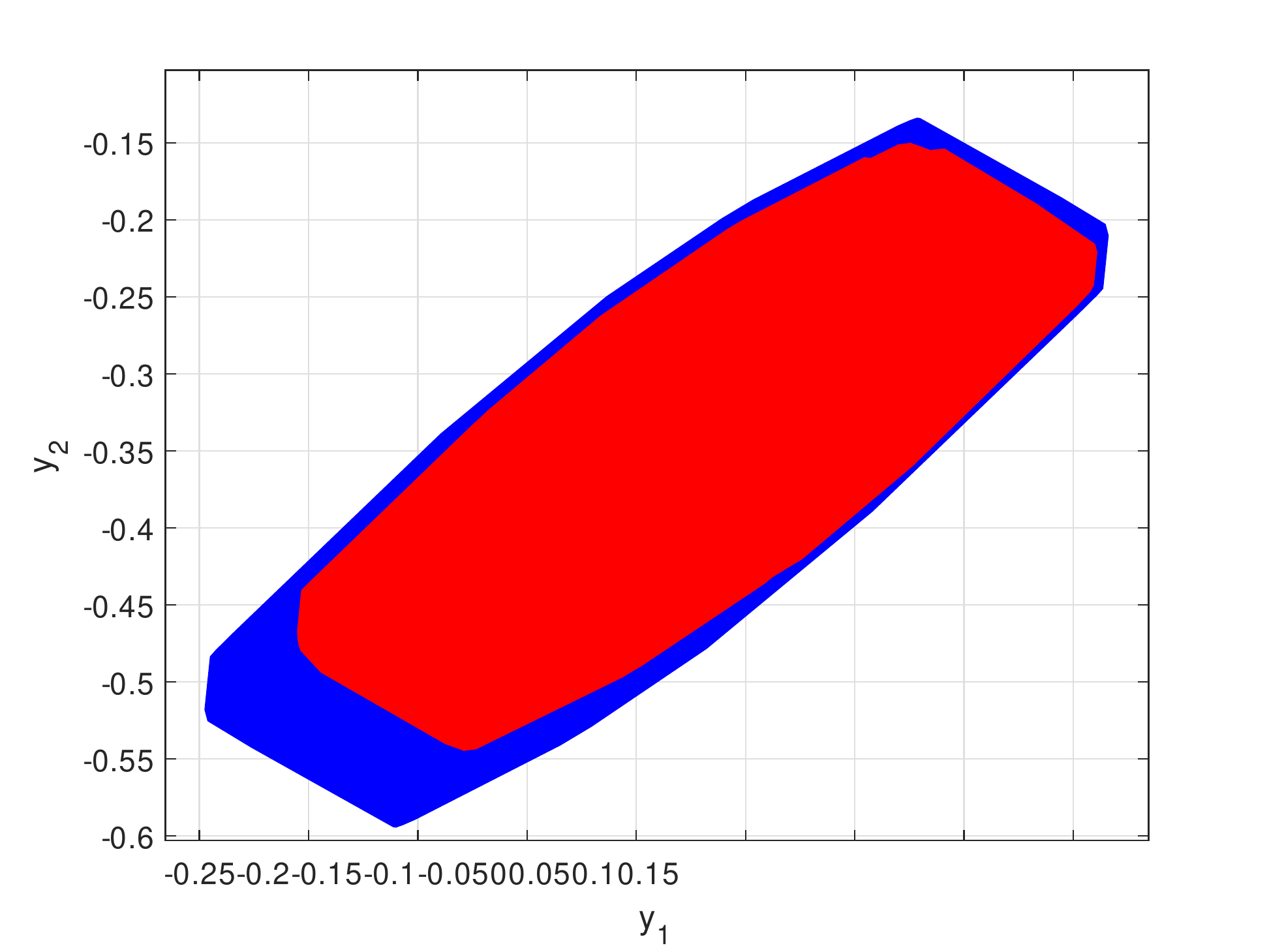}
\label{S1-deepz0.5}
\end{minipage}
}
\\
\subfigure[$\epsilon=0.1$, \textcolor{blue}{TM} Vs. \textcolor{red}{TM-O}]{
\begin{minipage}[t]{0.32\linewidth}
\centering
\includegraphics[width=1.6in]{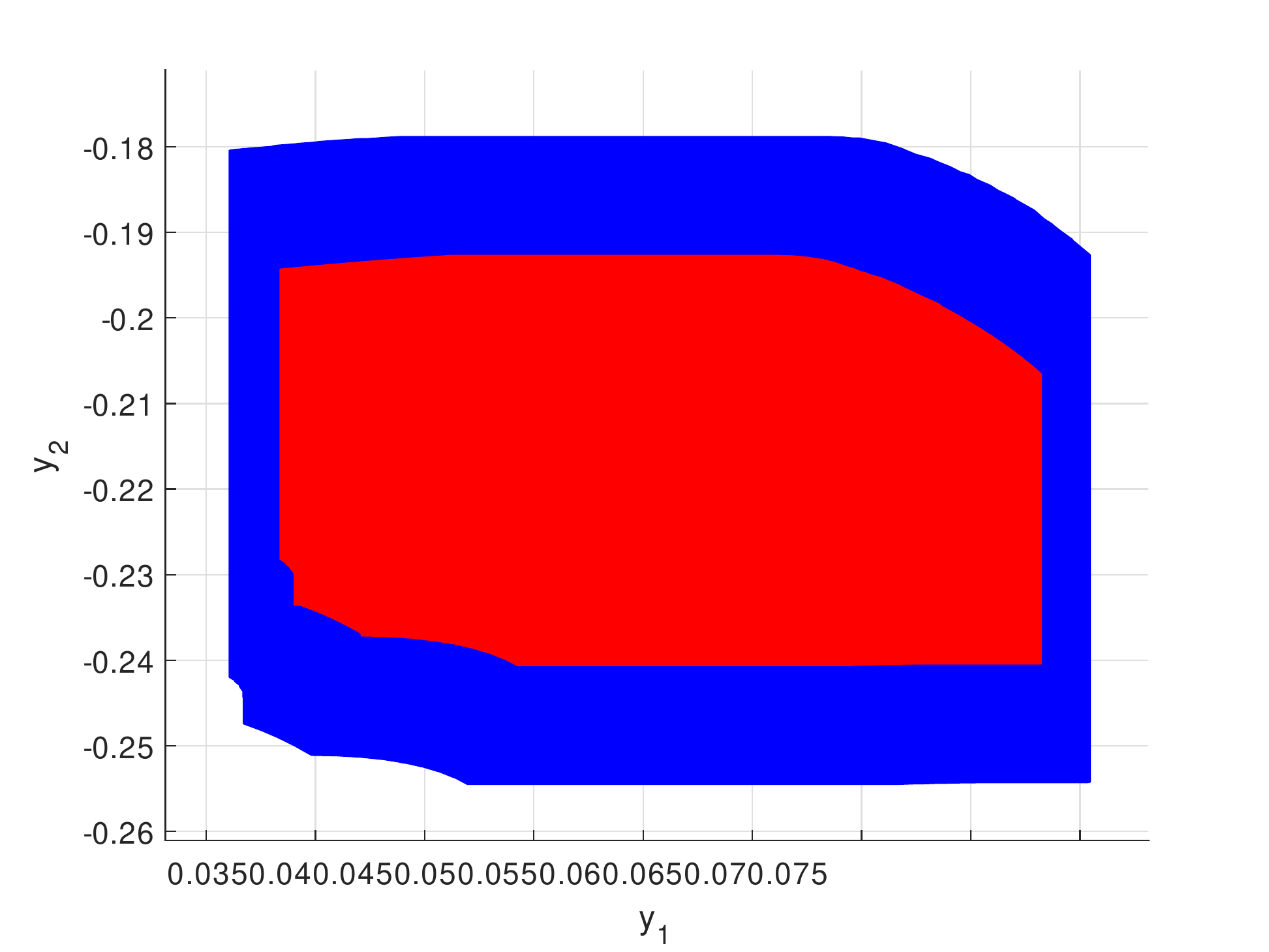}
\label{S1-tm0.1}
\end{minipage}%
}%
\subfigure[$\epsilon=0.2$, \textcolor{blue}{TM} Vs. \textcolor{red}{TM-O}]{
\begin{minipage}[t]{0.32\linewidth}
\centering
\includegraphics[width=1.6in]{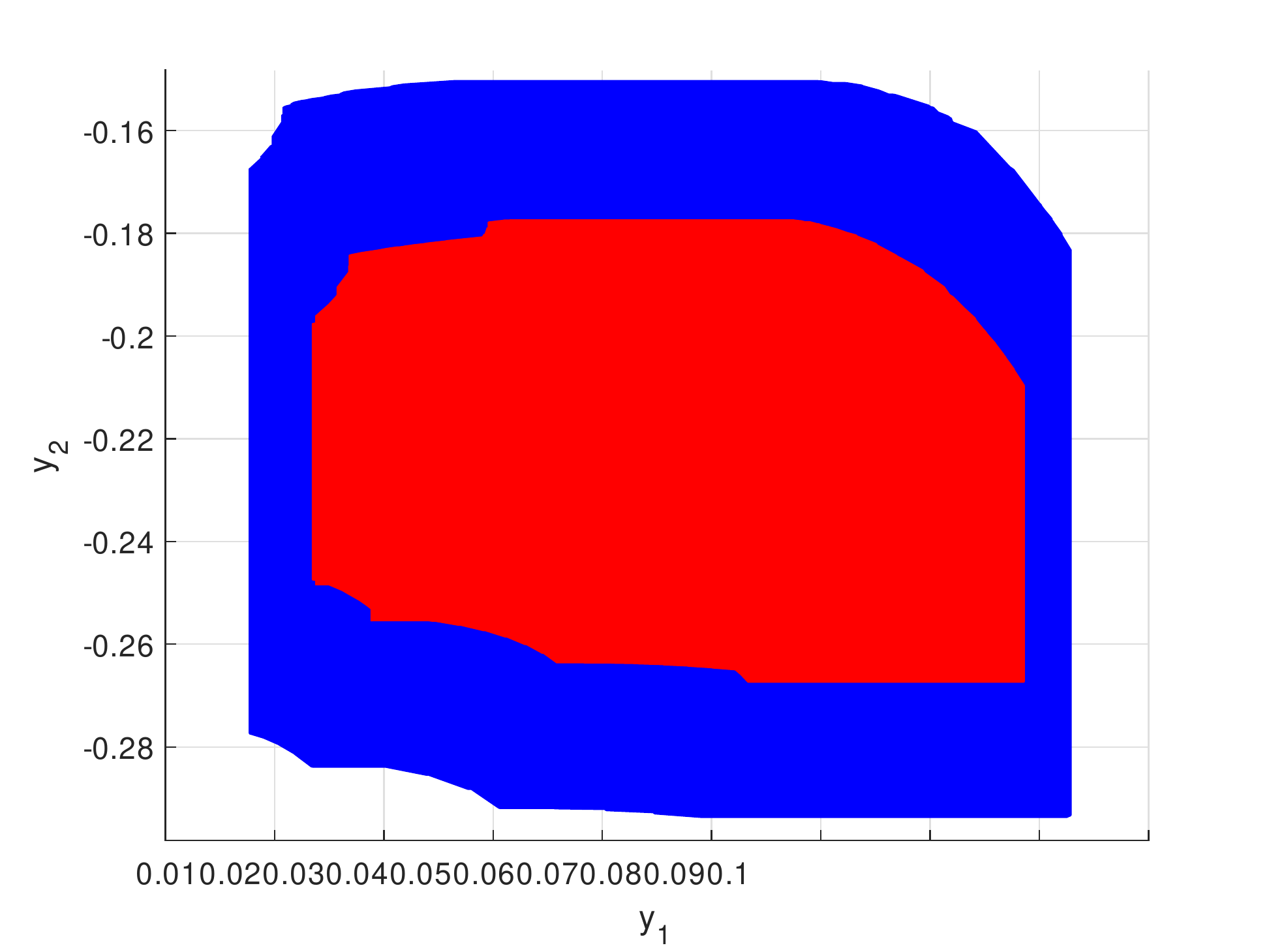}
\label{S1-tm0.2}
\end{minipage}%
}%
\subfigure[$\epsilon=0.5$, \textcolor{blue}{TM} Vs. \textcolor{red}{TM-O}]{
\begin{minipage}[t]{0.32\linewidth}
\centering
\includegraphics[width=1.6in]{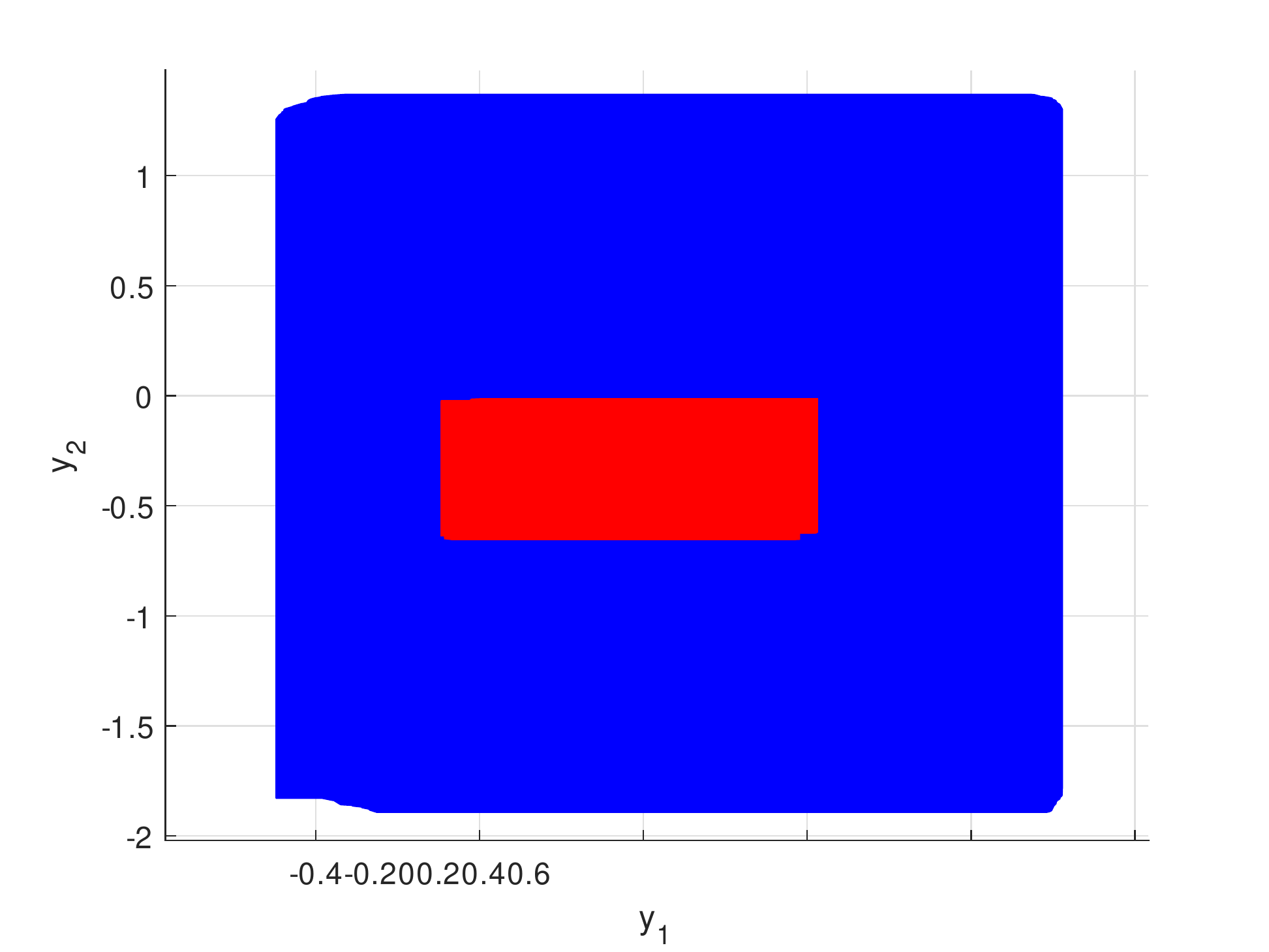}
\label{S1-tm0.5}
\end{minipage}%
}%
\centering
\caption{Reachable sets on NN $\bm{N_6}$.}
\label{block S1}
\end{figure}

\begin{table}[b]
  \centering
  \caption{Reachable set comparisons on NNs versus IBP and DeepZ.}
   \setlength{\tabcolsep}{4mm}{
    \begin{tabular}{c|c|c|c|c|c|c|c}
    \Xhline{1.2pt}
    \multirow{2}[4]{*}{\textbf{NN}} & \multirow{2}[4]{*}{\textbf{Perturbation}} & \multicolumn{3}{c|}{\textbf{IBP Vs. Boundary}} & \multicolumn{3}{c}{\textbf{DeepZ Vs. Boundary}} \bigstrut\\
\cline{3-8}          &       & \textbf{Min}   & \textbf{Max}   & \textbf{Mean}  & \textbf{Min}   & \textbf{Max}   & \textbf{Mean} \bigstrut\\
    \hline
    \multirow{3}[6]{*}{$\bm{N_6}$} & $\epsilon = 0.1$       &  0.8760     &  0.9472     &   0.9116    &  0.8887     & 0.9203     & 0.9045  \bigstrut\\
\cline{2-8}          & $\epsilon = 0.2$  & 0.8805  & 0.9408 & 0.9106 &0.8481 & 0.8898 & 0.8689 \bigstrut\\
\cline{2-8}          &  $\epsilon = 0.5$     & 0.9256      &   0.9318    &  0.9287     & 0.8561      &    0.8848   & 0.8705 \bigstrut\\
    \hline
    \multirow{3}[6]{*}{$\bm{N_7}$} &$\epsilon = 0.1$  &0.8843  & 0.8870  &0.8857  & 0.8544 & 0.9322 &0.8933  \bigstrut\\
\cline{2-8}          &  $\epsilon = 0.2$     &   0.8918    & 0.8946     & 0.8933      &0.8095       & 0.8577      &0.8336  \bigstrut\\
\cline{2-8}          & $\epsilon = 0.5$ &0.9353  & 0.9391
 & 0.9372 &0.7810  &0.8567  & 0.8188 \bigstrut\\
    \Xhline{1.2pt}
    \end{tabular}}
  \label{tab:IBP_deeepz}%
\end{table}%

Table \ref{tab:IBP_deeepz}  displays the comparison results versus IBP and DeepZ respectively on $\bm{N_6}$ and $\bm{N_7}$, and Table \ref{tab:deeepz} lists the comparisons versus DeepZ on the large size NNs, $\bm{N_8}$, $\bm{N_9}$, $\bm{N_{10}}$  and $\bm{N_{11}}$. It can be observed that our proposed set-boundary based reachability analysis computes a tighter output set in all the cases and can averagely reduce the entire set based over-approximated reachable sets by $3\%-25\%$ for IBP and $25\%-40\%$ for DeepZ. Moreover, the reachable sets computed by the original entire set based tools and the set-boundary based reachability analysis for NNs  $\bm{N_6}$, $\bm{N_7}$ are shown in Fig. \ref{block S1} and \ref{block S2}, which are in red and blue respectively.

\begin{figure}[t]
\centering
\subfigure[$\epsilon=0.1$,  \textcolor{blue}{IBP} Vs. \textcolor{red}{IBP-O}]{
\begin{minipage}[t]{0.32\linewidth}
\centering
\includegraphics[width=1.6in]{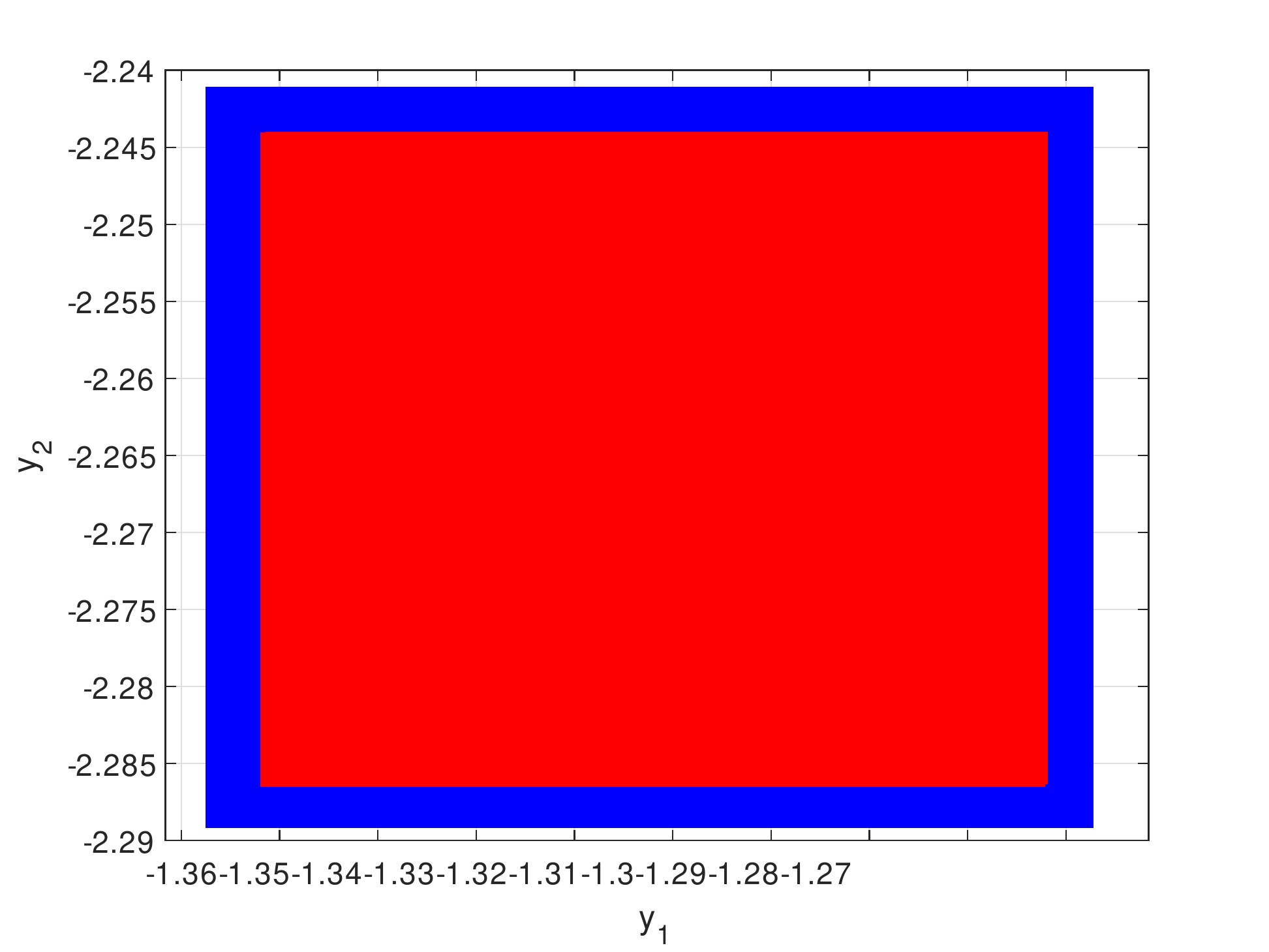}
\label{S2-ibp0.1}
\end{minipage}%
}%
\subfigure[$\epsilon=0.2$,  \textcolor{blue}{IBP} Vs. \textcolor{red}{IBP-O}]{
\begin{minipage}[t]{0.32\linewidth}
\centering
\includegraphics[width=1.6in]{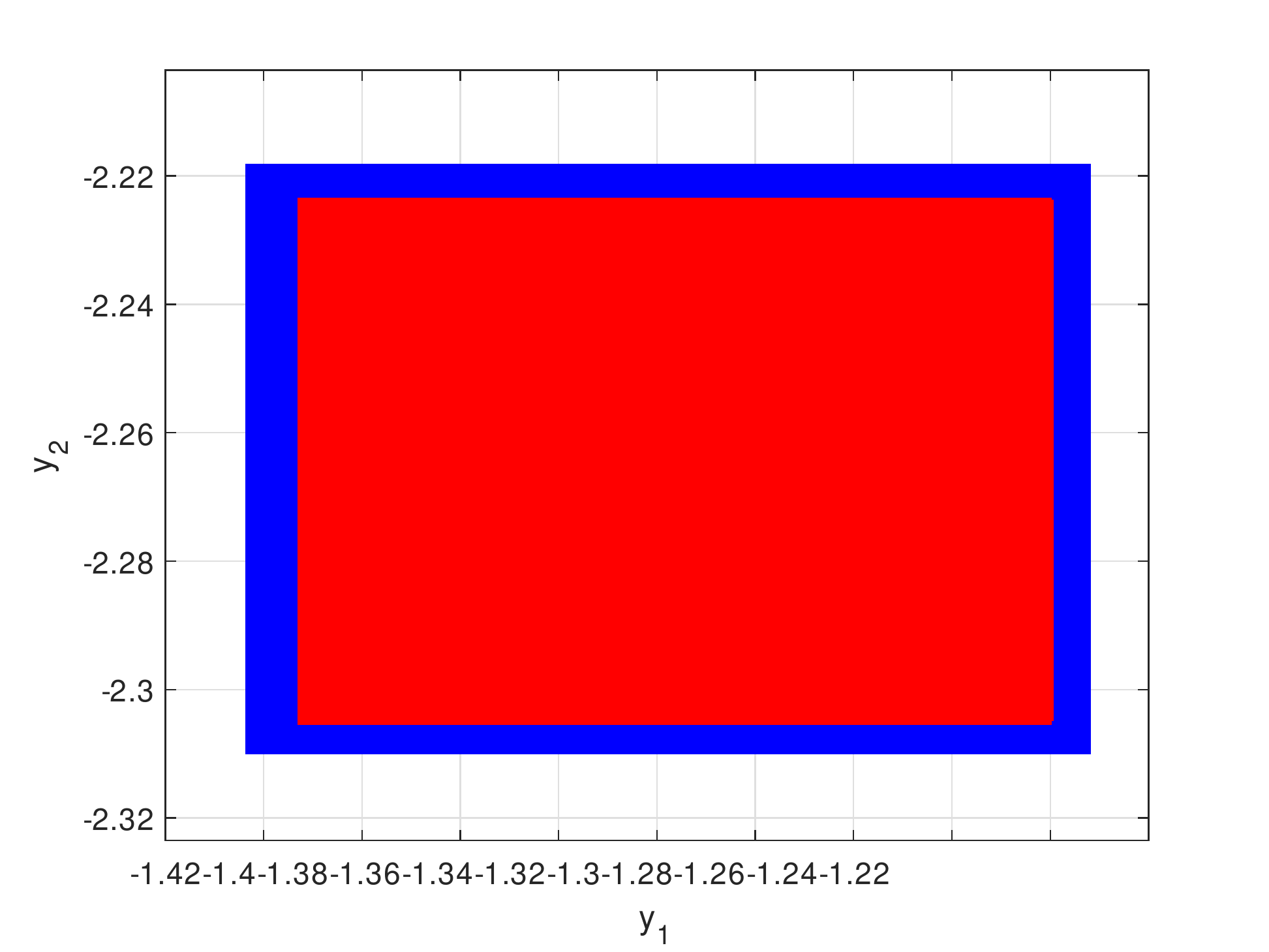}
\label{S2-ibp0.2}
\end{minipage}%
}%
\subfigure[$\epsilon=0.5$,  \textcolor{blue}{IBP} Vs. \textcolor{red}{IBP-O}]{
\begin{minipage}[t]{0.32\linewidth}
\centering
\includegraphics[width=1.6in]{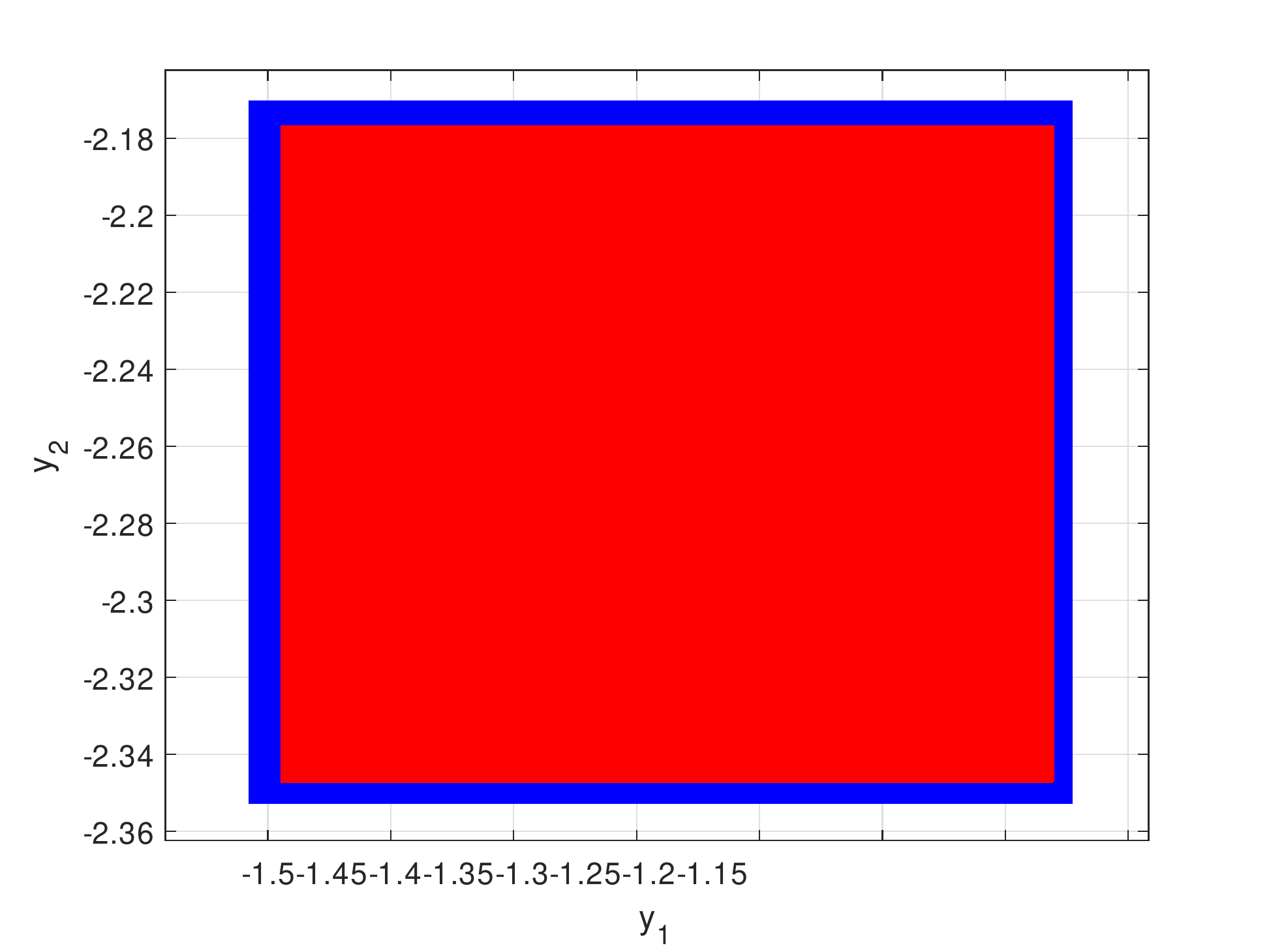}
\label{S2-ibp0.5}
\end{minipage}%
}%
\\
\subfigure[$\epsilon=0.1$,  \textcolor{blue}{DeepZ} Vs. \textcolor{red}{DeepZ-O}]{
\begin{minipage}[t]{0.32\linewidth}
\centering
\includegraphics[width=1.6in]{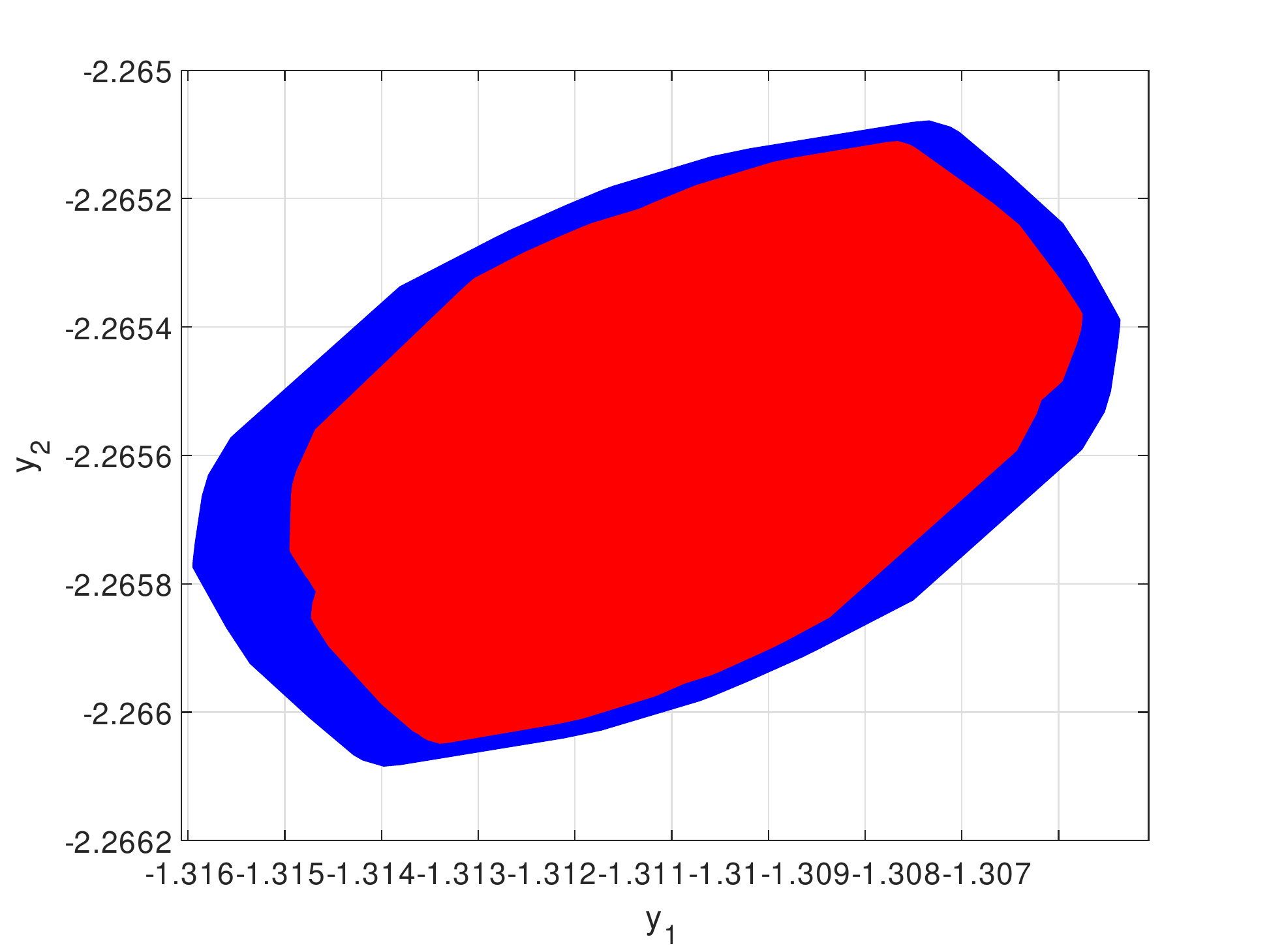}
\label{S2-deepz0.1}
\end{minipage}%
}%
\subfigure[$\epsilon=0.2$,  \textcolor{blue}{DeepZ} Vs. \textcolor{red}{DeepZ-O}]{
\begin{minipage}[t]{0.32\linewidth}
\centering
\includegraphics[width=1.6in]{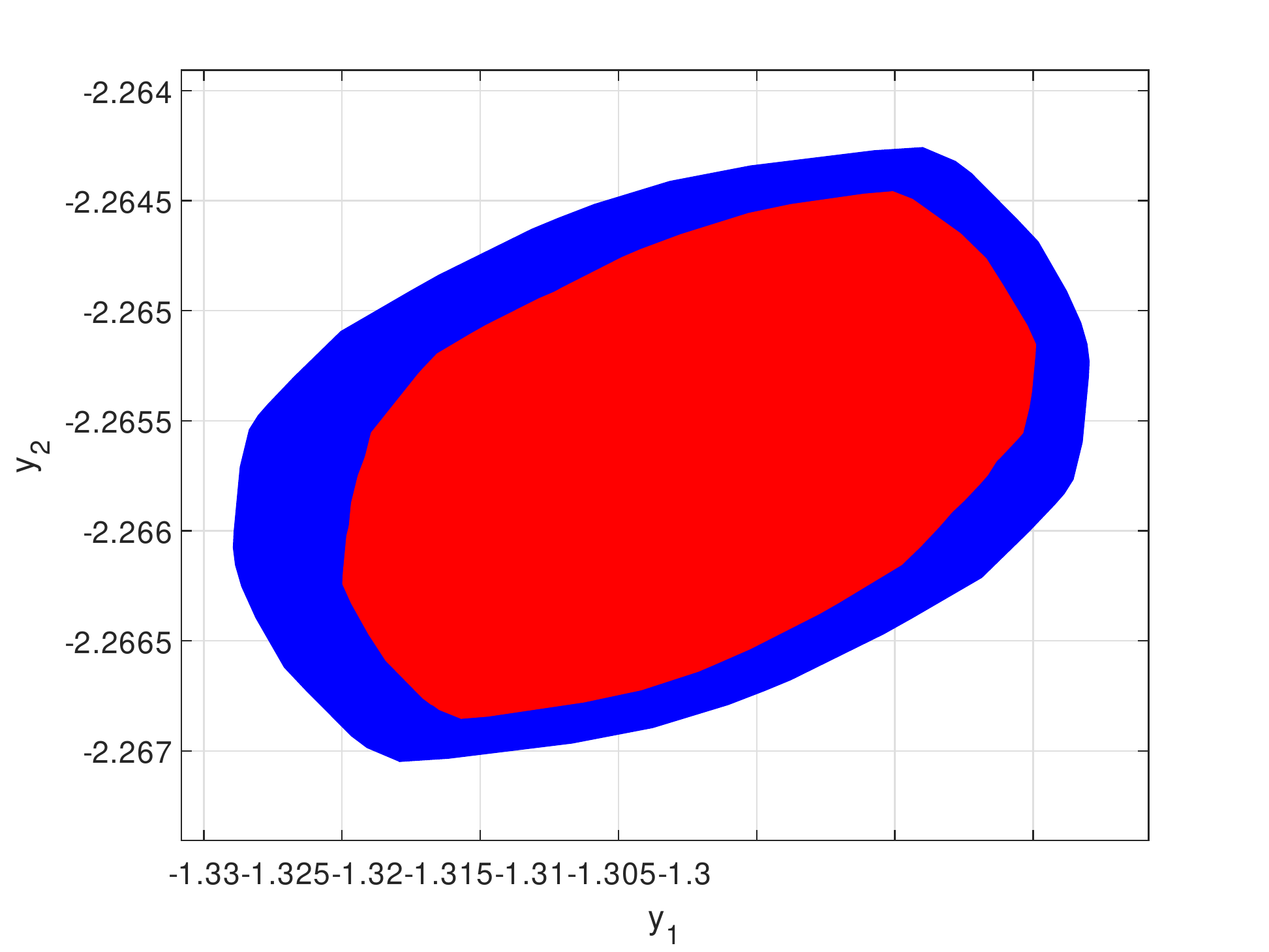}
\label{S2-deepz0.2}
\end{minipage}%
}%
\subfigure[$\epsilon=0.5$,  \textcolor{blue}{DeepZ} Vs. \textcolor{red}{DeepZ-O}]{
\begin{minipage}[t]{0.32\linewidth}
\centering
\includegraphics[width=1.6in]{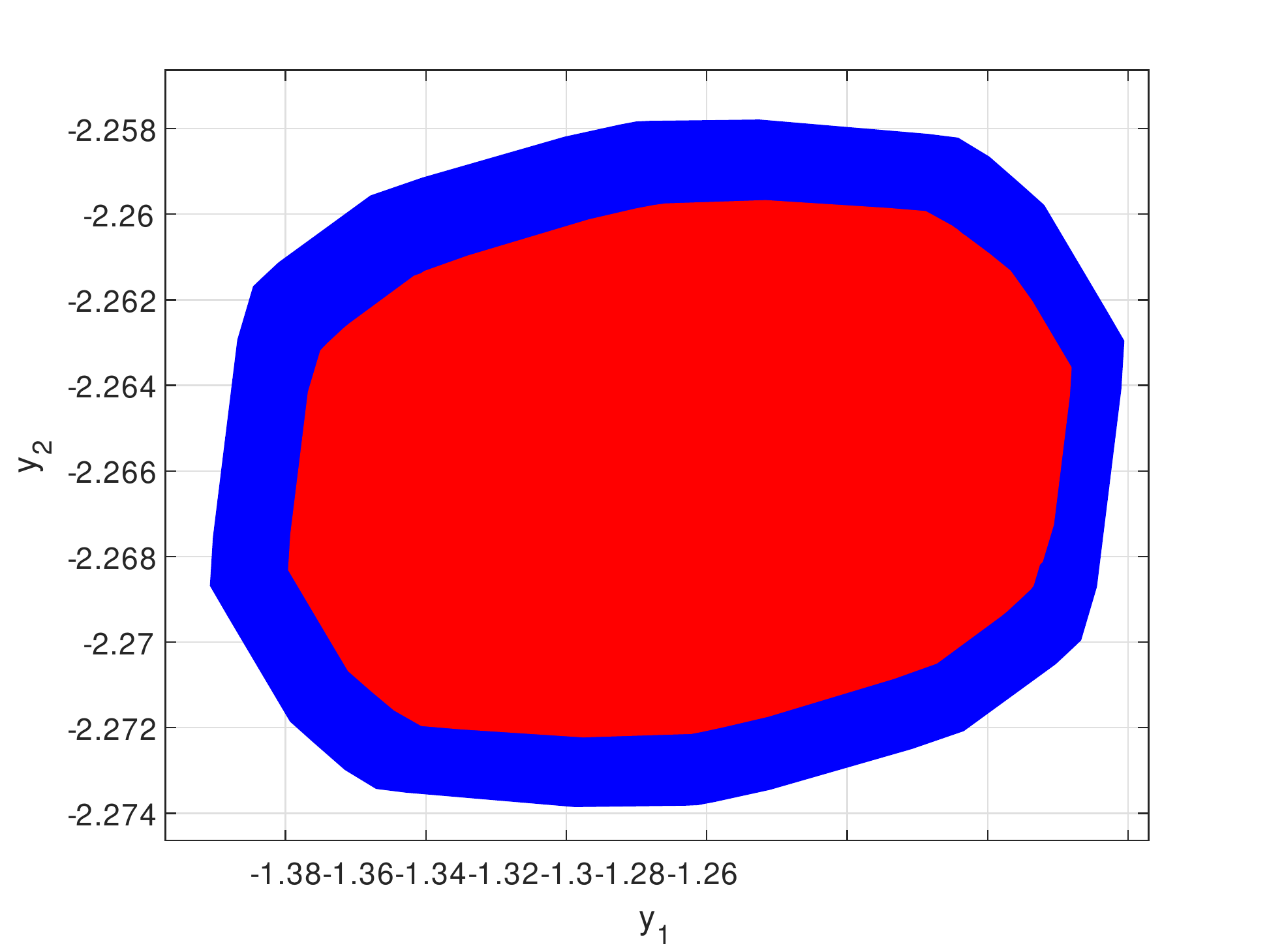}
\label{S2-deepz0.5}
\end{minipage}
}
\\
\subfigure[$\epsilon=0.1$, \textcolor{blue}{TM} Vs. \textcolor{red}{TM-O}]{
\begin{minipage}[t]{0.32\linewidth}
\centering
\includegraphics[width=1.6in]{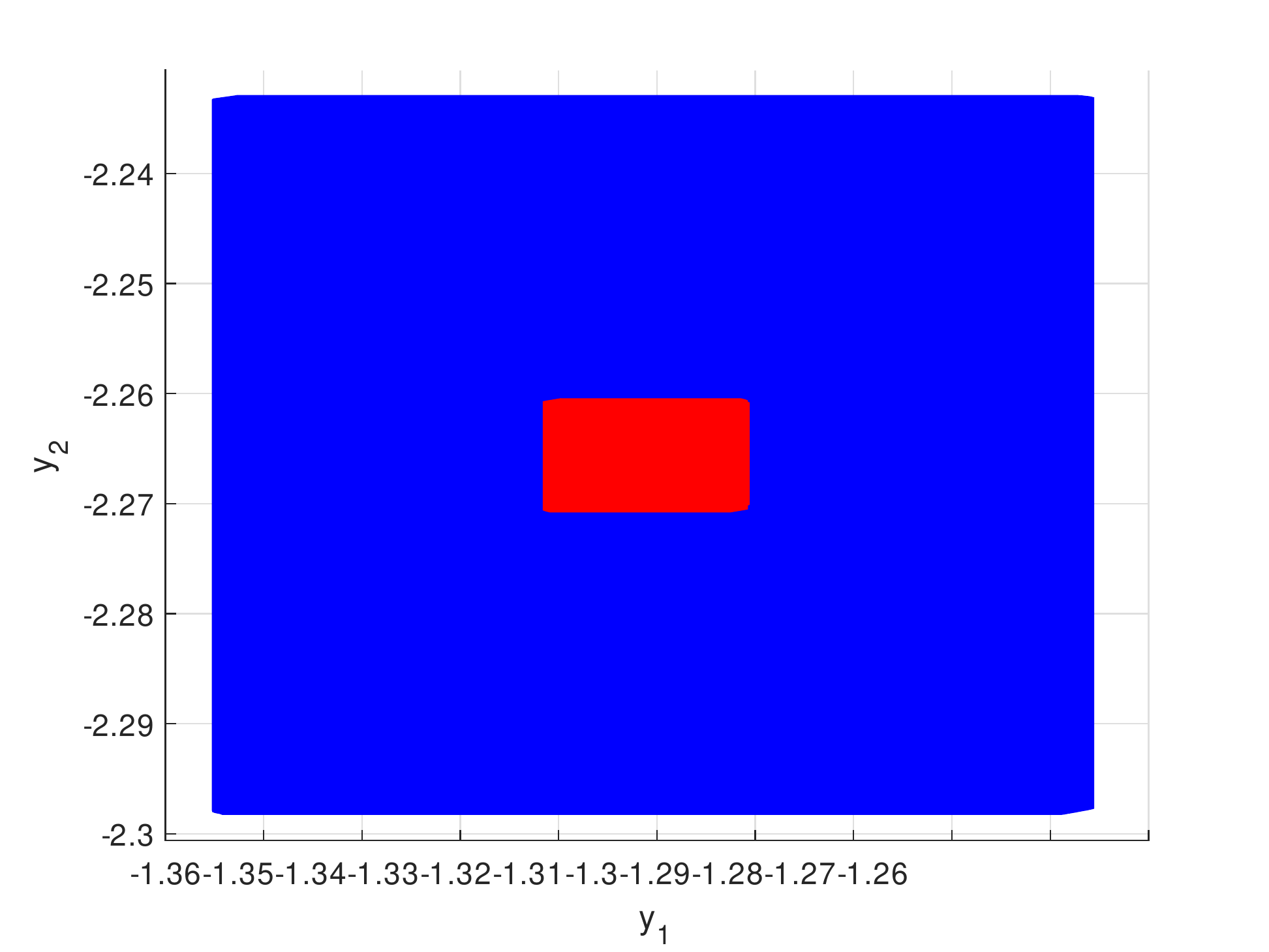}
\label{S2-tm0.1}
\end{minipage}%
}%
\subfigure[$\epsilon=0.2$, \textcolor{blue}{TM} Vs. \textcolor{red}{TM-O}]{
\begin{minipage}[t]{0.32\linewidth}
\centering
\includegraphics[width=1.6in]{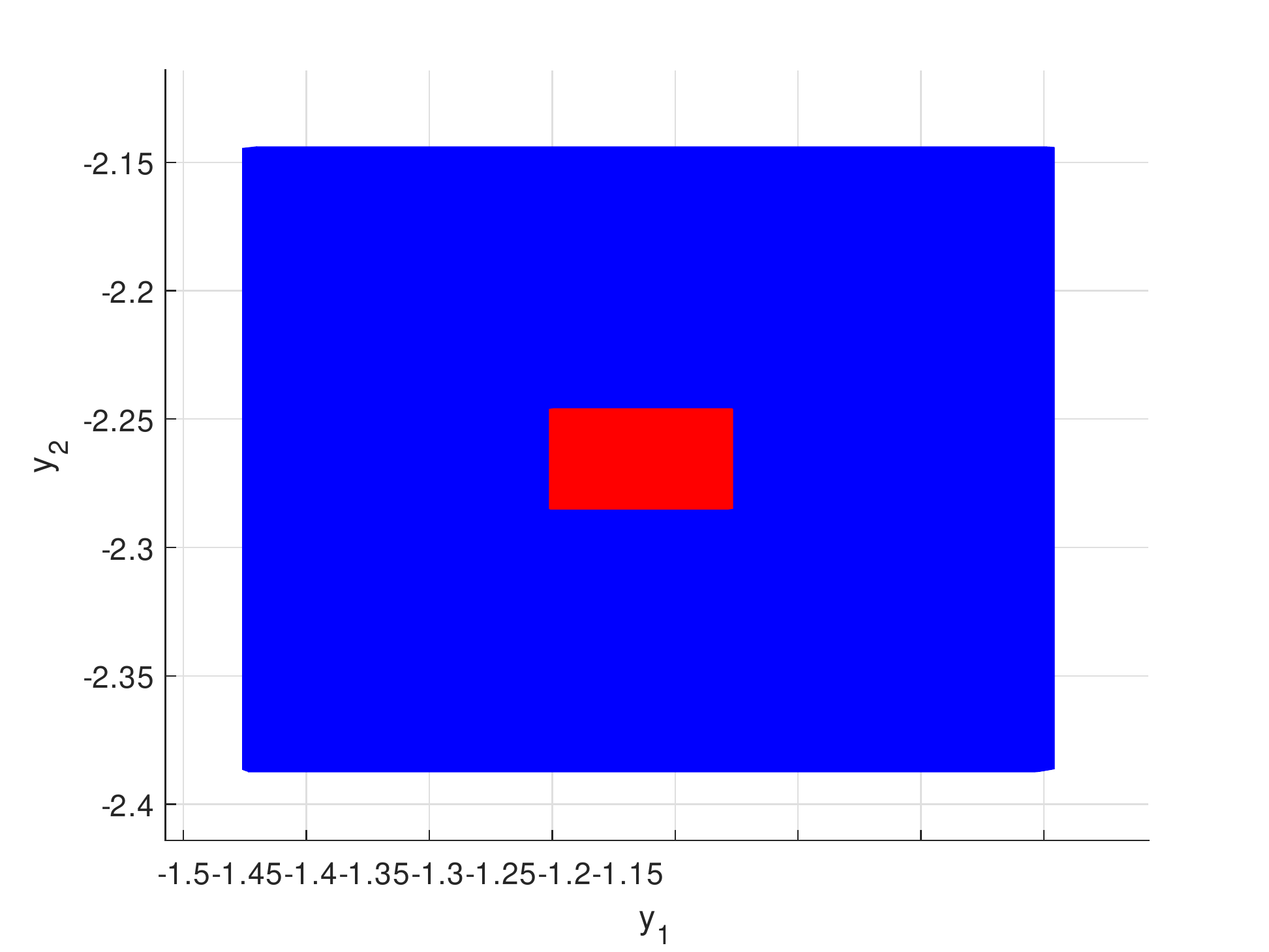}
\label{S2-tm0.2}
\end{minipage}%
}%
\subfigure[$\epsilon=0.5$, \textcolor{blue}{TM} Vs. \textcolor{red}{TM-O}]{
\begin{minipage}[t]{0.32\linewidth}
\centering
\includegraphics[width=1.6in]{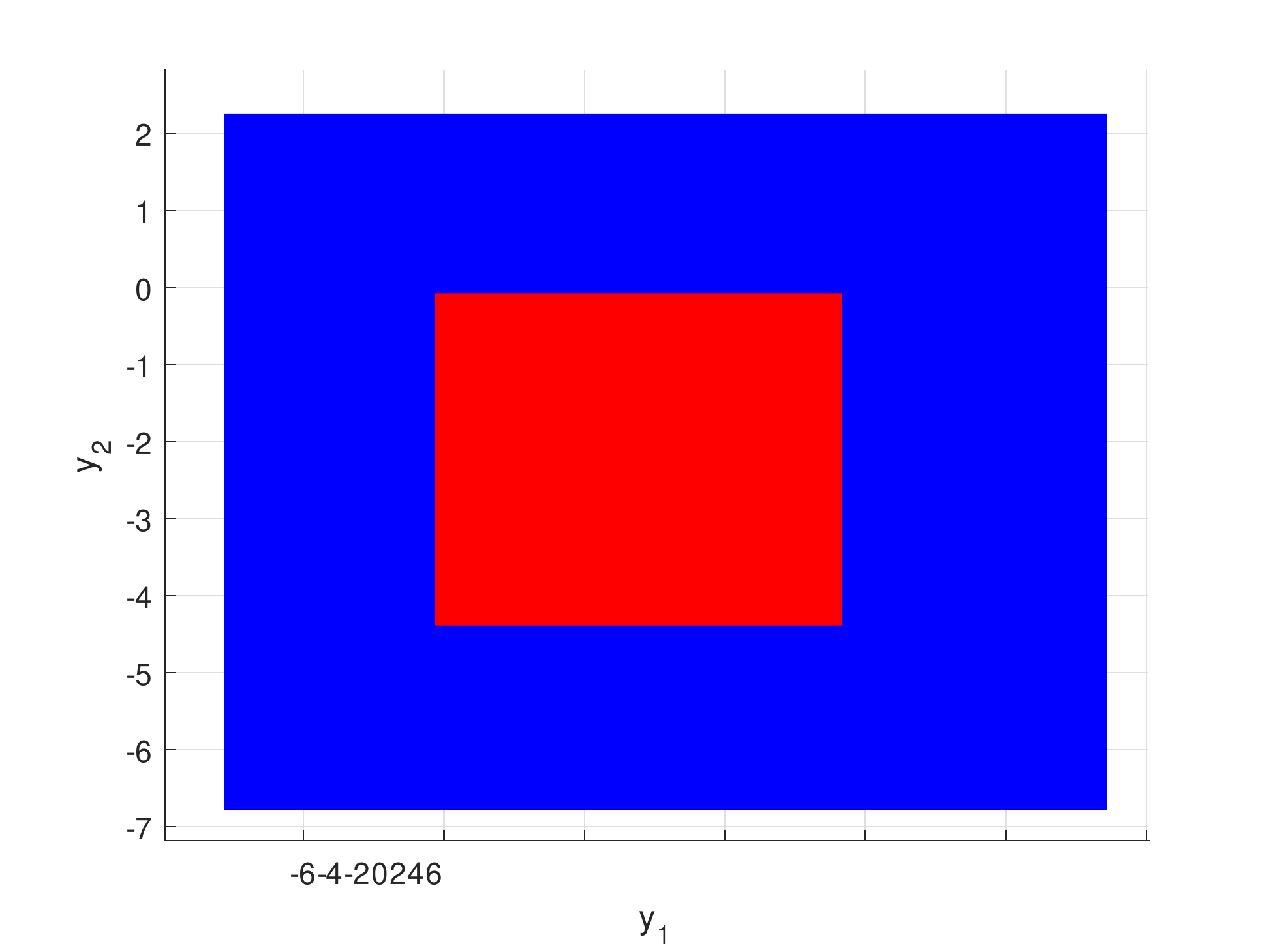}
\label{S2-tm0.5}
\end{minipage}
}
\centering
\caption{Reachable sets on NN $\bm{N_7}$.}
\label{block S2}
\end{figure}

\begin{table}[b]
  \centering
  \caption{Reachable set comparisons on NNs versus DeepZ.}
   \setlength{\tabcolsep}{2mm}{
    \begin{tabular}{c|c|c|c|c|c|c|c|c}
    \Xhline{1.2pt}
    & \multicolumn{2}{c|}{$\bm{N_8}$} & \multicolumn{2}{c|}{$\bm{N_9}$} & \multicolumn{2}{c|}{$\bm{N_{10}}$} & \multicolumn{2}{c}{$\bm{N_{11}}$} \\
    \cline{2-3}  \cline{4-5}  \cline{6-7}   \cline{8-9} 
    \textbf{Perturbation}& $\epsilon =0.003$ & $\epsilon =0.004$  & $\epsilon =0.003$ & $\epsilon =0.004$ & $\epsilon =0.001$ & $\epsilon =0.002$ & $\epsilon =0.001$ & $\epsilon =0.002$ \\
    \hline
    \textbf{Min} & 0.9238 &0.8580 &0.9141&0.7569 &0.9740 &0.9569 & 0.9659&0.9409 \\
    \textbf{Max} &0.9373& 0.8724 &0.9345 &0.7849 &0.9805 &0.9615 &0.9758 & 0.9481  \\
    \textbf{Mean} &0.9308 &0.8651 &0.9247 &0.7672 &0.9777 &0.9590 &0.9709 &0.9444 \\
    \Xhline{1.2pt}
    \end{tabular}}%
  \label{tab:deeepz}%
\end{table}%

The comparisons versus Versig 2.0 on $\bm{N_6}$ and $\bm{N_7}$  are displayed in Table \ref{tab:tm}. Likewise, it can be seen that our proposed set-boundary based reachability analysis can reduce the entire set based over-approximated reachable sets by $20\%-80\%$, compared with Verisig 2.0.  In Table \ref{tab:tm}, we also list the polynomial order and running time time utilized for Taylor models, it can be seen that our proposed set-boundary based  reachability analysis  generally requires lower polynomial order and less time cost, appealingly, obtaining much tighter over-approximation of reachable sets.

\begin{table}[t]
  \centering
  \caption{Reachable set comparisons versus Versig 2.0.}
  \setlength{\tabcolsep}{2mm}{
    \begin{tabular}{c|c|c|c|c|c|c|c|c}
     \Xhline{1.2pt}
    \multicolumn{1}{c|}{\multirow{2}[4]{*}{\textbf{NN}}} & \multirow{2}[4]{*}{\textbf{Perturbation}} & \multicolumn{2}{c|}{\textbf{Entire Set}} & \multicolumn{2}{c|}{\textbf{Boundary}} & \multirow{2}[4]{*}{\textbf{Min}} & \multirow{2}[4]{*}{\textbf{Max}} & \multirow{2}[4]{*}{\textbf{Mean}} \bigstrut\\
\cline{3-6}          &       & \textbf{Order} & \textbf{Time}  & \textbf{Order} & \textbf{Time}  &       &       &  \bigstrut\\
    \hline
    \multirow{3}[6]{*}{$\bm{N_6}$} & $\epsilon = 0.1$      &3     & 0.2585$\pm$0.0011    &2    & 0.3121$\pm$0.0030      & 0.6501     &  0.8755     & 0.7628 \bigstrut\\
\cline{2-9}          & $\epsilon = 0.2$ &3 &0.4957$\pm$0.0100 &2 &0.3143$\pm$0.0036 & 0.6491 &0.8255 &0.7373 \bigstrut\\
\cline{2-9}          & $\epsilon = 0.5$      & 3      & 1.3670$\pm$0.0375      &  1     &0.1352$\pm$0.0023      & 0.1874      &  0.4079     &0.2976  \bigstrut\\
    \hline
    \multirow{3}[6]{*}{$\bm{N_7}$} & $\epsilon = 0.1$ &2 &0.3253$\pm$0.0013 &1 &0.3348$\pm$0.0048 &0.1566 &0.2298 &0.1932 \bigstrut\\
\cline{2-9}          &  $\epsilon = 0.2$     &2      & 0.6661$\pm$0.0019      &1      &0.3430$\pm$0.0119      & 0.1598      & 0.2220      & 0.1909 \bigstrut\\
\cline{2-9}          & $\epsilon = 0.5$ &2 & 3.8716$\pm$0.7725 &2 &8.1132$\pm$0.8472 &0.4608 &0.4758 &0.4683 \bigstrut\\
    \Xhline{1.2pt}
    \end{tabular}}
  \label{tab:tm}%
\end{table}%


\section{Conclusion}
\label{Sec:conl}
In this paper we proposed a set-boundary reachability method to verify safety property of neural networks. Different from existing works on developing computational techniques for output reachable sets estimation of NNs,  the set-boundary reachability method analyzed the reachability from the topology point of view. Based on the homeomorphisms and open maps, this analysis took a careful inspection on what happens at the boundaries of input sets, and uncovered that the homeomorphism and open map properties facilitate the reduction of computational burdens on the safety verification of NNs. Several examples and the comparison with existing methods demonstrated the performance of our method. The experimental results showed that our method is indeed able to promote the computation efficiency of existing verification methods on certain NNs, while obtaining more exact reachable sets.


There is a lot of work remaining to be done in order to render the proposed approach more practical. The proposed interval estimation of Jacobian matrices is coarse, which affects the determination of a homeomorphism and, thus the extraction of the small subset for reachability computations. In the future, we will develop more efficient and accurate methods (including the use of parallel computing techniques) for calculating Jacobian matrices. Also, extending these nice topological properties to other NN-related fields seems to be promising. Furthermore, integrating the proposed set-boundary reachability analysis with the various state-of-the-art verification tools and evaluating it on more practical NNs are also  promising future directions.



\begin{acks}
This work is supported by the National Key R\&D Program of China No. 2022YFA1005101, the  National Natural Science Foundation of China under Grant No. 61836005, No. 61872371 and No. 62032024, and the CAS Pioneer Hundred Talents Program.
\end{acks}

\bibliographystyle{ACM-Reference-Format}
\bibliography{reference}


\begin{thebibliography}{56}


\ifx \showCODEN    \undefined \def \showCODEN     #1{\unskip}     \fi
\ifx \showDOI      \undefined \def \showDOI       #1{#1}\fi
\ifx \showISBNx    \undefined \def \showISBNx     #1{\unskip}     \fi
\ifx \showISBNxiii \undefined \def \showISBNxiii  #1{\unskip}     \fi
\ifx \showISSN     \undefined \def \showISSN      #1{\unskip}     \fi
\ifx \showLCCN     \undefined \def \showLCCN      #1{\unskip}     \fi
\ifx \shownote     \undefined \def \shownote      #1{#1}          \fi
\ifx \showarticletitle \undefined \def \showarticletitle #1{#1}   \fi
\ifx \showURL      \undefined \def \showURL       {\relax}        \fi
\providecommand\bibfield[2]{#2}
\providecommand\bibinfo[2]{#2}
\providecommand\natexlab[1]{#1}
\providecommand\showeprint[2][]{arXiv:#2}

\bibitem[Akintunde et~al\mbox{.}(2019)]%
        {akintunde2019verification}
\bibfield{author}{\bibinfo{person}{Michael~E Akintunde},
  \bibinfo{person}{Andreea Kevorchian}, \bibinfo{person}{Alessio Lomuscio},
  {and} \bibinfo{person}{Edoardo Pirovano}.} \bibinfo{year}{2019}\natexlab{}.
\newblock \showarticletitle{Verification of RNN-based neural agent-environment
  systems}. In \bibinfo{booktitle}{\emph{Proceedings of the AAAI Conference on
  Artificial Intelligence}}, Vol.~\bibinfo{volume}{33}.
  \bibinfo{pages}{6006--6013}.
\newblock
\urldef\tempurl%
\url{https://doi.org/10.1609/aaai.v33i01.33016006}
\showDOI{\tempurl}


\bibitem[Althoff(2015)]%
        {althoff2015introduction}
\bibfield{author}{\bibinfo{person}{Matthias Althoff}.}
  \bibinfo{year}{2015}\natexlab{}.
\newblock \showarticletitle{An introduction to CORA 2015}. In
  \bibinfo{booktitle}{\emph{Proc. of the workshop on applied verification for
  continuous and hybrid systems}}. \bibinfo{pages}{120--151}.
\newblock


\bibitem[Ardizzone et~al\mbox{.}(2018)]%
        {ardizzone2018analyzing}
\bibfield{author}{\bibinfo{person}{Lynton Ardizzone}, \bibinfo{person}{Jakob
  Kruse}, \bibinfo{person}{Carsten Rother}, {and} \bibinfo{person}{Ullrich
  K{\"o}the}.} \bibinfo{year}{2018}\natexlab{}.
\newblock \showarticletitle{Analyzing Inverse Problems with Invertible Neural
  Networks}. In \bibinfo{booktitle}{\emph{International Conference on Learning
  Representations}}.
\newblock


\bibitem[Behrmann et~al\mbox{.}(2019)]%
        {behrmann2019invertible}
\bibfield{author}{\bibinfo{person}{Jens Behrmann}, \bibinfo{person}{Will
  Grathwohl}, \bibinfo{person}{Ricky~TQ Chen}, \bibinfo{person}{David
  Duvenaud}, {and} \bibinfo{person}{J{\"o}rn-Henrik Jacobsen}.}
  \bibinfo{year}{2019}\natexlab{}.
\newblock \showarticletitle{Invertible residual networks}. In
  \bibinfo{booktitle}{\emph{International Conference on Machine Learning}}.
  PMLR, \bibinfo{pages}{573--582}.
\newblock


\bibitem[Chen et~al\mbox{.}(2018)]%
        {chen2018neural}
\bibfield{author}{\bibinfo{person}{Ricky~TQ Chen}, \bibinfo{person}{Yulia
  Rubanova}, \bibinfo{person}{Jesse Bettencourt}, {and}
  \bibinfo{person}{David~K Duvenaud}.} \bibinfo{year}{2018}\natexlab{}.
\newblock \showarticletitle{Neural ordinary differential equations}.
\newblock \bibinfo{journal}{\emph{Advances in neural information processing
  systems}}  \bibinfo{volume}{31} (\bibinfo{year}{2018}).
\newblock


\bibitem[Clevert et~al\mbox{.}(2015)]%
        {clevert2015fast}
\bibfield{author}{\bibinfo{person}{Djork-Arn{\'e} Clevert},
  \bibinfo{person}{Thomas Unterthiner}, {and} \bibinfo{person}{Sepp
  Hochreiter}.} \bibinfo{year}{2015}\natexlab{}.
\newblock \showarticletitle{Fast and accurate deep network learning by
  exponential linear units (elus)}.
\newblock \bibinfo{journal}{\emph{arXiv preprint arXiv:1511.07289}}
  (\bibinfo{year}{2015}).
\newblock


\bibitem[Cousot and Cousot(1977)]%
        {cousot1977abstract}
\bibfield{author}{\bibinfo{person}{Patrick Cousot} {and}
  \bibinfo{person}{Radhia Cousot}.} \bibinfo{year}{1977}\natexlab{}.
\newblock \showarticletitle{Abstract interpretation: a unified lattice model
  for static analysis of programs by construction or approximation of
  fixpoints}. In \bibinfo{booktitle}{\emph{Proceedings of the 4th ACM
  SIGACT-SIGPLAN symposium on Principles of programming languages}}.
  \bibinfo{pages}{238--252}.
\newblock
\urldef\tempurl%
\url{https://doi.org/10.1145/512950.512973}
\showDOI{\tempurl}


\bibitem[Dahnert et~al\mbox{.}(2021)]%
        {dahnert2021panoptic}
\bibfield{author}{\bibinfo{person}{Manuel Dahnert}, \bibinfo{person}{Ji Hou},
  \bibinfo{person}{Matthias Nie{\ss}ner}, {and} \bibinfo{person}{Angela Dai}.}
  \bibinfo{year}{2021}\natexlab{}.
\newblock \showarticletitle{Panoptic 3d scene reconstruction from a single rgb
  image}.
\newblock \bibinfo{journal}{\emph{Advances in Neural Information Processing
  Systems}}  \bibinfo{volume}{34} (\bibinfo{year}{2021}).
\newblock


\bibitem[Dupont et~al\mbox{.}(2019)]%
        {dupont2019augmented}
\bibfield{author}{\bibinfo{person}{Emilien Dupont}, \bibinfo{person}{Arnaud
  Doucet}, {and} \bibinfo{person}{Yee~Whye Teh}.}
  \bibinfo{year}{2019}\natexlab{}.
\newblock \showarticletitle{Augmented neural odes}.
\newblock \bibinfo{journal}{\emph{Advances in Neural Information Processing
  Systems}}  \bibinfo{volume}{32} (\bibinfo{year}{2019}).
\newblock


\bibitem[Dutta et~al\mbox{.}(2017)]%
        {dutta2017output}
\bibfield{author}{\bibinfo{person}{Souradeep Dutta}, \bibinfo{person}{Susmit
  Jha}, \bibinfo{person}{Sriram Sanakaranarayanan}, {and}
  \bibinfo{person}{Ashish Tiwari}.} \bibinfo{year}{2017}\natexlab{}.
\newblock \showarticletitle{Output range analysis for deep neural networks}.
\newblock \bibinfo{journal}{\emph{arXiv preprint arXiv:1709.09130}}
  (\bibinfo{year}{2017}).
\newblock


\bibitem[Ehlers(2017)]%
        {ehlers2017formal}
\bibfield{author}{\bibinfo{person}{Ruediger Ehlers}.}
  \bibinfo{year}{2017}\natexlab{}.
\newblock \showarticletitle{Formal verification of piece-wise linear
  feed-forward neural networks}. In \bibinfo{booktitle}{\emph{International
  Symposium on Automated Technology for Verification and Analysis}}. Springer,
  \bibinfo{pages}{269--286}.
\newblock
\urldef\tempurl%
\url{https://doi.org/10.1007/978-3-319-68167-2_19}
\showDOI{\tempurl}


\bibitem[Gehr et~al\mbox{.}(2018)]%
        {gehr2018ai2}
\bibfield{author}{\bibinfo{person}{Timon Gehr}, \bibinfo{person}{Matthew
  Mirman}, \bibinfo{person}{Dana Drachsler-Cohen}, \bibinfo{person}{Petar
  Tsankov}, \bibinfo{person}{Swarat Chaudhuri}, {and} \bibinfo{person}{Martin
  Vechev}.} \bibinfo{year}{2018}\natexlab{}.
\newblock \showarticletitle{Ai2: Safety and robustness certification of neural
  networks with abstract interpretation}. In \bibinfo{booktitle}{\emph{2018
  IEEE symposium on security and privacy (SP)}}. IEEE, \bibinfo{pages}{3--18}.
\newblock
\urldef\tempurl%
\url{https://doi.org/10.1109/SP.2018.00058}
\showDOI{\tempurl}


\bibitem[Ghorbani et~al\mbox{.}(2019)]%
        {ghorbani2019interpretation}
\bibfield{author}{\bibinfo{person}{Amirata Ghorbani}, \bibinfo{person}{Abubakar
  Abid}, {and} \bibinfo{person}{James Zou}.} \bibinfo{year}{2019}\natexlab{}.
\newblock \showarticletitle{Interpretation of neural networks is fragile}. In
  \bibinfo{booktitle}{\emph{Proceedings of the AAAI conference on artificial
  intelligence}}, Vol.~\bibinfo{volume}{33}. \bibinfo{pages}{3681--3688}.
\newblock
\urldef\tempurl%
\url{https://doi.org/10.1609/aaai.v33i01.33013681}
\showDOI{\tempurl}


\bibitem[Gomez et~al\mbox{.}(2017)]%
        {gomez2017reversible}
\bibfield{author}{\bibinfo{person}{Aidan~N Gomez}, \bibinfo{person}{Mengye
  Ren}, \bibinfo{person}{Raquel Urtasun}, {and} \bibinfo{person}{Roger~B
  Grosse}.} \bibinfo{year}{2017}\natexlab{}.
\newblock \showarticletitle{The reversible residual network: Backpropagation
  without storing activations}.
\newblock \bibinfo{journal}{\emph{Advances in neural information processing
  systems}}  \bibinfo{volume}{30} (\bibinfo{year}{2017}).
\newblock


\bibitem[Gowal et~al\mbox{.}(2018)]%
        {gowal2018effectiveness}
\bibfield{author}{\bibinfo{person}{Sven Gowal}, \bibinfo{person}{Krishnamurthy
  Dvijotham}, \bibinfo{person}{Robert Stanforth}, \bibinfo{person}{Rudy Bunel},
  \bibinfo{person}{Chongli Qin}, \bibinfo{person}{Jonathan Uesato},
  \bibinfo{person}{Relja Arandjelovic}, \bibinfo{person}{Timothy Mann}, {and}
  \bibinfo{person}{Pushmeet Kohli}.} \bibinfo{year}{2018}\natexlab{}.
\newblock \showarticletitle{On the effectiveness of interval bound propagation
  for training verifiably robust models}.
\newblock \bibinfo{journal}{\emph{arXiv preprint arXiv:1810.12715}}
  (\bibinfo{year}{2018}).
\newblock


\bibitem[Gruenbacher et~al\mbox{.}(2020)]%
        {gruenbacher2020lagrangian}
\bibfield{author}{\bibinfo{person}{Sophie Gruenbacher}, \bibinfo{person}{Jacek
  Cyranka}, \bibinfo{person}{Mathias Lechner}, \bibinfo{person}{Md~Ariful
  Islam}, \bibinfo{person}{Scott~A Smolka}, {and} \bibinfo{person}{Radu
  Grosu}.} \bibinfo{year}{2020}\natexlab{}.
\newblock \showarticletitle{Lagrangian reachtubes: The next generation}. In
  \bibinfo{booktitle}{\emph{2020 59th IEEE Conference on Decision and Control
  (CDC)}}. IEEE, \bibinfo{pages}{1556--1563}.
\newblock
\urldef\tempurl%
\url{https://doi.org/10.1109/CDC42340.2020.9304042}
\showDOI{\tempurl}


\bibitem[Gruenbacher et~al\mbox{.}(2021)]%
        {gruenbacher2021gotube}
\bibfield{author}{\bibinfo{person}{Sophie Gruenbacher},
  \bibinfo{person}{Mathias Lechner}, \bibinfo{person}{Ramin Hasani},
  \bibinfo{person}{Daniela Rus}, \bibinfo{person}{Thomas~A Henzinger},
  \bibinfo{person}{Scott Smolka}, {and} \bibinfo{person}{Radu Grosu}.}
  \bibinfo{year}{2021}\natexlab{}.
\newblock \showarticletitle{Gotube: Scalable stochastic verification of
  continuous-depth models}.
\newblock \bibinfo{journal}{\emph{arXiv preprint arXiv:2107.08467}}
  (\bibinfo{year}{2021}).
\newblock


\bibitem[Grunbacher et~al\mbox{.}(2021)]%
        {grunbacher2021verification}
\bibfield{author}{\bibinfo{person}{Sophie Grunbacher}, \bibinfo{person}{Ramin
  Hasani}, \bibinfo{person}{Mathias Lechner}, \bibinfo{person}{Jacek Cyranka},
  \bibinfo{person}{Scott~A Smolka}, {and} \bibinfo{person}{Radu Grosu}.}
  \bibinfo{year}{2021}\natexlab{}.
\newblock \showarticletitle{On the verification of neural odes with stochastic
  guarantees}. In \bibinfo{booktitle}{\emph{Proceedings of the AAAI Conference
  on Artificial Intelligence}}, Vol.~\bibinfo{volume}{35}.
  \bibinfo{pages}{11525--11535}.
\newblock
\urldef\tempurl%
\url{https://doi.org/10.1609/aaai.v35i13.17372}
\showDOI{\tempurl}


\bibitem[Hasani et~al\mbox{.}(2020)]%
        {hasani2020natural}
\bibfield{author}{\bibinfo{person}{Ramin Hasani}, \bibinfo{person}{Mathias
  Lechner}, \bibinfo{person}{Alexander Amini}, \bibinfo{person}{Daniela Rus},
  {and} \bibinfo{person}{Radu Grosu}.} \bibinfo{year}{2020}\natexlab{}.
\newblock \showarticletitle{A natural lottery ticket winner: Reinforcement
  learning with ordinary neural circuits}. In
  \bibinfo{booktitle}{\emph{International Conference on Machine Learning}}.
  PMLR, \bibinfo{pages}{4082--4093}.
\newblock


\bibitem[Huang et~al\mbox{.}(2019)]%
        {huang2019reachnn}
\bibfield{author}{\bibinfo{person}{Chao Huang}, \bibinfo{person}{Jiameng Fan},
  \bibinfo{person}{Wenchao Li}, \bibinfo{person}{Xin Chen}, {and}
  \bibinfo{person}{Qi Zhu}.} \bibinfo{year}{2019}\natexlab{}.
\newblock \showarticletitle{Reachnn: Reachability analysis of neural-network
  controlled systems}.
\newblock \bibinfo{journal}{\emph{ACM Transactions on Embedded Computing
  Systems (TECS)}} \bibinfo{volume}{18}, \bibinfo{number}{5s}
  (\bibinfo{year}{2019}), \bibinfo{pages}{1--22}.
\newblock
\urldef\tempurl%
\url{https://doi.org/10.1145/3358228}
\showDOI{\tempurl}


\bibitem[Huang et~al\mbox{.}(2017)]%
        {huang2017safety}
\bibfield{author}{\bibinfo{person}{Xiaowei Huang}, \bibinfo{person}{Marta
  Kwiatkowska}, \bibinfo{person}{Sen Wang}, {and} \bibinfo{person}{Min Wu}.}
  \bibinfo{year}{2017}\natexlab{}.
\newblock \showarticletitle{Safety verification of deep neural networks}. In
  \bibinfo{booktitle}{\emph{International conference on computer aided
  verification}}. Springer, \bibinfo{pages}{3--29}.
\newblock
\urldef\tempurl%
\url{https://doi.org/10.1007/978-3-319-63387-9_1}
\showDOI{\tempurl}


\bibitem[Ivanov et~al\mbox{.}(2021)]%
        {ivanov2021verisig}
\bibfield{author}{\bibinfo{person}{Radoslav Ivanov}, \bibinfo{person}{Taylor
  Carpenter}, \bibinfo{person}{James Weimer}, \bibinfo{person}{Rajeev Alur},
  \bibinfo{person}{George Pappas}, {and} \bibinfo{person}{Insup Lee}.}
  \bibinfo{year}{2021}\natexlab{}.
\newblock \showarticletitle{Verisig 2.0: Verification of neural network
  controllers using taylor model preconditioning}. In
  \bibinfo{booktitle}{\emph{International Conference on Computer Aided
  Verification}}. Springer, \bibinfo{pages}{249--262}.
\newblock


\bibitem[Ivanov et~al\mbox{.}(2020)]%
        {ivanov2020verifying}
\bibfield{author}{\bibinfo{person}{Radoslav Ivanov}, \bibinfo{person}{Taylor~J
  Carpenter}, \bibinfo{person}{James Weimer}, \bibinfo{person}{Rajeev Alur},
  \bibinfo{person}{George~J Pappas}, {and} \bibinfo{person}{Insup Lee}.}
  \bibinfo{year}{2020}\natexlab{}.
\newblock \showarticletitle{Verifying the safety of autonomous systems with
  neural network controllers}.
\newblock \bibinfo{journal}{\emph{ACM Transactions on Embedded Computing
  Systems (TECS)}} \bibinfo{volume}{20}, \bibinfo{number}{1}
  (\bibinfo{year}{2020}), \bibinfo{pages}{1--26}.
\newblock
\urldef\tempurl%
\url{https://doi.org/10.1145/3419742}
\showDOI{\tempurl}


\bibitem[Jacobsen et~al\mbox{.}(2018)]%
        {jacobsen2018revnet}
\bibfield{author}{\bibinfo{person}{J{\"o}rn-Henrik Jacobsen},
  \bibinfo{person}{Arnold Smeulders}, {and} \bibinfo{person}{Edouard Oyallon}.}
  \bibinfo{year}{2018}\natexlab{}.
\newblock \showarticletitle{i-revnet: Deep invertible networks}.
\newblock \bibinfo{journal}{\emph{arXiv preprint arXiv:1802.07088}}
  (\bibinfo{year}{2018}).
\newblock


\bibitem[Joshi(1983)]%
        {joshi1983introduction}
\bibfield{author}{\bibinfo{person}{Kapil~D Joshi}.}
  \bibinfo{year}{1983}\natexlab{}.
\newblock \bibinfo{booktitle}{\emph{Introduction to general topology}}.
\newblock \bibinfo{publisher}{New Age International}.
\newblock


\bibitem[Karch et~al\mbox{.}(2021)]%
        {karch2021grounding}
\bibfield{author}{\bibinfo{person}{Tristan Karch}, \bibinfo{person}{Laetitia
  Teodorescu}, \bibinfo{person}{Katja Hofmann}, \bibinfo{person}{Cl{\'e}ment
  Moulin-Frier}, {and} \bibinfo{person}{Pierre-Yves Oudeyer}.}
  \bibinfo{year}{2021}\natexlab{}.
\newblock \showarticletitle{Grounding Spatio-Temporal Language with
  Transformers}.
\newblock \bibinfo{journal}{\emph{arXiv preprint arXiv:2106.08858}}
  (\bibinfo{year}{2021}).
\newblock


\bibitem[Katz et~al\mbox{.}(2017)]%
        {katz2017reluplex}
\bibfield{author}{\bibinfo{person}{Guy Katz}, \bibinfo{person}{Clark Barrett},
  \bibinfo{person}{David~L Dill}, \bibinfo{person}{Kyle Julian}, {and}
  \bibinfo{person}{Mykel~J Kochenderfer}.} \bibinfo{year}{2017}\natexlab{}.
\newblock \showarticletitle{Reluplex: An efficient SMT solver for verifying
  deep neural networks}. In \bibinfo{booktitle}{\emph{International conference
  on computer aided verification}}. Springer, \bibinfo{pages}{97--117}.
\newblock
\urldef\tempurl%
\url{https://doi.org/10.1007/978-3-319-63387-9_5}
\showDOI{\tempurl}


\bibitem[Kochdumper et~al\mbox{.}(2022)]%
        {kochdumper2022open}
\bibfield{author}{\bibinfo{person}{Niklas Kochdumper},
  \bibinfo{person}{Christian Schilling}, \bibinfo{person}{Matthias Althoff},
  {and} \bibinfo{person}{Stanley Bak}.} \bibinfo{year}{2022}\natexlab{}.
\newblock \showarticletitle{Open-and closed-loop neural network verification
  using polynomial zonotopes}.
\newblock \bibinfo{journal}{\emph{arXiv preprint arXiv:2207.02715}}
  (\bibinfo{year}{2022}).
\newblock


\bibitem[Krantz and Parks(2002)]%
        {krantz2002implicit}
\bibfield{author}{\bibinfo{person}{Steven~George Krantz} {and}
  \bibinfo{person}{Harold~R Parks}.} \bibinfo{year}{2002}\natexlab{}.
\newblock \bibinfo{booktitle}{\emph{The implicit function theorem: history,
  theory, and applications}}.
\newblock \bibinfo{publisher}{Springer Science \& Business Media}.
\newblock


\bibitem[Lechner et~al\mbox{.}(2020)]%
        {lechner2020neural}
\bibfield{author}{\bibinfo{person}{Mathias Lechner}, \bibinfo{person}{Ramin
  Hasani}, \bibinfo{person}{Alexander Amini}, \bibinfo{person}{Thomas~A
  Henzinger}, \bibinfo{person}{Daniela Rus}, {and} \bibinfo{person}{Radu
  Grosu}.} \bibinfo{year}{2020}\natexlab{}.
\newblock \showarticletitle{Neural circuit policies enabling auditable
  autonomy}.
\newblock \bibinfo{journal}{\emph{Nature Machine Intelligence}}
  \bibinfo{volume}{2}, \bibinfo{number}{10} (\bibinfo{year}{2020}),
  \bibinfo{pages}{642--652}.
\newblock
\urldef\tempurl%
\url{https://doi.org/10.1038/s42256-020-00237-3}
\showDOI{\tempurl}


\bibitem[Liu et~al\mbox{.}(2021)]%
        {liu2021algorithms}
\bibfield{author}{\bibinfo{person}{Changliu Liu}, \bibinfo{person}{Tomer
  Arnon}, \bibinfo{person}{Christopher Lazarus}, \bibinfo{person}{Christopher
  Strong}, \bibinfo{person}{Clark Barrett}, \bibinfo{person}{Mykel~J
  Kochenderfer}, {et~al\mbox{.}}} \bibinfo{year}{2021}\natexlab{}.
\newblock \showarticletitle{Algorithms for verifying deep neural networks}.
\newblock \bibinfo{journal}{\emph{Foundations and Trends{\textregistered} in
  Optimization}} \bibinfo{volume}{4}, \bibinfo{number}{3-4}
  (\bibinfo{year}{2021}), \bibinfo{pages}{244--404}.
\newblock
\urldef\tempurl%
\url{https://doi.org/10.1561/2400000035}
\showDOI{\tempurl}


\bibitem[Liu et~al\mbox{.}(2020)]%
        {liuww2020article}
\bibfield{author}{\bibinfo{person}{Wanwei Liu}, \bibinfo{person}{Fu Song},
  \bibinfo{person}{Tanghaoran Zhang}, {and} \bibinfo{person}{Ji Wang}.}
  \bibinfo{year}{2020}\natexlab{}.
\newblock \showarticletitle{Verifying ReLU Neural Networks from a Model
  Checking Perspective}.
\newblock \bibinfo{journal}{\emph{Journal of Computer Science and Technology}}
  \bibinfo{volume}{35} (\bibinfo{date}{11} \bibinfo{year}{2020}),
  \bibinfo{pages}{1365--1381}.
\newblock
\urldef\tempurl%
\url{https://doi.org/10.1007/s11390-020-0546-7}
\showDOI{\tempurl}


\bibitem[Lomuscio and Maganti(2017)]%
        {lomuscio2017approach}
\bibfield{author}{\bibinfo{person}{Alessio Lomuscio} {and}
  \bibinfo{person}{Lalit Maganti}.} \bibinfo{year}{2017}\natexlab{}.
\newblock \showarticletitle{An approach to reachability analysis for
  feed-forward relu neural networks}.
\newblock \bibinfo{journal}{\emph{arXiv preprint arXiv:1706.07351}}
  (\bibinfo{year}{2017}).
\newblock


\bibitem[Lopez et~al\mbox{.}(2022)]%
        {lopez2022reachability}
\bibfield{author}{\bibinfo{person}{Diego~Manzanas Lopez},
  \bibinfo{person}{Patrick Musau}, \bibinfo{person}{Nathaniel Hamilton}, {and}
  \bibinfo{person}{Taylor~T Johnson}.} \bibinfo{year}{2022}\natexlab{}.
\newblock \showarticletitle{Reachability Analysis of a General Class of Neural
  Ordinary Differential Equations}.
\newblock \bibinfo{journal}{\emph{arXiv preprint arXiv:2207.06531}}
  (\bibinfo{year}{2022}).
\newblock


\bibitem[Maas et~al\mbox{.}(2013)]%
        {maas2013rectifier}
\bibfield{author}{\bibinfo{person}{Andrew~L Maas}, \bibinfo{person}{Awni~Y
  Hannun}, \bibinfo{person}{Andrew~Y Ng}, {et~al\mbox{.}}}
  \bibinfo{year}{2013}\natexlab{}.
\newblock \showarticletitle{Rectifier nonlinearities improve neural network
  acoustic models}. In \bibinfo{booktitle}{\emph{Proc. icml}},
  Vol.~\bibinfo{volume}{30}. Citeseer, \bibinfo{pages}{3}.
\newblock


\bibitem[Manzanas~Lopez et~al\mbox{.}(2022)]%
        {manzanas2022reachability}
\bibfield{author}{\bibinfo{person}{Diego Manzanas~Lopez},
  \bibinfo{person}{Patrick Musau}, \bibinfo{person}{Nathaniel Hamilton}, {and}
  \bibinfo{person}{Taylor~T Johnson}.} \bibinfo{year}{2022}\natexlab{}.
\newblock \showarticletitle{Reachability Analysis of a General Class of Neural
  Ordinary Differential Equations}.
\newblock \bibinfo{journal}{\emph{arXiv e-prints}} (\bibinfo{year}{2022}),
  \bibinfo{pages}{arXiv--2207}.
\newblock


\bibitem[Massey(2019)]%
        {massey2019basic}
\bibfield{author}{\bibinfo{person}{William~S Massey}.}
  \bibinfo{year}{2019}\natexlab{}.
\newblock \bibinfo{booktitle}{\emph{A basic course in algebraic topology}}.
  Vol.~\bibinfo{volume}{127}.
\newblock \bibinfo{publisher}{Springer}.
\newblock


\bibitem[Mendelson(1990)]%
        {mendelson1990introduction}
\bibfield{author}{\bibinfo{person}{Bert Mendelson}.}
  \bibinfo{year}{1990}\natexlab{}.
\newblock \bibinfo{booktitle}{\emph{Introduction to topology}}.
\newblock \bibinfo{publisher}{Courier Corporation}.
\newblock


\bibitem[Naitzat et~al\mbox{.}(2020)]%
        {naitzat2020topology}
\bibfield{author}{\bibinfo{person}{Gregory Naitzat}, \bibinfo{person}{Andrey
  Zhitnikov}, {and} \bibinfo{person}{Lek-Heng Lim}.}
  \bibinfo{year}{2020}\natexlab{}.
\newblock \showarticletitle{Topology of Deep Neural Networks}.
\newblock \bibinfo{journal}{\emph{J. Mach. Learn. Res.}} \bibinfo{volume}{21},
  \bibinfo{number}{184} (\bibinfo{year}{2020}), \bibinfo{pages}{1--40}.
\newblock


\bibitem[Pulina and Tacchella(2010)]%
        {pulina2010abstraction}
\bibfield{author}{\bibinfo{person}{Luca Pulina} {and} \bibinfo{person}{Armando
  Tacchella}.} \bibinfo{year}{2010}\natexlab{}.
\newblock \showarticletitle{An abstraction-refinement approach to verification
  of artificial neural networks}. In \bibinfo{booktitle}{\emph{International
  Conference on Computer Aided Verification}}. Springer,
  \bibinfo{pages}{243--257}.
\newblock
\urldef\tempurl%
\url{https://doi.org/10.1007/978-3-642-14295-6_24}
\showDOI{\tempurl}


\bibitem[Rudin et~al\mbox{.}(1976)]%
        {rudin1976principles}
\bibfield{author}{\bibinfo{person}{Walter Rudin} {et~al\mbox{.}}}
  \bibinfo{year}{1976}\natexlab{}.
\newblock \bibinfo{booktitle}{\emph{Principles of mathematical analysis}}.
  Vol.~\bibinfo{volume}{3}.
\newblock \bibinfo{publisher}{McGraw-hill New York}.
\newblock


\bibitem[Singh et~al\mbox{.}(2018)]%
        {singh2018fast}
\bibfield{author}{\bibinfo{person}{Gagandeep Singh}, \bibinfo{person}{Timon
  Gehr}, \bibinfo{person}{Matthew Mirman}, \bibinfo{person}{Markus
  P{\"u}schel}, {and} \bibinfo{person}{Martin Vechev}.}
  \bibinfo{year}{2018}\natexlab{}.
\newblock \showarticletitle{Fast and effective robustness certification}.
\newblock \bibinfo{journal}{\emph{Advances in neural information processing
  systems}}  \bibinfo{volume}{31} (\bibinfo{year}{2018}).
\newblock


\bibitem[Singh et~al\mbox{.}(2019)]%
        {singh2019abstract}
\bibfield{author}{\bibinfo{person}{Gagandeep Singh}, \bibinfo{person}{Timon
  Gehr}, \bibinfo{person}{Markus P{\"u}schel}, {and} \bibinfo{person}{Martin
  Vechev}.} \bibinfo{year}{2019}\natexlab{}.
\newblock \showarticletitle{An abstract domain for certifying neural networks}.
\newblock \bibinfo{journal}{\emph{Proceedings of the ACM on Programming
  Languages}} \bibinfo{volume}{3}, \bibinfo{number}{POPL}
  (\bibinfo{year}{2019}), \bibinfo{pages}{1--30}.
\newblock
\urldef\tempurl%
\url{https://doi.org/10.1145/3290354}
\showDOI{\tempurl}


\bibitem[Stein and Shakarchi(2011)]%
        {stein2011princeton}
\bibfield{author}{\bibinfo{person}{Elias~M Stein} {and} \bibinfo{person}{Rami
  Shakarchi}.} \bibinfo{year}{2011}\natexlab{}.
\newblock \bibinfo{booktitle}{\emph{Princeton lectures in analysis. 4.
  Functional analysis: an introduction to further topics in analysis}}.
\newblock \bibinfo{publisher}{Princeton University Press}.
\newblock


\bibitem[Tian et~al\mbox{.}(2021)]%
        {tian2021image}
\bibfield{author}{\bibinfo{person}{Yu Tian}, \bibinfo{person}{Wenjing Yang},
  {and} \bibinfo{person}{Ji Wang}.} \bibinfo{year}{2021}\natexlab{}.
\newblock \showarticletitle{Image fusion using a multi-level image
  decomposition and fusion method}.
\newblock \bibinfo{journal}{\emph{Applied Optics}} \bibinfo{volume}{60},
  \bibinfo{number}{24} (\bibinfo{year}{2021}), \bibinfo{pages}{7466--7479}.
\newblock
\urldef\tempurl%
\url{https://doi.org/10.1364/ao.432397}
\showDOI{\tempurl}


\bibitem[Tran et~al\mbox{.}(2019a)]%
        {tran2019star}
\bibfield{author}{\bibinfo{person}{Hoang-Dung Tran}, \bibinfo{person}{Diago
  Manzanas~Lopez}, \bibinfo{person}{Patrick Musau}, \bibinfo{person}{Xiaodong
  Yang}, \bibinfo{person}{Luan~Viet Nguyen}, \bibinfo{person}{Weiming Xiang},
  {and} \bibinfo{person}{Taylor~T Johnson}.} \bibinfo{year}{2019}\natexlab{a}.
\newblock \showarticletitle{Star-based reachability analysis of deep neural
  networks}. In \bibinfo{booktitle}{\emph{International symposium on formal
  methods}}. Springer, \bibinfo{pages}{670--686}.
\newblock
\urldef\tempurl%
\url{https://doi.org/10.1007/978-3-030-30942-8_39}
\showDOI{\tempurl}


\bibitem[Tran et~al\mbox{.}(2019b)]%
        {tran2019parallelizable}
\bibfield{author}{\bibinfo{person}{Hoang-Dung Tran}, \bibinfo{person}{Patrick
  Musau}, \bibinfo{person}{Diego~Manzanas Lopez}, \bibinfo{person}{Xiaodong
  Yang}, \bibinfo{person}{Luan~Viet Nguyen}, \bibinfo{person}{Weiming Xiang},
  {and} \bibinfo{person}{Taylor~T Johnson}.} \bibinfo{year}{2019}\natexlab{b}.
\newblock \showarticletitle{Parallelizable reachability analysis algorithms for
  feed-forward neural networks}. In \bibinfo{booktitle}{\emph{2019 IEEE/ACM 7th
  International Conference on Formal Methods in Software Engineering
  (FormaliSE)}}. IEEE, \bibinfo{pages}{51--60}.
\newblock
\urldef\tempurl%
\url{https://doi.org/10.1109/FormaliSE.2019.00012}
\showDOI{\tempurl}


\bibitem[Wang et~al\mbox{.}(2018)]%
        {wang2018efficient}
\bibfield{author}{\bibinfo{person}{Shiqi Wang}, \bibinfo{person}{Kexin Pei},
  \bibinfo{person}{Justin Whitehouse}, \bibinfo{person}{Junfeng Yang}, {and}
  \bibinfo{person}{Suman Jana}.} \bibinfo{year}{2018}\natexlab{}.
\newblock \showarticletitle{Efficient formal safety analysis of neural
  networks}.
\newblock \bibinfo{journal}{\emph{Advances in Neural Information Processing
  Systems}}  \bibinfo{volume}{31} (\bibinfo{year}{2018}).
\newblock


\bibitem[Xiang and Johnson(2018)]%
        {xiang2018reachability}
\bibfield{author}{\bibinfo{person}{Weiming Xiang} {and}
  \bibinfo{person}{Taylor~T Johnson}.} \bibinfo{year}{2018}\natexlab{}.
\newblock \showarticletitle{Reachability analysis and safety verification for
  neural network control systems}.
\newblock \bibinfo{journal}{\emph{arXiv preprint arXiv:1805.09944}}
  (\bibinfo{year}{2018}).
\newblock


\bibitem[Xiang et~al\mbox{.}(2017)]%
        {xiang2017reachable}
\bibfield{author}{\bibinfo{person}{Weiming Xiang}, \bibinfo{person}{Hoang-Dung
  Tran}, {and} \bibinfo{person}{Taylor~T Johnson}.}
  \bibinfo{year}{2017}\natexlab{}.
\newblock \showarticletitle{Reachable set computation and safety verification
  for neural networks with relu activations}.
\newblock \bibinfo{journal}{\emph{arXiv preprint arXiv:1712.08163}}
  (\bibinfo{year}{2017}).
\newblock


\bibitem[Xiang et~al\mbox{.}(2018)]%
        {xiang2018output}
\bibfield{author}{\bibinfo{person}{Weiming Xiang}, \bibinfo{person}{Hoang-Dung
  Tran}, {and} \bibinfo{person}{Taylor~T Johnson}.}
  \bibinfo{year}{2018}\natexlab{}.
\newblock \showarticletitle{Output reachable set estimation and verification
  for multilayer neural networks}.
\newblock \bibinfo{journal}{\emph{IEEE transactions on neural networks and
  learning systems}} \bibinfo{volume}{29}, \bibinfo{number}{11}
  (\bibinfo{year}{2018}), \bibinfo{pages}{5777--5783}.
\newblock
\urldef\tempurl%
\url{https://doi.org/10.1109/tnnls.2018.2808470}
\showDOI{\tempurl}


\bibitem[Xue et~al\mbox{.}(2016a)]%
        {xue2016reach}
\bibfield{author}{\bibinfo{person}{Bai Xue}, \bibinfo{person}{Arvind Easwaran},
  \bibinfo{person}{Nam-Joon Cho}, {and} \bibinfo{person}{Martin Fr{\"a}nzle}.}
  \bibinfo{year}{2016}\natexlab{a}.
\newblock \showarticletitle{Reach-avoid verification for nonlinear systems
  based on boundary analysis}.
\newblock \bibinfo{journal}{\emph{IEEE Trans. Automat. Control}}
  \bibinfo{volume}{62}, \bibinfo{number}{7} (\bibinfo{year}{2016}),
  \bibinfo{pages}{3518--3523}.
\newblock
\urldef\tempurl%
\url{https://doi.org/10.1109/TAC.2016.2615599}
\showDOI{\tempurl}


\bibitem[Xue et~al\mbox{.}(2016b)]%
        {xue2016under}
\bibfield{author}{\bibinfo{person}{Bai Xue}, \bibinfo{person}{Zhikun She},
  {and} \bibinfo{person}{Arvind Easwaran}.} \bibinfo{year}{2016}\natexlab{b}.
\newblock \showarticletitle{Under-approximating backward reachable sets by
  polytopes}. In \bibinfo{booktitle}{\emph{International Conference on Computer
  Aided Verification}}. Springer, \bibinfo{pages}{457--476}.
\newblock
\urldef\tempurl%
\url{https://doi.org/10.1007/978-3-319-41528-4_25}
\showDOI{\tempurl}


\bibitem[Xue et~al\mbox{.}(2020)]%
        {xue2020over}
\bibfield{author}{\bibinfo{person}{Bai Xue}, \bibinfo{person}{Qiuye Wang},
  \bibinfo{person}{Shenghua Feng}, {and} \bibinfo{person}{Naijun Zhan}.}
  \bibinfo{year}{2020}\natexlab{}.
\newblock \showarticletitle{Over-and underapproximating reach sets for
  perturbed delay differential equations}.
\newblock \bibinfo{journal}{\emph{IEEE Trans. Automat. Control}}
  \bibinfo{volume}{66}, \bibinfo{number}{1} (\bibinfo{year}{2020}),
  \bibinfo{pages}{283--290}.
\newblock
\urldef\tempurl%
\url{https://doi.org/10.1109/TAC.2020.2977993}
\showDOI{\tempurl}


\bibitem[Yang et~al\mbox{.}(2021)]%
        {yang2021improving}
\bibfield{author}{\bibinfo{person}{Pengfei Yang}, \bibinfo{person}{Renjue Li},
  \bibinfo{person}{Jianlin Li}, \bibinfo{person}{Cheng-Chao Huang},
  \bibinfo{person}{Jingyi Wang}, \bibinfo{person}{Jun Sun},
  \bibinfo{person}{Bai Xue}, {and} \bibinfo{person}{Lijun Zhang}.}
  \bibinfo{year}{2021}\natexlab{}.
\newblock \showarticletitle{Improving neural network verification through
  spurious region guided refinement}. In
  \bibinfo{booktitle}{\emph{International Conference on Tools and Algorithms
  for the Construction and Analysis of Systems}}. Springer,
  \bibinfo{pages}{389--408}.
\newblock
\urldef\tempurl%
\url{https://doi.org/10.26226/morressier.604907f41a80aac83ca25cfb}
\showDOI{\tempurl}


\bibitem[Yuan et~al\mbox{.}(2021)]%
        {yuan2021bartscore}
\bibfield{author}{\bibinfo{person}{Weizhe Yuan}, \bibinfo{person}{Graham
  Neubig}, {and} \bibinfo{person}{Pengfei Liu}.}
  \bibinfo{year}{2021}\natexlab{}.
\newblock \showarticletitle{BARTScore: Evaluating Generated Text as Text
  Generation}.
\newblock \bibinfo{journal}{\emph{arXiv preprint arXiv:2106.11520}}
  (\bibinfo{year}{2021}).
\newblock


\end{thebibliography}

\end{document}